\documentclass{article}

\usepackage{arxiv}

\usepackage[utf8]{inputenc} 
\usepackage[T1]{fontenc}    
\usepackage{hyperref}       
\usepackage{url}            
\usepackage{booktabs}       
\usepackage{amsfonts}       
\usepackage{nicefrac}       
\usepackage{microtype}      
\usepackage{lipsum}
\usepackage{graphicx}
\usepackage{natbib}
\usepackage{amsmath}
\graphicspath{ {./images/} }

\usepackage{listings}
\usepackage{amsmath}
\usepackage{amssymb}
\usepackage{physics}
\usepackage{xcolor}
\usepackage{geometry}
\usepackage{hyperref}
\usepackage{algorithm}
\usepackage{algpseudocode}
\usepackage{booktabs}
\usepackage{graphicx}
\usepackage{array}
\usepackage{multirow}
\usepackage{amsthm}

\usepackage{listings}
\usepackage{amsmath}
\usepackage{amssymb}
\usepackage{physics}
\usepackage{xcolor}
\usepackage{geometry}
\usepackage{hyperref}
\usepackage{algorithm}
\usepackage{algpseudocode}
\usepackage{booktabs}
\usepackage{graphicx}
\usepackage{array}
\usepackage{multirow}
\usepackage{colortbl}
\usepackage{tabularx}
\usepackage{float}
\usepackage{placeins}
\usepackage{caption}
\usepackage{tikz}
\usepackage{mathtools}
\usetikzlibrary{arrows.meta, positioning, calc, decorations.pathreplacing, backgrounds, fit, shapes.geometric}

\definecolor{figTeal}{HTML}{F3F1EE}
\definecolor{figTealBorder}{HTML}{8C857B}
\definecolor{figBlue}{HTML}{ECEAE6}
\definecolor{figBlueBorder}{HTML}{6B6560}
\definecolor{figSlate}{HTML}{3E3A36}
\definecolor{figBar}{HTML}{D5CFC8}
\definecolor{figBarBorder}{HTML}{9B9488}
\definecolor{figAccent}{HTML}{5C5650}
\definecolor{figDivider}{HTML}{B5AFA8}
\definecolor{figState}{HTML}{F7F5F2}
\definecolor{figStateBorder}{HTML}{A09890}
\definecolor{figRowAlt}{HTML}{F5F3F0}

\usepackage{amsthm}
\newtheorem{theorem}{Theorem}[section]
\newtheorem{lemma}[theorem]{Lemma}
\newtheorem{proposition}[theorem]{Proposition}
\newtheorem{definition}[theorem]{Definition}

\theoremstyle{remark}
\newtheorem{remark}[theorem]{Remark}

\title{Deep Sequence Modeling with Quantum Dynamics:\\
Language as a Wave Function}
\author{
  Ahmed Nebli \\
  \texttt{ahmed.nebli@cai-technology.ai} \\
  cAI Technology GmbH
  \and
  \textbf{Hadi Saadatdoorabi} \\
  \texttt{hadi.saadatdoorabi@cai-technology.ai} \\
  cAI Technology GmbH
    \and
  \textbf{Kevin Yam} \thanks{Corresponding author}  \\
  \texttt{kevin.yam@coeo-group.ai} \\
  cAI Technology GmbH 
}\date{}

\begin{document}
\maketitle

\begin{abstract}
We introduce a sequence modeling framework in which the latent state is a
complex-valued wave function evolving on a finite-dimensional Hilbert space
under a learned, time-dependent Hamiltonian.  Unlike standard recurrent
architectures that rely on gating mechanisms to suppress competing
hypotheses, our framework utilizes quantum interference: the Hamiltonian
steers the phases of complex amplitudes so that conflicting interpretations
cancel while compatible ones reinforce.  The dynamics are strictly unitary,
ensuring that the state norm is preserved exactly at every time step via a
Cayley (Crank--Nicolson) discretization.  Token probabilities are extracted
using the Born rule, a quadratic measurement operator that couples
magnitudes and relative phases.  Our primary theoretical contribution is a
separation theorem characterizing the representational advantage of this
readout: we define a family of disambiguation tasks that a complex unitary
model of dimension $N$ solves exactly, but which requires a state dimension
of $\Omega(N^2)$ for any real-valued orthogonal model equipped with a
standard affine-softmax readout.  This quadratic gap arises because the
Born rule implicitly lifts the $N$-dimensional state into the space of
rank-one Hermitian matrices, accessing pairwise phase correlations that are
inaccessible to linear projections.  Finally, we derive a continuity
equation for the latent probability mass, yielding conserved pairwise
currents that serve as a built-in diagnostic for tracing information flow
between dimensions.
\end{abstract}

\newpage

\section{Introduction}
\label{sec:introduction}

Sequence modeling, the task of predicting the next element given a
preceding context, underlies modern language models and drives applications
from text generation to code synthesis.  The dominant architectures for
this task, including Transformers~\cite{vaswani2017attention}, recurrent
neural networks~\cite{hochreiter1997long}, and state-space
models~\cite{gu2023mamba}, differ substantially in their computational
mechanisms but share a fundamental representational choice: the latent
state at each time step is a vector of real numbers.  In Transformers,
context is stored in a cache of real-valued key-value pairs; in recurrent
networks, it is a hidden vector updated multiplicatively; in state-space
models, it is a vector governed by a discretized linear dynamical system.
This paper examines the algebraic implications of this choice.  In a real
vector space, the superposition of two vectors is strictly additive.  While
real-valued neural networks can suppress incorrect hypotheses using learned
nonlinear gating mechanisms (such as the sigmoid gates in LSTMs or the
projection layers in Transformers), they lack the intrinsic geometric
property of phase.  In a complex vector space, superposition allows for
interference: depending on the relative phase, adding two amplitudes can
result in constructive reinforcement or destructive cancellation.

Complex-valued representations provide a distinct mechanism for hypothesis
interaction.  A complex number carries both a magnitude and a phase.  This
structure allows competing interpretations of an ambiguous context to
interact algebraically rather than requiring a dedicated gating module to
suppress one against the other.  Consider the prefix ``The bank was
\ldots'' where a model must maintain latent probability mass for both the
`financial institution' and `river edge' interpretations.  If the
subsequent token is ``steep,'' it acts as a filter that should reinforce
the river interpretation and suppress the financial one.  In the framework
proposed here, this suppression arises from interference: the evolution of
the state rotates the phases of the latent components such that the
amplitudes associated with the financial interpretation interfere
destructively with the new input, while the river interpretation interferes
constructively.  This architectural choice is inspired by, though distinct
from, the field of quantum cognition.  Busemeyer and
Bruza~\cite{busemeyer2012quantum} and Pothos and
Busemeyer~\cite{pothos2013quantum} have documented that human probability
judgments often violate classical probability axioms in ways formally
described by quantum probability theory.  While our goal is to model the
statistical distribution of text rather than human cognitive processes,
these findings suggest that the mathematical structure of complex Hilbert
spaces may offer a parsimonious inductive bias for sequence disambiguation.

There is also growing interest in formulating neural network dynamics in
continuous time.  Chen et al.~\cite{chen2018neural} formalized residual
networks as discretizations of ordinary differential equations (Neural
ODEs).  Greydanus et al.~\cite{greydanus2019hamiltonian} introduced
Hamiltonian Neural Networks to enforce conservation laws in physical
simulations. \cite{haber2017stable} demonstrated that
constraining dynamics to be norm-preserving (orthogonal or unitary)
provides strong stability guarantees, mitigating the vanishing and
exploding gradient problems.  Modern state-space
models~\cite{gu2023mamba,gu2021efficiently} leverage continuous-time linear
systems for efficient sequence processing.  However, existing approaches
typically operate in real-valued spaces or rely on unconstrained dynamics
that may lack unitary stability.  None combine complex-valued states with
structured Hamiltonian evolution and a quadratic, measurement-based output
rule.

This paper introduces a framework integrating these concepts.  The latent
state is a unit-norm vector in a finite-dimensional complex Hilbert space.
Its evolution follows the time-dependent Schr\"{o}dinger equation
$i\frac{d}{dt}\ket{\psi(t)} = H(t)\ket{\psi(t)}$, where $H(t)$ is a
Hermitian operator.  The Hermiticity of $H(t)$ guarantees that the
time-evolution operator is strictly unitary, ensuring the state vector
rotates on the unit sphere without changing its norm.  We decompose
$H(t) = H_0 + H_{\mathrm{int}}(t)$, separating the dynamics into a
learnable diagonal term $H_0$ that sets baseline oscillation frequencies,
and an input-dependent interaction Hamiltonian $H_{\mathrm{int}}(t)$
generated by a neural network.  This decomposition allows us to move into
the interaction picture, a coordinate change that factors out the free
oscillations, leaving the numerical integrator to resolve only the
input-driven dynamics.

To implement this system, we discretize the evolution using the Cayley
transform, which corresponds to the Crank--Nicolson (implicit midpoint)
scheme of numerical analysis~\cite{hairer2006geometric,crank1947practical}.
Unlike standard explicit integrators (e.g., Runge-Kutta), which destroy the
unitary property and introduce norm drift, the Cayley transform produces an
exactly unitary update for any step size.  This ensures the model preserves
the state norm regardless of sequence length, although we note that
discretization error in the \emph{phase} trajectory (integration accuracy)
remains bounded by the step size.  Token prediction is performed via the
Born rule: the probability of token $k$ is the squared magnitude of the
inner product between the state and a learned measurement vector.  This
introduces a quadratic nonlinearity at the readout layer, differing
fundamentally from the linear-projection-plus-softmax used in standard
models.  As we show, this quadratic structure allows the model to access
pairwise phase correlations that are inaccessible to a linear readout.

The central theoretical contribution of this paper is a separation theorem
characterizing the representational capacity of this architecture.  We
construct a family of disambiguation tasks in which the correct output
depends on the phase relationship between two context tokens.  We prove
that a complex-valued unitary model of dimension $N$ can represent these
tasks exactly, whereas any real-valued model restricted to orthogonal
dynamics and an affine-softmax readout requires a state dimension of
$\Omega(N^2)$.  The gap arises because the Born rule implicitly lifts the
state into the space of Hermitian matrices, accessing $O(N^2)$ degrees of
freedom (including phase cross-terms) from an $N$-dimensional complex
vector.  Real-valued models with linear readouts must explicitly encode
these pairwise interactions in their state dimensions.
The lower bound (Theorem~\ref{thm:separation}) is proved conditionally,
assuming the target log-probability matrix has full row rank (a
genericity-type condition on the task parameters, discussed in
Section~\ref{sec:separation}), and is established for the state-independent
model class defined in Section~\ref{sec:separation}; the extension to the
full state-dependent architecture of Section~\ref{sec:model} is left for
future work.
Finally, we derive a continuity equation for the probability mass in the
latent dimensions.  Because the dynamics are unitary, the change in
occupation probability of any dimension is exactly accounted for by
antisymmetric probability currents flowing between dimensions.  We propose
these currents as a built-in diagnostic tool for tracing the internal
redistribution of semantic content.

\begin{table}[h!]
\centering
\renewcommand{\arraystretch}{1.15}
\scriptsize
\begin{tabularx}{\textwidth}{@{}lXl@{}}
\toprule
\textbf{Quantum Formalism} & \textbf{Architectural Role} & \textbf{ML Interpretation} \\
\midrule
Hilbert space $\mathbb{C}^N$ & Complex latent space with magnitude and phase per coordinate & Hidden state space \\[2pt]
Wave function $\ket{\psi(t)}$ & Unit-norm complex vector; superposition of interpretations & Recurrent hidden state \\[2pt]
Hermitian $H(t){=}H^\dagger$ & Constrains dynamics to be norm-preserving by construction & Learned transition rule \\[2pt]
Schr\"odinger equation & $H^\dagger{=}H \Rightarrow$ unitary $\Rightarrow$ $\|\psi\|{\equiv}1$ & Recurrence equation \\[2pt]
Cayley transform & Structure-preserving discretization; unitarity exact for any step size & Discrete state update \\[2pt]
Born rule $|\!\braket{m_k}{\psi}\!|^2$ & Quadratic readout: $N^2$ features from $N$ complex dims & Token probabilities \\[2pt]
Interference (phase) & Competing interpretations cancel or reinforce via relative phase & Disambiguation mechanism \\[2pt]
Prob.\ current $J_{j\leftarrow k}$ & Antisymmetric, conserved pairwise flow driven by each token & Built-in interpretability \\
\bottomrule
\end{tabularx}
\caption{Correspondence between quantum-mechanical formalism and the sequence model.  Each structure plays a specific functional role: Hermiticity guarantees unitarity, unitarity guarantees norm conservation, and norm conservation ensures the Born rule produces valid probabilities.  The architecture uses this algebraic chain as an engineering constraint, not as a physical simulation.}
\label{tab:concept_bridge}
\end{table}

\section{Related Work}
\label{sec:related}

Our framework combines three ideas: (i) complex-valued latent states, (ii)
Hamiltonian continuous-time dynamics, and (iii) measurement-based decoding,
into a single architecture for next-token prediction.  Each of these ideas
has antecedents in the literature, but no prior work combines all three,
and several structural limitations of existing approaches motivate the
specific design choices we make.  We organize this section around the seven
research programs most directly relevant to our proposal.

\subsection{Unitary and Orthogonal Recurrent Networks}

The observation that recurrent neural networks suffer from vanishing and
exploding gradients during backpropagation through time motivated a line of
work that constrains the hidden-to-hidden transition matrix to be
orthogonal or unitary.  Arjovsky et al.~\cite{arjovsky2016unitary}
introduced the Unitary Evolution RNN (uRNN), parameterizing the transition
matrix as a product of diagonal phase matrices, Householder reflections,
discrete Fourier transforms, and permutations.  This composition is always
unitary, which guarantees that the gradient norm neither grows nor shrinks
across time steps.  The authors demonstrated the benefit on synthetic
benchmarks requiring long-term memory, such as the \textit{Copying Task}
and the \textit{Adding Problem}, where the uRNN maintained stable gradients
over hundreds of time steps where standard LSTMs
degraded~\cite{hochreiter1997long}.

However, the fixed factorization used by Arjovsky et al.\ covers only a
strict subset of the unitary group $U(N)$.  Wisdom et
al.~\cite{wisdom2016full} addressed this limitation by parameterizing the
full unitary group via a gradient-based optimization on the Stiefel
manifold~\cite{absil2008optimization}, using the Cayley transform to map
skew-Hermitian matrices to unitary matrices.  Their Full-Capacity Unitary
RNN achieved lower error on the \textit{Adding Problem} and the
\textit{Copying Task} at sequence lengths up to $T = 2000$.  Mhammedi et
al.~\cite{mhammedi2017efficient} proposed parameterizing orthogonal
matrices via a product of Householder reflections, obtaining a strictly
expressive family with full coverage of $O(N)$ at cost $O(N^2)$ per step
while avoiding the manifold retraction required by gradient-based Stiefel
methods. \cite{lezcano2019cheap}
subsequently offered a unified extrinsic parameterization of orthogonal and
unitary matrices through the matrix exponential of skew-symmetric or
skew-Hermitian matrices, equipped with an efficient Cayley-based retraction.

Vorontsov et al.~\cite{vorontsov2017orthogonality} further studied
orthogonality constraints in recurrent networks and their effect on
gradient propagation.  Our framework shares the use of unitary evolution
and the Cayley transform with these models, but differs in three structural
respects that are consequential for language modeling.  First, both uRNNs
and Full-Capacity Unitary RNNs typically define the recurrence as a
discrete algebraic update $h_{t+1} = W h_t + V x_t$, where $W$ is a
unitary matrix and $V x_t$ is an additive input injection.  In this
formulation, the input token shifts the state but does not alter the
dynamics that govern state evolution.  In our framework, the input token
at time $t$ parameterizes the interaction Hamiltonian
$H_{\mathrm{int}}(t)$, which is the generator of the evolution itself.
This means the input controls the \emph{rotation axis and angular velocity}
of the state trajectory on the unit sphere, allowing the input to
restructure the interference pattern among active interpretations rather
than merely displacing the state vector.

Second, prior unitary RNNs typically decode the hidden state through a
standard real-valued affine projection followed by softmax.  This discards
the phase information in the complex hidden state, collapsing it to
magnitudes before the output layer.  Our Born-rule decoding computes $p(k)
= |\langle m_k | \psi(t) \rangle|^2$, which is a quadratic function of
the complex amplitudes.  As we discuss in Section~\ref{sec:born_rule}, this
quadratic structure allows the output probabilities to depend on cross-term
interference between components of $\ket{\psi(t)}$, a mechanism that a
linear projection followed by softmax cannot replicate without increasing
the latent dimension.

Third, while prior work focuses on gradient stability, it does not
establish a formal representational separation between complex unitary and
real orthogonal models.  Our separation theorem
(Section~\ref{sec:separation}) constructs a family of disambiguation tasks
and proves that a complex unitary model of dimension $N$ computes these
tasks exactly while any real orthogonal model \emph{with an affine-softmax
readout} requires dimension $\Omega(N^2)$.  This identifies a specific
readout-dependent bottleneck in standard real-valued architectures.

\subsection{State-Space Models}

Structured State Space Models (SSMs) form the continuous time framework
closest to our own.  The S4 model introduced by Gu et
al.~\cite{gu2021efficiently} defines the latent dynamics as a linear
system $x'(t) = Ax(t) + Bu(t)$, $y(t) = Cx(t) + Du(t)$, where $A$ is
initialized with the HiPPO matrix, a specific structure designed to
optimally compress continuous signals.  S4 discretizes this system and
computes the resulting convolution via FFT, achieving strong results on the
\textit{Long Range Arena} benchmark.

Mamba~\cite{gu2023mamba} extended SSMs by making the matrices $B$, $C$,
and the discretization step size $\Delta$ functions of the input token,
introducing input-dependent selection.  This broke the time-invariance that
enabled S4's convolutional mode but allowed the model to selectively
propagate or forget information, achieving language modeling perplexity
competitive with Transformers.  Smith et al.~\cite{smith2023simplified}
subsequently simplified and extended the S4 lineage with the S5 model,
showing further efficiency gains through parallel scans.  Poli et
al.~\cite{poli2023hyena} introduced the Hyena operator, a long-convolution
approach that replaces attention with implicit parameterized convolutions,
achieving competitive performance on language modeling while scaling
sub-quadratically with sequence length; the analysis of Hyena underscores
that the choice of state-update operator---linear convolution versus
Hamiltonian flow---has direct consequences for the model's expressive power
and inductive bias.

Our framework relates to SSMs in the following precise ways.  When the
interaction Hamiltonian is set to zero ($H_{\mathrm{int}}(t) = 0$) and the
free Hamiltonian $H_0$ is diagonal, the Schr\"{o}dinger equation
$i\frac{d}{dt}\ket{\psi} = H_0\ket{\psi}$ reduces to $N$ decoupled
oscillators $\frac{d}{dt}c_j = -i\omega_j c_j$.  This is a diagonal
linear system identical in structure to the diagonalized form of S4,
except that our state variables are natively complex.  In our framework,
the complex structure is semantically load-bearing: phases encode relational
information between interpretations, and the Born rule converts phase
relationships into output probabilities through interference.

Critically, the linearity of SSMs imposes a constraint: the state update
at each step is a linear map applied to the current state.  Our Hamiltonian
framework introduces nonlinearity through the interaction term
$H_{\mathrm{int}}(t)$: because $H_{\mathrm{int}}(t)$ is generated by a
neural network applied to the token embedding and the current state (as
detailed in Section~\ref{sec:hamiltonian}), the map from input sequence to
state trajectory is nonlinear, even though the instantaneous evolution
under any fixed $H(t)$ is linear.

\subsection{Complex-Valued Neural Networks}

Trabelsi et al.~\cite{trabelsi2018deep} introduced a systematic framework
for deep complex-valued networks, defining complex convolutions and batch
normalization.  They demonstrated improvements over real-valued baselines
on tasks like \textit{MusicNet}, attributing the gains to the ability of
complex convolutions to learn phase-sensitive filters.  Hirose and
Yoshida~\cite{hirose2012generalization} provide an earlier theoretical
treatment of complex-valued neural networks and their generalization
properties.  Virtue et al.~\cite{virtue2017better} showed that
complex-valued networks can outperform their real-valued counterparts in
MRI fingerprinting tasks, where the underlying physics produces
complex-valued measurement signals and phase structure carries diagnostic
information; this domain-specific result provides additional evidence that
complex representations are most beneficial when the problem's native
structure is phase-bearing.  However, the empirical literature is not
uniformly positive; other works have found that real-valued models can
match complex-valued performance if the parameter count is equalized by
increasing the real network's width.  This suggests that complex values
are not a panacea, but rather a specific inductive bias suitable for
specific problems.

Our separation theorem provides a formalization of \emph{when} this bias
is useful in sequence modeling.  It identifies a specific
mechanism---destructive interference between complex amplitudes accessed via
a quadratic readout---and proves that it enables a dimension reduction by a
quadratic factor relative to real-valued models with linear readouts.  The
proof constructs disambiguation functions in which the correct output
depends on the relative phase between two context tokens.  A complex
unitary model encodes the interaction in the relative phases of the state
vector, while a real orthogonal model must represent pairwise interactions
as separate state dimensions.

\subsection{Neural ODEs and Hamiltonian Networks}

Chen et al.~\cite{chen2018neural} showed that residual networks can be
interpreted as discretizations of an ODE $\frac{dh}{dt} = f(h, \theta,
t)$.  While powerful, unconstrained Neural ODEs face topological
limitations; Dupont et al.~\cite{dupont2019augmented} showed that Neural
ODEs in $\mathbb{R}^n$ cannot represent functions that require trajectories
to cross.  They resolved this by augmenting the state space.  Kidger et
al.~\cite{kidger2021neural} further developed neural controlled
differential equations for sequence modeling, demonstrating that
continuous-time recurrences can match or exceed discrete recurrent models
on irregular time-series.  Our framework resolves trajectory crossings
through phase: two states with identical magnitudes $|c_j|$ but different
phases occupy different points on the complex unit sphere.  Phase provides
an extra degree of freedom per complex coordinate, and the Hamiltonian
dynamics naturally exploit this by rotating phases at rates determined by
the learned frequencies.

Greydanus et al.~\cite{greydanus2019hamiltonian} introduced Hamiltonian
Neural Networks (HNNs) to enforce energy conservation in physical
simulations.  However, HNNs operate in real-valued phase spaces and
conserve a scalar energy function.  In contrast, our framework conserves
the $L^2$ norm of a complex state vector.  This conservation of total
probability yields vector-valued probability currents (derived in
Section~\ref{sec:currents}) rather than a scalar energy value, serving as
a diagnostic for information flow rather than physical energy.  Stoudenmire
and Schwab~\cite{stoudenmire2016supervised} demonstrated that tensor
networks, which share the unitary-constraint philosophy of our approach,
can be competitive classifiers, further motivating norm-preserving
parameterizations in machine learning.

\subsection{Liquid Neural Networks and Continuous-Time Recurrences}

Liquid Neural Networks (LNNs), introduced by Hasani et
al.~\cite{hasani2021liquid}, define recurrent dynamics through a system of
ordinary differential equations whose time constants are modulated jointly
by the input signal and the current state.  The core governing equation
for the hidden state $h(t) \in \mathbb{R}^d$ is
\begin{equation}
\dot{h}(t) = -\frac{h(t)}{\tau(h(t), x(t), \theta)} + f_\theta(h(t),
x(t)),
\end{equation}
where $\tau > 0$ is a state- and input-dependent time constant determining
how quickly each neuron decays toward the input-driven target, and
$f_\theta$ is a learned nonlinearity.  The resulting dynamics are
``liquid'' in the sense that each neuron's effective temporal receptive
field adapts to the current input, compressing or dilating its timescale
in response to input statistics.  Lechner et al.~\cite{lechner2020neural}
demonstrated that wiring topologies inspired by the nematode \textit{C.
elegans} connectome produce remarkably compact and interpretable
controllers for continuous control tasks, achieving state-of-the-art
performance with far fewer parameters than generic Neural ODEs.  The
Closed-form Continuous-time (CfC) variant~\cite{hasani2022closed} derives
an approximate closed-form solution to these ODEs, eliminating the need
for an online numerical integrator during inference and recovering the
computational efficiency of discrete-step recurrence while retaining the
expressive inductive bias of continuous-time dynamics.

LNNs share with our framework the fundamental architectural choice of
grounding the state update in a differential equation rather than in a
hand-designed recurrence rule, and both frameworks allow the input signal
to modulate the transition, rather than contributing only an additive
displacement.  These structural similarities make LNNs the most direct
continuous-time comparator to the architecture proposed here.

Several differences distinguish the two frameworks, however, and they are
consequential for the representational and stability properties relevant to
language modeling.  First, LNN dynamics operate in real-valued state spaces
with unconstrained weight matrices; no architectural invariant controls the
spectral properties of the Jacobian of $f_\theta$, and stability is ensured
through the sign of the decay term $-h/\tau$ rather than through an exact
structural constraint.  Our framework uses a Hermitian Hamiltonian, which
structurally guarantees that the continuous-time flow is norm-preserving
(unitary) and that, via the Crank--Nicolson (Cayley) discretization
(Section~\ref{sec:cayley}), this property holds exactly at every discrete
step.  Second, the CfC approximation replaces the ODE solution with an
exponential-decay closed form; the quality of this approximation degrades
when the nonlinear term in $f_\theta$ is large relative to the linear decay,
creating a tradeoff between efficiency and representational fidelity that
does not arise in our framework, where the Cayley update is exact with
respect to norm preservation regardless of Hamiltonian magnitude.  Third,
LNNs decode through an affine-softmax layer, discarding any phase-like
structure that the ODE trajectory may have produced; our Born-rule decoding
is quadratic in the state, providing access to $O(N^2)$ pairwise features
(Lemma~\ref{lem:quadratic_lifting}) that the affine-softmax readout
cannot access without a corresponding linear-dimension increase
(Lemma~\ref{lem:softmax_rank}).

Taken together, the two frameworks represent complementary continuous-time
strategies: LNNs prioritize adaptive timescales and biological plausibility
in real-valued spaces, while our framework prioritizes algebraic structure
preservation---norm conservation and phase-sensitive readout---in
complex-valued spaces.  Whether the additional structural constraints of
our framework translate to improved performance on natural language tasks
remains an empirical question addressed by the experimental protocols of
Section~\ref{sec:predictions}.

\subsection{Quantum Cognition and Quantum-Inspired NLP}
\label{sec:quantum_cognition}

The hypothesis that human judgment exhibits interference effects has been
developed by Busemeyer and Bruza~\cite{busemeyer2012quantum}.  Empirical
evidence, such as order effects and the conjunction fallacy, suggests that
human probability judgments sometimes violate classical axioms in ways
described by quantum probability.  For example, the probability of ``A and
B'' can be rated higher than the probability of ``A'' alone, which is
naturally explained by constructive interference in a projector-based
measurement model.

We cite this literature not to claim that language models \emph{must}
simulate human cognitive processes, but to motivate the architecture as a
potentially useful inductive bias.  If human language production leaves
traces of these interference-like structures, a model equipped with complex
amplitudes and Born-rule decoding may represent them more efficiently than
a classical probability model.  The Born-rule measurement postulate has
begun to attract direct empirical attention in NLP: recent work on text
classification with Born's rule~\cite{guidotti2022text} provides
evidence that squared-amplitude measurement operators can serve as
effective output mechanisms for natural language tasks, contextualizing our
architectural choice within an emerging body of quantum-inspired NLP
research.  Similarly, quantum-inspired feature maps in reproducing kernel
Hilbert spaces~\cite{schuld2019quantum} provide theoretical grounding for
using complex inner products as similarity measures in high-dimensional
feature spaces.  Existing quantum cognition and quantum-inspired NLP models
are largely either descriptive (fitting data with hand-selected operators)
or restricted to small-scale demonstrations.  Our framework is a trainable
realization of this mathematical structure, scaling it to vocabulary sizes
of tens of thousands and learning the operators from data.

\subsection{Quantum Machine Learning and Quantum Neural Networks}

The term \emph{quantum machine learning} (QML) encompasses two related but
distinct paradigms that are important to distinguish from our approach.
The first, and currently dominant, paradigm investigates how machine
learning computations can be accelerated or qualitatively enriched by
running on actual quantum processors, so-called near-term noisy
intermediate-scale quantum (NISQ) devices~\cite{preskill2018quantum}.  The
second paradigm develops quantum-informed mathematical structures as
inductive biases for models that run entirely on classical hardware, of
which our framework is an instance.  We discuss both, as each informs the
design and interpretation of the present work.

\paragraph{Parameterized quantum circuits and quantum neural networks.}
In the NISQ paradigm, \emph{quantum neural networks} (QNNs) are typically
implemented as parameterized quantum circuits
(PQCs)~\cite{cerezo2021variational, mitarai2018quantum}: a sequence of
parameterized unitary gates drawn from a hardware-native gate set is
applied to qubits initialized in a standard state, and a measurement in the
computational basis extracts classical outputs.  The measurement step is
governed, as in our framework, by Born's rule: a projective measurement on
a state $\ket{\psi}$ produces outcome $k$ with probability
$|\langle k | \psi \rangle|^2$.  This shared measurement postulate makes
the mathematics of PQC-based classifiers directly analogous to our
Born-rule decoding layer.  The key distinction is the substrate and the
dimension: PQCs operate on quantum amplitude vectors that physically exist
in a $2^n$-dimensional Hilbert space for $n$ qubits (with quantum
interference arising from genuine quantum superposition), whereas our
framework maintains a classically stored, explicitly computed state vector
in $\mathbb{C}^N$.

The expressivity and trainability of QNNs are active research areas.
\cite{biamonte2017quantum} provided an influential early
survey situating quantum machine learning within the broader landscape of
quantum algorithms.  Havl\'{i}\v{c}ek et al.~\cite{havlicek2019supervised}
demonstrated that quantum feature maps can produce kernel methods with
potential classification advantages on specific structured datasets.  Abbas
et al.~\cite{abbas2021power} characterized the effective dimension and
trainability of QNNs, showing that they can exhibit high capacity per
parameter in certain regimes.  However, the ``barren plateau''
phenomenon~\cite{mcclean2018barren} poses a significant trainability
obstacle analogous to (though more severe than) the vanishing-gradient
problem in classical deep networks; recent work by Cerezo et
al.~\cite{cerezo2021cost} has analyzed the conditions under which local
cost functions resist barren plateaus in shallow circuits.

\paragraph{Relation to the present framework.}
Our framework relates to the QML literature in three ways.  First, we
import the Born-rule measurement postulate and the Hermitian-Hamiltonian
evolution law as algebraic constraints on a classical sequence model, not
as descriptions of any physical quantum process.  The representational
advantages we establish ($O(N^2)$ effective features from a Born-rule
readout (Lemma~\ref{lem:quadratic_lifting}), exact norm-conservation from
Hermitian dynamics) are purely algebraic consequences that apply on
classical hardware.  Second, the gradient stability analysis of
Section~\ref{sec:init_training} is structurally analogous to the
barren-plateau analysis of PQCs: in both cases the central question is
whether useful gradient signals propagate through a composition of unitary
transformations.  Our Crank--Nicolson discretization guarantees that the
gradient of the loss with respect to the recurrent state propagates without
change in norm along the state pathway, providing an architectural
resolution to the state-gradient form of this problem, although
gradients through the parameter pathway (inside $g_\theta$) are not covered
by this guarantee, as discussed in Section~\ref{sec:init_training}.  Third,
the data-encoding and measurement structure of our Born-rule layer resembles
the data-reuploading classifiers of P\'{e}rez-Salinas et
al.~\cite{perezsalinas2020data}, in which classical data is injected at
multiple circuit layers to improve expressivity; in our model, the
analogous role is played by the state-dependent Hamiltonian $H_{\mathrm{int}}(t)$,
which re-encodes the current token at every step in a manner whose depth is
effectively determined by the sequence length.

\paragraph{Quantum reservoir computing and quantum-inspired classical
algorithms.}
A third connection arises through quantum reservoir
computing~\cite{fujii2017harnessing, ghosh2019quantum}, which uses the
dynamics of a (fixed, untrained) quantum system as a nonlinear feature map
whose output is read by a simple trained linear layer.  Quantum reservoir
computers are the quantum analogue of echo state
networks~\cite{jaeger2004harnessing}; only the output weights are trained,
making the approach computationally lightweight but limiting the model's
capacity to that of a kernel method applied to the reservoir's fixed
representational space.  In contrast, our framework trains the full
dynamics, the Hamiltonian, measurement, and initial state, end-to-end.  The
broader quantum-inspired classical algorithms
literature~\cite{tang2019quantum} explores problems where the mathematical
structure of quantum algorithms (amplitude amplification, phase estimation)
can be simulated efficiently on classical hardware to yield formal
algorithmic speedups; our work belongs to this tradition in motivation but
targets representational rather than computational advantages.

We emphasize that the present architecture is entirely classical in its
implementation.  The state $\ket{\psi(t)}$ is stored as a vector of
floating-point complex numbers; the Cayley update is a deterministic linear
algebra operation; the Born-rule output is the squared modulus of an inner
product, all computable on commodity hardware without access to quantum
processors.  The physical terminology (Hamiltonian, wave function, Born
rule) names mathematical structures whose algebraic properties provide the
guarantees we prove.  This terminological choice is conventional in the
quantum-inspired ML literature and should not be read as claiming a
computational quantum advantage.

\subsection{Transformers and Attention Mechanisms}

Transformers~\cite{vaswani2017attention} dominate large-scale language
modeling.  The attention mechanism provides direct access to context,
bypassing the recurrent bottleneck.  Our framework differs in its
computational structure.  First, the Transformer's memory (KV cache) grows
as $O(T d L)$ with sequence length $T$, hidden dimension $d$, and number
of layers $L$.  More importantly, the standard scaled dot-product
self-attention operation requires $O(T^2 d)$ time and $O(T^2 + T d)$
memory per layer due to the all-pairs attention matrix, a quadratic
dependence on sequence length that becomes prohibitive for long documents.
Efficient attention variants, including linear
attention~\cite{katharopoulos2020transformers}, sparse attention such as
BigBird~\cite{zaheer2020big}, and hardware-optimized exact
attention~\cite{dao2022flashattention}, reduce this to $O(T d)$ or $O(T
\log T \, d)$, at the cost of approximating or restructuring the full
attention pattern.  Our recurrent state is fixed-size $O(N)$, independent
of sequence length.

Second, the softmax attention mechanism computes a convex combination of
value vectors.  While the Multi-Layer Perceptron (MLP) layers following
attention can implement subtraction or cancellation, the attention operation
itself is an averaging process.  In our framework, cancellation is an
intrinsic property of the state accumulation: two latent components can
cancel through destructive interference during the summation of state
updates.  This suggests a different mechanism for suppressing irrelevant
information: rather than driving an attention weight to zero via a
saturating softmax, the model can align phases to produce destructive
interference.

\section{The Quantum Sequence Model}
\label{sec:model}

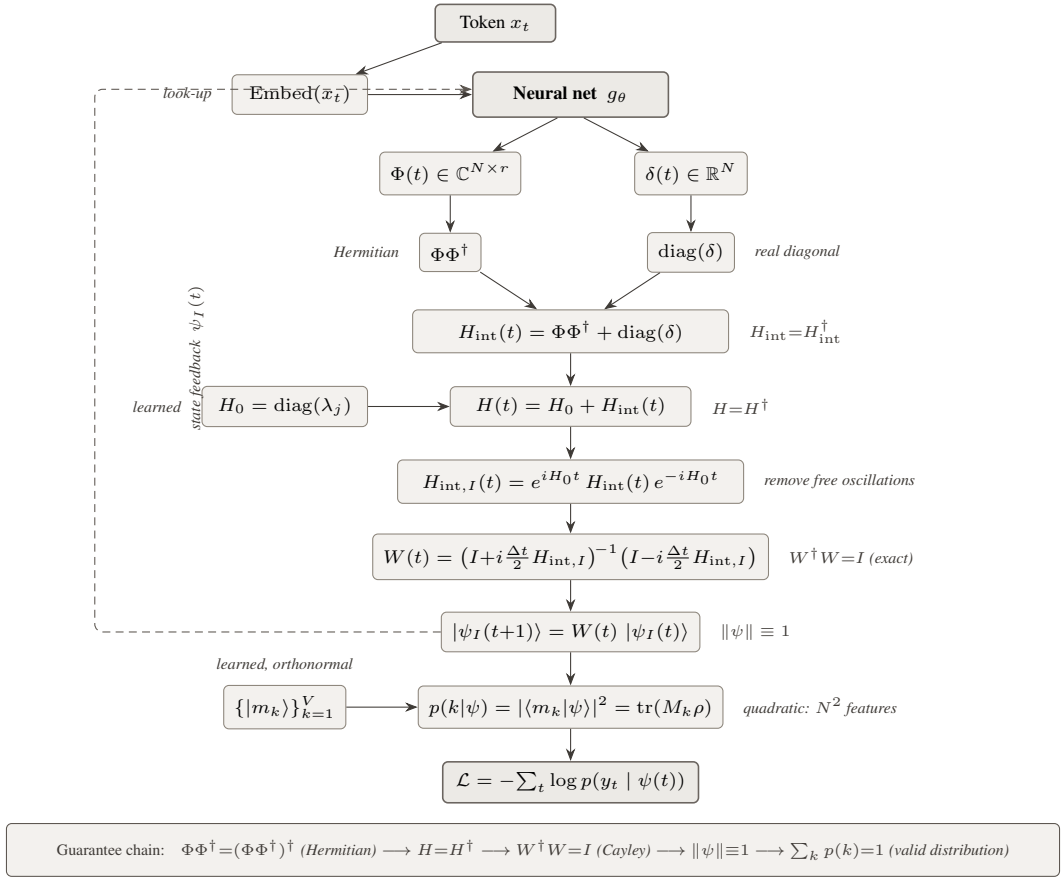
\begin{figure}[h!]
\centering
\begin{tikzpicture}[>=Stealth, font=\scriptsize,
  box/.style={draw, rectangle, rounded corners=2pt, minimum height=0.5cm,
              inner sep=4pt, align=center},
  op/.style={box, fill=figTeal, draw=figTealBorder},
  nn/.style={box, fill=figBlue, draw=figBlueBorder, line width=0.6pt},
  lbl/.style={font=\tiny, text=figSlate},
  flow/.style={->, >=Stealth, line width=0.4pt, figSlate},
  gflow/.style={->, >=Stealth, densely dashed, line width=0.4pt, figAccent},
]
\def\VS{0.95}

\node[font=\footnotesize\bfseries, anchor=north west] at (-5.6,1.6)
  {Single time step --- internal mechanism};

\node[nn, minimum width=1.6cm] (xt) at (0,0.65) {Token $x_t$};

\node[op, minimum width=1.8cm] (emb) at (-2.6,-0.3) {$\mathrm{Embed}(x_t)$};
\node[lbl, anchor=east] at ([xshift=-4pt]emb.west) {\emph{look-up}};
\draw[flow] (xt) -- (emb);

\node[nn, minimum width=2.6cm, minimum height=0.6cm] (gtheta) at (1.0,-0.3)
  {\textbf{Neural net}\; $g_\theta$};
\draw[flow] (emb) -- (gtheta);

\node[op] (phi) at (-0.6,-1.35) {$\Phi(t) \in \mathbb{C}^{N\times r}$};
\node[op] (delta) at (2.6,-1.35) {$\delta(t) \in \mathbb{R}^N$};
\draw[flow] ([xshift=-5pt]gtheta.south) -- (phi);
\draw[flow] ([xshift=5pt]gtheta.south) -- (delta);

\node[op] (PhiPhi) at (-0.6,-2.4) {$\Phi\Phi^\dagger$};
\node[op] (diagd) at (2.6,-2.4) {$\operatorname{diag}(\delta)$};
\draw[flow] (phi) -- (PhiPhi);
\draw[flow] (delta) -- (diagd);
\node[lbl, anchor=east] at ([xshift=-4pt]PhiPhi.west) {\emph{Hermitian}};
\node[lbl, anchor=west] at ([xshift=4pt]diagd.east) {\emph{real diagonal}};

\node[op, minimum width=4.2cm] (Hint) at (1.0,-3.45)
  {$H_{\mathrm{int}}(t) = \Phi\Phi^\dagger + \operatorname{diag}(\delta)$};
\draw[flow] (PhiPhi) -- (Hint);
\draw[flow] (diagd) -- (Hint);
\node[lbl, anchor=west] at ([xshift=4pt]Hint.east) {$H_{\mathrm{int}}{=}H_{\mathrm{int}}^\dagger$};

\node[op, minimum width=2.2cm] (H0) at (-2.8,-4.45) {$H_0 = \operatorname{diag}(\lambda_j)$};
\node[lbl, anchor=east] at ([xshift=-4pt]H0.west) {\emph{learned}};

\node[op, minimum width=3.2cm] (Ht) at (1.0,-4.45) {$H(t) = H_0 + H_{\mathrm{int}}(t)$};
\draw[flow] (Hint) -- (Ht);
\draw[flow] (H0) -- (Ht);
\node[lbl, anchor=west] at ([xshift=4pt]Ht.east) {$H{=}H^\dagger$};

\node[op, minimum width=4.6cm] (HintI) at (1.0,-5.45)
  {$H_{\mathrm{int},I}(t) = e^{iH_0 t}\, H_{\mathrm{int}}(t)\, e^{-iH_0 t}$};
\draw[flow] (Ht) -- (HintI);
\node[lbl, anchor=west] at ([xshift=4pt]HintI.east) {\emph{remove free oscillations}};

\node[op, minimum width=5.0cm] (cayley) at (1.0,-6.45)
  {$W(t) = \bigl(I{+}i\tfrac{\Delta t}{2}H_{\mathrm{int},I}\bigr)^{\!-1}\bigl(I{-}i\tfrac{\Delta t}{2}H_{\mathrm{int},I}\bigr)$};
\draw[flow] (HintI) -- (cayley);
\node[lbl, anchor=west] at ([xshift=4pt]cayley.east) {$W^\dagger W {=} I$ \emph{(exact)}};

\node[op, minimum width=3.2cm] (psi) at (1.0,-7.45)
  {$\ket{\psi_I(t{+}1)} = W(t)\,\ket{\psi_I(t)}$};
\draw[flow] (cayley) -- (psi);
\node[lbl, anchor=west] at ([xshift=4pt]psi.east) {$\|\psi\| \equiv 1$};

\node[op, minimum width=3.6cm] (born) at (1.0,-8.45)
  {$p(k|\psi) = |\!\braket{m_k}{\psi}\!|^2 = \operatorname{tr}(M_k\rho)$};
\draw[flow] (psi) -- (born);
\node[lbl, anchor=west] at ([xshift=4pt]born.east) {\emph{quadratic: $N^2$ features}};

\node[op] (mk) at (-2.8,-8.45) {$\{|m_k\rangle\}_{k=1}^V$};
\draw[flow] (mk) -- (born);
\node[lbl, anchor=south] at (-2.8,-8.1) {\emph{learned, orthonormal}};

\node[nn, minimum width=3.4cm] (loss) at (1.0,-9.45)
  {$\mathcal{L} = -\!\sum_t \log p(y_t \mid \psi(t))$};
\draw[flow] (born) -- (loss);

\draw[gflow, rounded corners=3pt]
  (psi.west) -- ++(-4.6,0) |- ([yshift=2pt]gtheta.west);
\node[lbl, rotate=90, anchor=south] at (-3.75,-3.8)
  {\emph{state feedback}\; $\psi_I(t)$};

\node[draw=figAccent, rounded corners=2pt, inner sep=6pt,
      font=\tiny, text=figSlate, fill=figTeal, minimum width=14cm, align=center]
  at (0.5,-10.35)
  {Guarantee chain:\quad
   $\Phi\Phi^\dagger {=} (\Phi\Phi^\dagger)^\dagger$\;\emph{(Hermitian)}%
   \;$\longrightarrow$\;%
   $H{=}H^\dagger$%
   \;$\longrightarrow$\;%
   $W^\dagger W{=}I$\;\emph{(Cayley)}%
   \;$\longrightarrow$\;%
   $\|\psi\|{\equiv}1$%
   \;$\longrightarrow$\;%
   $\sum_k p(k){=}1$\;\emph{(valid distribution)}};

\end{tikzpicture}
\caption{Detailed architecture of a single time step in the quantum sequence model.  The neural network $g_\theta$ (the \emph{sole} unconstrained learned component, bold outline) receives the token embedding and the current interaction-picture state, and outputs the complex matrix $\Phi(t)$ and real vector $\delta(t)$.  The outer product $\Phi\Phi^\dagger$ is Hermitian by construction; adding the learned diagonal $H_0$ yields the full Hamiltonian $H(t)$.  The interaction picture removes the known free oscillations, and the Cayley transform discretizes the remaining evolution into an exactly unitary update $W(t)$.  Unitarity preserves $\|\psi\|=1$ at every step, which ensures the Born rule produces a valid probability distribution over the vocabulary.  Left-side annotations mark learned components; right-side annotations trace the algebraic guarantees.  The dashed line indicates recurrent state feedback, which makes the overall dynamics nonlinear despite each individual step being a linear (unitary) map.}
\label{fig:architecture}
\end{figure}

The Introduction argued that complex-valued representations support
interference, a mechanism by which competing interpretations can suppress
one another through phase opposition rather than through explicit gating.
Section~\ref{sec:related} surveyed four families of models that each
capture part of this idea: unitary RNNs use complex-valued states but
discard phase at the output layer; state-space models use continuous-time
dynamics but operate in real-valued spaces; complex-valued networks employ
complex parameters but impose no structural constraints on the dynamics;
and Hamiltonian neural networks preserve conservation laws but operate in
real phase spaces designed for physical simulation.  No existing
architecture combines complex-valued states, Hamiltonian evolution, and
phase-sensitive decoding into a single trainable sequence model.

This section constructs such an architecture in six steps.  We first define
the state space (Section~\ref{sec:state_space}), explaining what structure
a complex unit-norm vector provides and why that structure is relevant to
the interference mechanism.  We then specify the evolution law
(Section~\ref{sec:hamiltonian}), decomposing it into a free component that
establishes a spectrum of oscillation timescales and an input-driven
component that couples latent dimensions in response to each observed
token.  A change of coordinates (Section~\ref{sec:interaction_picture})
factors out the known free oscillations, isolating the input-driven
dynamics in a form suited to numerical integration.  A
structure-preserving discretization (Section~\ref{sec:cayley}) converts
the continuous evolution into a discrete update that maintains unitarity
regardless of step size.  A measurement rule
(Section~\ref{sec:born_rule}) extracts token probabilities from the complex
state through a mechanism whose algebraic properties differ from the
standard softmax readout.  Finally, the initialization and training
objective (Section~\ref{sec:init_training}) complete the model
specification.

\begin{table}[tp]
\centering
\caption{Summary of all mathematical notation used throughout the paper.
All complex vectors are column vectors; $\|\cdot\|$ denotes the Euclidean
($L^2$) norm; $(\cdot)^\dagger$ denotes the conjugate transpose.}
\label{tab:notation}
\small
\setlength{\tabcolsep}{6pt}
\begin{tabular}{@{}llp{8.5cm}@{}}
\toprule
\textbf{Symbol} & \textbf{Space} & \textbf{Definition} \\
\midrule
\multicolumn{3}{l}{\textit{Scalars and index sets}} \\[2pt]
$N$ & $\mathbb{Z}_{>0}$ & Latent (complex Hilbert-space) dimension \\
$V$ & $\mathbb{Z}_{>0}$ & Vocabulary size \\
$T$ & $\mathbb{Z}_{>0}$ & Sequence length \\
$d$ & $\mathbb{Z}_{>0}$ & Token embedding dimension \\
$r$ & $\mathbb{Z}_{>0}$ & Rank of $\Phi(t)$; controls the interaction Hamiltonian's
                          expressivity-efficiency tradeoff ($r \ll N$) \\
$\Delta t$ & $\mathbb{R}_{>0}$ & Discretization step size (typically $\Delta t = 1$: one step per token) \\
$\lambda_j$ & $\mathbb{R}$ & $j$-th learnable oscillation frequency (eigenvalue of $H_0$) \\[6pt]
\multicolumn{3}{l}{\textit{State vectors}} \\[2pt]
$\ket{\psi(t)}$ & $\mathbb{C}^N$ & Schr\"odinger-picture latent state; $\|\psi(t)\|=1$ \\
$\ket{\psi_I(t)}$ & $\mathbb{C}^N$ & Interaction-picture latent state;
                                     $\ket{\psi(t)} = e^{-iH_0 t}\ket{\psi_I(t)}$ \\
$c_j(t)$ & $\mathbb{C}$ & $j$-th amplitude: $c_j(t)=\braket{j}{\psi(t)}$ \\
$p_j(t)$ & $[0,1]$ & Occupation probability of dimension $j$: $p_j(t)=|c_j(t)|^2$ \\[6pt]
\multicolumn{3}{l}{\textit{Hamiltonians and evolution}} \\[2pt]
$H(t)$ & $\mathbb{C}^{N\times N}$ & Full Hamiltonian: $H(t)=H_0+H_{\mathrm{int}}(t)$; Hermitian \\
$H_0$ & $\mathbb{C}^{N\times N}$ & Free Hamiltonian: $\mathrm{diag}(\lambda_0,\ldots,\lambda_{N-1})$ \\
$H_{\mathrm{int}}(t)$ & $\mathbb{C}^{N\times N}$ & Interaction (input-driven) Hamiltonian; Hermitian \\
$H_{\mathrm{int},I}(t)$ & $\mathbb{C}^{N\times N}$ & Interaction Hamiltonian in the interaction picture:
                           $e^{iH_0 t}H_{\mathrm{int}}(t)e^{-iH_0 t}$ \\
$\mathcal{U}(t_2,t_1)$ & $U(N)$ & Unitary time-evolution operator mapping $\ket{\psi(t_1)}$
                                   to $\ket{\psi(t_2)}$ \\
$W(t)$ & $U(N)$ & Cayley (Crank--Nicolson) discrete update operator \\[6pt]
\multicolumn{3}{l}{\textit{Interaction Hamiltonian parameterization}} \\[2pt]
$g_\theta$ & --- & Neural network mapping $(\mathrm{Embed}(x_t),\,
                   [\mathrm{Re}(c_I(t)), \mathrm{Im}(c_I(t))])\to(\Phi(t),\delta(t))$ \\
$\Phi(t)$ & $\mathbb{C}^{N\times r}$ & Low-rank factor of $H_{\mathrm{int}}(t)$; output of $g_\theta$ \\
$\delta(t)$ & $\mathbb{R}^N$ & Diagonal-shift vector; output of $g_\theta$ \\
$\tilde{\Phi}(t)$ & $\mathbb{C}^{N\times r}$ & Interaction-picture version:
                   $[\tilde{\Phi}]_{ja}=e^{i\lambda_j t}\Phi_{ja}(t)$ \\
$\mathrm{Embed}(x)$ & $\mathbb{R}^d$ & Learned token embedding for vocabulary item $x$ \\[6pt]
\multicolumn{3}{l}{\textit{Measurement and output}} \\[2pt]
$\ket{m_k}$ & $\mathbb{C}^N$ & Measurement vector for vocabulary item $k$ \\
$M$ & $\mathbb{C}^{N\times V}$ & Measurement matrix; columns are $\ket{m_k}$;
                                  satisfies $MM^\dagger=I_N$ \\
$M_k$ & $\mathbb{C}^{N\times N}$ & Rank-one projector $\ket{m_k}\bra{m_k}$ \\
$\rho$ & $\mathbb{C}^{N\times N}$ & Density matrix $\ket{\psi}\bra{\psi}$; rank-one Hermitian \\
$p(k\mid\psi(t))$ & $[0,1]$ & Born-rule token probability: $|\braket{m_k}{\psi(t)}|^2$ \\[6pt]
\multicolumn{3}{l}{\textit{Probability currents}} \\[2pt]
$J_{j\leftarrow k}(t)$ & $\mathbb{R}$ & Continuous-time probability current from
                           dimension $k$ to dimension $j$ \\
$J_{j\leftarrow k}^{\mathrm{mid}}(t)$ & $\mathbb{R}$ & Midpoint (exact discrete) probability current \\[6pt]
\multicolumn{3}{l}{\textit{Separation theorem}} \\[2pt]
$\mathcal{D}_N$ & --- & Disambiguation task family parameterized by $N$ \\
CUSM & --- & Complex Unitary Sequence Model (Definition~\ref{def:cusm}) \\
ROSM & --- & Real Orthogonal Sequence Model (Definition~\ref{def:rosm}) \\
$\rho_{ij}$ & $\mathbb{C}^{N\times N}$ & Task density matrix $W_j\ket{\psi_i}\bra{\psi_i}W_j^\dagger$ \\
\bottomrule
\end{tabular}
\end{table}

\subsection{State Space}
\label{sec:state_space}

The interference mechanism described in the Introduction requires that the
latent representation carry both magnitudes and phases: two contributions
to a prediction cancel only if their phases oppose one another.  To realize
this, we define the latent state at discrete time step $t$ as a unit-norm
vector in a finite-dimensional complex Hilbert space:
\begin{equation}
\label{eq:state}
\ket{\psi(t)} \in \mathbb{C}^N, \qquad \langle \psi(t) | \psi(t) \rangle = 1,
\end{equation}
where $N$ is a hyperparameter setting the model's capacity and $\langle
\cdot | \cdot \rangle$ is the standard Hermitian inner product.  Expanding
in the computational basis $\{\ket{0}, \ket{1}, \ldots, \ket{N-1}\}$, the
state takes the form
\begin{equation}
\ket{\psi(t)} = \sum_{j=0}^{N-1} c_j(t) \ket{j},
\end{equation}
where each coefficient $c_j(t) \in \mathbb{C}$ is called an amplitude.
The unit-norm constraint requires $\sum_j |c_j(t)|^2 = 1$, so the squared
magnitudes form a probability distribution over the $N$ latent dimensions
at every time step.

Each amplitude carries two degrees of freedom.  In polar form, $c_j(t) =
r_j(t)\, e^{i\theta_j(t)}$: the magnitude $r_j(t) = |c_j(t)|$ determines
how much weight the $j$-th dimension carries, while the phase $\theta_j(t)
= \arg c_j(t)$ encodes relational information that has no counterpart in
real-valued representations under a linear readout.  To see concretely why
phases matter, consider two states that share the same magnitude
distribution (e.g., say $|c_1| = |c_2| = 1/\sqrt{2}$ with all other
amplitudes zero) but differ in relative phase: in one state $c_2 = c_1$,
and in the other $c_2 = -c_1$.  Under a linear readout $w^\top h$ applied
to the squared magnitudes, these two states produce identical outputs.  But
under the inner product with a measurement vector $\ket{m}$, the overlap
$\langle m | \psi \rangle = m_1^* c_1 + m_2^* c_2$ differs between the
two cases.  In the first, the contributions from dimensions 1 and 2
reinforce; in the second, they cancel.  This is interference, and it occurs
because complex addition is phase-sensitive.  Standard real-valued
architectures achieve analogous suppression through dedicated learned
mechanisms such as gating (in LSTMs~\cite{hochreiter1997long} and GRUs) or
attention weight modulation (in Transformers~\cite{vaswani2017attention}).
These mechanisms require their own parameters and computational overhead.
The complex state space provides suppression through the geometry of
complex addition itself, without requiring additional parameters, though
only when paired with a phase-sensitive readout such as the Born rule
defined in Section~\ref{sec:born_rule}.

The connection to language modeling is as follows.  When the model
processes an ambiguous prefix, the state $\ket{\psi(t)}$ distributes
amplitude across latent dimensions associated (through training) with
different interpretations.  The relative phases between these amplitudes
record how the interpretations relate to one another: whether they are
compatible, conflicting, or independent.  As subsequent tokens arrive, the
evolution law (Section~\ref{sec:hamiltonian}) rotates these phases, and
the output rule (Section~\ref{sec:born_rule}) converts the resulting
interference pattern into token probabilities.  The unit-norm constraint
ensures that strengthening one interpretation necessarily weakens others:
probability is conserved and redistributed, not created or destroyed.  The
conserved probability currents derived in Section~\ref{sec:currents} make
this redistribution process visible.

The basis vectors $\ket{j}$ do not correspond to pre-assigned linguistic
categories.  They are abstract directions in $\mathbb{C}^N$ that acquire
semantic content through training, in the same way that hidden units in a
feedforward network develop interpretable roles through optimization.  What
distinguishes this state space from a standard real-valued hidden vector is
the geometry: the state lives on the complex unit sphere $S^{2N-1} \subset
\mathbb{C}^N$, which is a $(2N{-}1)$-dimensional manifold.  Each complex
coordinate contributes two real degrees of freedom (magnitude and phase),
so two points on $S^{2N-1}$ can share identical coordinate magnitudes
while differing in their phases, and both the dynamics and the output rule
are sensitive to these differences.  The separation theorem in
Section~\ref{sec:separation} formalizes this by showing that, for a
specific family of disambiguation tasks, the phase degrees of freedom
enable a complex state of dimension $N$ with Born-rule output to encode
relationships that require dimension $\Omega(N^2)$ in a real-valued model
with affine-softmax readout.

\subsection{Hamiltonian Decomposition}
\label{sec:hamiltonian}

Having defined the state space, we need a rule for how the state evolves
as the model processes tokens.  This rule must satisfy three properties
identified in the preceding sections.  First, it must preserve the unit
norm of the state at every step, since both the probabilistic
interpretation of the amplitudes and the Born-rule output
(Section~\ref{sec:born_rule}) depend on $\|\psi(t)\| = 1$.  Second, it
must allow the input token at each step to shape the dynamics, not merely
shift the state by an additive bias, as in the input injection $Wh_t +
Vx_t$ used by unitary RNNs (Section~\ref{sec:related}), but control the
axis and rate of the state's rotation on the unit sphere.  Third, it should
accommodate multiple timescales of variation, so that different latent
dimensions can be allocated to features that change at different rates.

We govern the state evolution by the time-dependent Schr\"{o}dinger
equation,
\begin{equation}
\label{eq:schrodinger}
i \frac{d}{dt} \ket{\psi(t)} = H(t) \ket{\psi(t)},
\end{equation}
where $H(t)$ is a Hermitian operator satisfying $H(t) = H(t)^\dagger$.
The Hermitian property is the single structural constraint on the dynamics,
and it is sufficient to guarantee norm preservation.  To verify this,
differentiate the norm:
\begin{equation}
\frac{d}{dt}\langle \psi | \psi \rangle = \left\langle \frac{d\psi}{dt}
\bigg| \psi \right\rangle + \left\langle \psi \bigg| \frac{d\psi}{dt}
\right\rangle = \langle \psi | (iH)^\dagger | \psi \rangle + \langle \psi
| (-iH) | \psi \rangle = i\langle \psi | H | \psi \rangle - i\langle \psi
| H | \psi \rangle = 0.
\end{equation}
The time-evolution operator $\mathcal{U}(t_2, t_1)$ mapping $\ket{\psi(t_1)}$
to $\ket{\psi(t_2)}$ is therefore unitary ($\mathcal{U}^\dagger \mathcal{U}
= I$): the state rotates on the unit sphere but never drifts off it.  This
addresses a stability concern raised in Section~\ref{sec:related}: Haber
and Ruthotto~\cite{haber2017stable} showed that norm-preserving dynamics
provide the strongest stability guarantees for deep networks, but their
antisymmetric-weight prescription only approximately preserves the norm
under Euler discretization.  Here, the Hermitian constraint guarantees norm
preservation at the continuous level, and the Cayley discretization in
Section~\ref{sec:cayley} will maintain it at the discrete level as well.
The practical consequence is that no gradient clipping, weight
normalization, or spectral regularization is needed to keep the state norm
stable.

We decompose the Hamiltonian into two terms with distinct roles:
\begin{equation}
\label{eq:hamiltonian_decomposition}
H(t) = H_0 + H_{\mathrm{int}}(t).
\end{equation}

\paragraph{The free Hamiltonian $H_0$.}
The first term is a diagonal matrix with learnable real entries:
\begin{equation}
H_0 = \operatorname{diag}(\lambda_0, \lambda_1, \ldots, \lambda_{N-1}), \qquad \lambda_j \in \mathbb{R}.
\end{equation}
When the interaction is absent ($H_{\mathrm{int}}(t) = 0$), each amplitude
evolves independently as $c_j(t) = c_j(0)\, e^{-i\lambda_j t}$, which is
a rotation in the complex plane at angular frequency $\lambda_j$.  The
parameters $\{\lambda_j\}$ define a learnable spectrum of timescales:
dimensions with large $|\lambda_j|$ oscillate rapidly, while dimensions
with small $|\lambda_j|$ change slowly.  Because $H_0$ is diagonal, it
induces no coupling between dimensions; its role is to establish the
baseline temporal structure that the interaction term will modulate.

The motivation for separating out $H_0$ is both conceptual and
computational.  Conceptually, the spectrum $\{\lambda_j\}$ provides the
model with a bank of oscillators at different frequencies, analogous to a
Fourier basis for temporal variation.  When the model is trained on
language data, the optimization can assign fast frequencies to dimensions
that track rapidly varying features (such as local syntactic constraints
that change with every token) and slow frequencies to dimensions that track
gradually varying features (such as topic or discourse structure that
persists over many sentences).  Whether the trained model actually
organizes its frequencies this way is an empirical question we address in
Section~\ref{sec:predictions}; the architecture provides the capacity for
this organization but does not enforce it.  Computationally, factoring out
$H_0$ enables the interaction-picture transformation
(Section~\ref{sec:interaction_picture}), which removes the known free
oscillations from the evolution operator and improves the numerical accuracy
of the Cayley discretization by reducing the effective operator norm that
governs the integration error.

This separation also connects to state-space models.  As noted in
Section~\ref{sec:related}, when $H_{\mathrm{int}}(t) = 0$ the
Schr\"{o}dinger equation reduces to $N$ decoupled oscillators
$\frac{d}{dt}c_j = -i\lambda_j c_j$, each evolving at a learned frequency.
This is structurally identical to the diagonalized form of
S4~\cite{gu2021efficiently}, except that in our framework, the complex
state variables are semantically load-bearing (i.e., their phases encode
relational information that the output rule will convert into token
probabilities) rather than an implementation artifact of a diagonalization.

\paragraph{The interaction Hamiltonian $H_{\mathrm{int}}(t)$.}
The second term depends on the input token observed at time $t$ and is
responsible for all coupling between latent dimensions and all input-driven
changes to the state trajectory.  We construct $H_{\mathrm{int}}(t)$ to be
Hermitian by design, regardless of the values produced by the underlying
neural network.  A network $g_\theta$ receives as input the concatenation
of the token embedding $\operatorname{Embed}(x_t) \in \mathbb{R}^d$ and
the current interaction-picture state represented as real and imaginary
parts $[\operatorname{Re}(c_I(t)),\, \operatorname{Im}(c_I(t))] \in
\mathbb{R}^{2N}$ (Section~\ref{sec:interaction_picture} defines this
representation; the interaction-picture amplitudes $c_I(t)$ differ from
the Schr\"{o}dinger-picture amplitudes $c(t)$ by the known phase factors
$e^{i\lambda_j t}$, which $g_\theta$ can absorb into its learned
parameters). The network $g_\theta$ is a standard feedforward neural
network, distinct from, and not to be confused with, the attention weight
matrices of Transformer models, that outputs two objects: a complex matrix
$\Phi(t) \in \mathbb{C}^{N \times r}$ and a real vector $\delta(t) \in
\mathbb{R}^N$.  The interaction Hamiltonian is then
\begin{equation}
\label{eq:interaction}
H_{\mathrm{int}}(t) = \Phi(t)\,\Phi(t)^\dagger + \operatorname{diag}(\delta(t)).
\end{equation}
This construction guarantees Hermiticity through two separate mechanisms.
The matrix $\Phi(t)\Phi(t)^\dagger$ is Hermitian because
$(\Phi\Phi^\dagger)^\dagger = \Phi\Phi^\dagger$ for any complex matrix
$\Phi$, and it is positive semidefinite with rank at most $r$, where $r
\ll N$ is a hyperparameter.  The diagonal matrix
$\operatorname{diag}(\delta(t))$ is Hermitian because it is real and
diagonal.  Their sum is therefore Hermitian for any output of $g_\theta$,
with no additional projection or correction step.

The rank parameter $r$ controls a tradeoff between expressivity and
efficiency.  An unrestricted $N \times N$ Hermitian matrix has $N^2$ real
degrees of freedom, which would be expensive for $g_\theta$ to produce at
each time step.  The low-rank factorization reduces this to $2Nr$ real
degrees of freedom (since $\Phi$ has $Nr$ complex entries, each with two
real components).  Each column $\phi_a(t) \in \mathbb{C}^N$ of $\Phi(t)$
specifies a coupling pattern across all $N$ dimensions: the rank-one matrix
$\phi_a \phi_a^\dagger$ has entry $[\phi_a]_j\, [\phi_a]_k^*$ at position
$(j,k)$, coupling dimensions $j$ and $k$ with a strength and phase
determined by the amplitudes and phases of $\phi_a$ at those positions.
The full off-diagonal interaction $\sum_{a=1}^{r} \phi_a \phi_a^\dagger$
is a superposition of $r$ such coupling patterns.  The diagonal term
$\operatorname{diag}(\delta(t))$ supplements this by allowing the network
to shift the effective oscillation frequency of individual dimensions in
response to the current token, without introducing additional off-diagonal
coupling.  Together, the two terms give $H_{\mathrm{int}}(t)$ the structure
needed to both redirect amplitude between dimensions (via off-diagonal
coupling) and modulate their oscillation rates (via diagonal shifts).

\paragraph{State-dependent Hamiltonians and nonlinearity.}
The fact that $g_\theta$ takes the current state as input, in addition to
the token embedding, is a deliberate design choice that determines the
model's expressivity class.  If $g_\theta$ depended only on the token
embedding, the Hamiltonian at each step would be a fixed function of the
input token, independent of the state.  The evolution over a full sequence
would then be a composition of token-determined unitary operators applied
to the initial state, a sequence-to-state map that is linear in the initial
state.  This would place the model in the same expressivity class as linear
state-space models such as S4 and Mamba~\cite{gu2023mamba}, whose
fundamental limitation (discussed in Section~\ref{sec:related}) is that
two tokens can only interact through accumulated products of linear
transition matrices.

By conditioning $g_\theta$ on the current state, the Hamiltonian at step
$t$ depends on the outcome of all previous steps, and the overall map from
input sequence to state trajectory becomes nonlinear.  The nonlinearity
resides entirely in the map from (token, state) pairs to Hamiltonians; the
evolution under any single fixed $H(t)$ remains a linear unitary map.
This separation is the mechanism by which the architecture achieves
nonlinear input-output behavior while preserving the norm-conservation
guarantee.  The situation is analogous to a time-varying linear system in
control theory: at each instant the dynamics are linear, but because the
transition operator changes at each step as a function of both the input
and the state, the composite behavior across steps is nonlinear.

This arrangement also contrasts with the input mechanism used by unitary
RNNs.  As discussed in Section~\ref{sec:related}, both Arjovsky et
al.~\cite{arjovsky2016unitary} and Wisdom et al.~\cite{wisdom2016full}
inject the input as an additive bias: $h_{t+1} = Wh_t + Vx_t$, where $W$
is a fixed unitary matrix.  The unitary part of the dynamics is independent
of the input, and the input only displaces the state after the rotation.
In our framework, the input token parameterizes the Hamiltonian
$H_{\mathrm{int}}(t)$, which is the generator of the rotation itself.  The
input thereby controls which dimensions couple and at what rate, rather
than adding a post-rotation offset.  For language modeling, this distinction
matters: when a disambiguating token like ``steep'' follows the prefix
``The bank was,'' it should not merely shift the state toward one
interpretation but should restructure the interference pattern among all
active interpretations.  An input-dependent Hamiltonian achieves this
because the relative phases accumulated under $H_{\mathrm{int}}(t)$
determine which components interfere constructively and which cancel in the
subsequent Born-rule readout.

\subsection{The Interaction Picture}
\label{sec:interaction_picture}

The free Hamiltonian $H_0$ produces phase rotations at rates $\lambda_j$
that persist regardless of the input.  These oscillations are known
analytically (i.e., the solution under $H_0$ alone is $c_j(t) =
c_j(0)\,e^{-i\lambda_j t}$), and they contribute to the operator norm of
the full Hamiltonian $H(t) = H_0 + H_{\mathrm{int}}(t)$ even when they
carry no information about the input-driven dynamics.  This matters for
numerical accuracy: the Cayley discretization introduced in
Section~\ref{sec:cayley} is a second-order integrator whose local
truncation error scales with the cube of the operator norm of the
Hamiltonian being integrated~\cite{hairer2006geometric}.  When some free
frequencies $\lambda_j$ are large, the norm $\|H(t)\|$ is dominated by
$H_0$ rather than by the input-driven interaction $H_{\mathrm{int}}(t)$,
and the integration error is correspondingly inflated.

The interaction picture is a change of variables, standard in quantum
mechanics~\cite{sakurai1994modern}, that factors out the free oscillations
analytically and leaves an evolution equation driven only by the
interaction.  By removing $H_0$ from the Hamiltonian that the integrator
must approximate, the interaction picture reduces the effective operator
norm to $\|H_{\mathrm{int}}(t)\|$, which depends only on the input-driven
coupling strength and is independent of the free frequencies.  This
improves the accuracy of the Cayley discretization without affecting its
unitarity guarantee, which holds unconditionally.

Define the interaction-picture state by
\begin{equation}
\label{eq:ip_state}
\ket{\psi(t)} = e^{-iH_0 t}\ket{\psi_I(t)},
\end{equation}
so that $\ket{\psi_I(t)} = e^{iH_0 t}\ket{\psi(t)}$ is the original state
with the free rotations undone.  Since $H_0$ is diagonal, this
transformation acts componentwise: the $j$-th amplitude in the interaction
picture is $[c_I(t)]_j = c_j(t)\, e^{i\lambda_j t}$, which removes the
free-oscillation factor from each component.  The forward pass of the
model operates on $c_I(t)$ throughout; the Schr\"{o}dinger-picture state
$c(t)$ can be recovered at any time via~\eqref{eq:ip_state} if needed, but
it is not maintained explicitly during inference.  As described in
Section~\ref{sec:hamiltonian}, the network $g_\theta$ receives the
interaction-picture amplitudes $c_I(t)$ as input; since $g_\theta$ is a
universal function approximator, it can learn to account for the known
phase relationship $c_j(t) = e^{-i\lambda_j t}\,[c_I(t)]_j$ between the
two representations.

To derive the evolution equation for $\ket{\psi_I(t)}$, substitute the
ansatz~\eqref{eq:ip_state} into the Schr\"{o}dinger
equation~\eqref{eq:schrodinger}.  The left-hand side becomes
\begin{equation}
i\frac{d}{dt}\left(e^{-iH_0 t}\ket{\psi_I}\right) = H_0 e^{-iH_0
t}\ket{\psi_I} + i\, e^{-iH_0 t}\frac{d}{dt}\ket{\psi_I},
\end{equation}
and the right-hand side is $(H_0 + H_{\mathrm{int}}(t))\,e^{-iH_0
t}\ket{\psi_I}$.  The $H_0$ terms cancel on both sides.  Left-multiplying
the remainder by $e^{iH_0 t}$ yields
\begin{equation}
\label{eq:ip_evolution}
i\frac{d}{dt}\ket{\psi_I(t)} = H_{\mathrm{int},I}(t)\ket{\psi_I(t)},
\end{equation}
where the interaction-picture Hamiltonian is
\begin{equation}
\label{eq:ip_hamiltonian}
H_{\mathrm{int},I}(t) = e^{iH_0 t}\, H_{\mathrm{int}}(t)\, e^{-iH_0 t}.
\end{equation}
Since $H_0$ is diagonal, this conjugation acts on individual matrix
entries as
\begin{equation}
\label{eq:ip_elements}
[H_{\mathrm{int},I}(t)]_{jk} = [H_{\mathrm{int}}(t)]_{jk}\, e^{i(\lambda_j - \lambda_k)t}.
\end{equation}
The diagonal entries ($j = k$) are unchanged, while each off-diagonal
entry acquires a time-dependent phase oscillating at the frequency
difference $\lambda_j - \lambda_k$ between the two dimensions it couples.

This frequency-dependent modulation has consequences for how latent
dimensions interact over multiple time steps.  Consider two dimensions $j$
and $k$ with similar natural frequencies ($\lambda_j \approx \lambda_k$).
The factor $e^{i(\lambda_j - \lambda_k)t}$ oscillates slowly in discrete
time, so the coupling $[H_{\mathrm{int}}(t)]_{jk}$ passes through to the
interaction picture with little modification across consecutive tokens.
Now consider two dimensions with very different frequencies ($|\lambda_j -
\lambda_k|$ large relative to $1/\Delta t$).  The rapidly oscillating
phase means that the effective coupling alternates in sign from one token
to the next, making it difficult for the optimization to learn consistent
amplitude transfer between these dimensions.  The result is an inductive
bias toward frequency-selective coupling: the optimization naturally favors
amplitude exchange between dimensions on similar timescales and penalizes
exchange between dimensions on very different timescales.  This is
analogous to the resonance phenomenon in coupled oscillators, where energy
transfers most efficiently between oscillators whose natural frequencies
are close.  We emphasize that this is a property of the architecture's
inductive bias over training, not a per-step dynamical effect: at any
single time step, the Cayley update applies the full
interaction-picture Hamiltonian $H_{\mathrm{int},I}(t)$ regardless of the
magnitude of the frequency differences.

Two properties of this transformation are important for the rest of the
construction.  First, it is an exact change of variables, not an
approximation.  The state $\ket{\psi_I(t)}$ contains the same information
as $\ket{\psi(t)}$ and can be recovered via~\eqref{eq:ip_state} at any
time.  Second, $\ket{\psi_I(t)}$ is constant whenever
$H_{\mathrm{int}}(t) = 0$: if no input arrives, the interaction-picture
state does not change.  This property is what makes the separation
computationally useful: the Cayley integrator in Section~\ref{sec:cayley}
operates on equation~\eqref{eq:ip_evolution}, whose right-hand side has
operator norm $\|H_{\mathrm{int},I}(t)\| = \|H_{\mathrm{int}}(t)\|$
(since unitary conjugation preserves the operator norm).  The integration
error therefore depends on the strength of the input-driven interaction,
not on the potentially much larger free frequencies $\{\lambda_j\}$.

\subsection{Cayley Discretization}
\label{sec:cayley}

The continuous evolution equation~\eqref{eq:ip_evolution} must be
converted to a discrete update for implementation on digital hardware.  The
choice of discretization is consequential because not all numerical
integrators preserve unitarity, and any norm drift that occurs at a single
step accumulates over the length of the sequence.

To illustrate the problem, consider forward Euler: replace
$i\frac{d}{dt}\ket{\psi_I} = H_{\mathrm{int},I}\ket{\psi_I}$ with
$\ket{\psi_{t+1}} = (I - i\Delta t\, H_{\mathrm{int},I}(t))\ket{\psi_t}$.
The update matrix $I - i\Delta t\, H_{\mathrm{int},I}$ is not unitary for
any nonzero $H_{\mathrm{int},I}$ and $\Delta t$.  To see this, note that
for an eigenvector of $H_{\mathrm{int},I}$ with eigenvalue $\mu$, the
update multiplies its norm by $|1 - i\Delta t\, \mu| = \sqrt{1 + \Delta
t^2 \mu^2} > 1$, so the state norm grows at each step.  Over a sequence of
$T$ tokens, this growth compounds, and for a language model processing
sequences of thousands of tokens the accumulated drift is large enough to
invalidate both the probabilistic interpretation of the state and the
Born-rule output.  Higher-order explicit methods (Runge--Kutta,
Adams--Bashforth) suffer from the same structural deficiency: they
approximate the unitary evolution operator with a polynomial in $\Delta t\,
H_{\mathrm{int},I}$, and no polynomial of finite degree other than the
trivial identity is unitary~\cite{hairer2006geometric}.

We instead discretize using the \emph{Cayley transform}, which arises from
the implicit midpoint rule, also known as the \textbf{Crank--Nicolson
scheme}~\cite{crank1947practical,hairer2006geometric,iserles2000lie}.  This
is the standard structure-preserving discretization for systems whose
continuous-time flow is unitary (or more generally symplectic), and it is
the canonical choice within the field of geometric numerical
integration~\cite{leimkuhler2004simulating}.  Evaluate the right-hand side
of~\eqref{eq:ip_evolution} at the midpoint $\frac{1}{2}(\ket{\psi_{t+1}} +
\ket{\psi_t})$ rather than at $\ket{\psi_t}$:
\begin{equation}
\label{eq:cayley_derivation}
\frac{\ket{\psi_{t+1}} - \ket{\psi_t}}{\Delta t} = -i\, H_{\mathrm{int},I}(t)\, \frac{\ket{\psi_{t+1}} + \ket{\psi_t}}{2}.
\end{equation}
Rearranging to collect $\ket{\psi_{t+1}}$ on the left:
\begin{equation}
\label{eq:cayley_update}
\left(I + \frac{i\Delta t}{2}\, H_{\mathrm{int},I}(t)\right)\ket{\psi_{t+1}} = \left(I - \frac{i\Delta t}{2}\, H_{\mathrm{int},I}(t)\right)\ket{\psi_t}.
\end{equation}
The discrete update operator is therefore
\begin{equation}
W(t) = \left(I + \frac{i\Delta t}{2}\, H_{\mathrm{int},I}(t)\right)^{-1}\left(I - \frac{i\Delta t}{2}\, H_{\mathrm{int},I}(t)\right),
\end{equation}
which is the Cayley transform of the skew-Hermitian matrix $K =
\frac{i\Delta t}{2}\, H_{\mathrm{int},I}(t)$.

\begin{proposition}[Unitarity of the Cayley update]
\label{prop:cayley_unitary}
If $H_{\mathrm{int},I}(t)$ is Hermitian, then $W(t)$ is unitary for every $\Delta t > 0$.
\end{proposition}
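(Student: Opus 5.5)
The plan is to write $A = \frac{\Delta t}{2} H_{\mathrm{int},I}(t)$, which is Hermitian because $H_{\mathrm{int},I}(t)$ is Hermitian and $\Delta t/2$ is real. In this notation $W = (I+iA)^{-1}(I-iA)$. The argument then has three steps: establish that $W$ is well defined, show that the two factors commute, and compute $W^\dagger W$ directly.

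\emph{Step 1 (invertibility).} This is the only point needing genuine care, because the claim is stated for every $\Delta t>0$.

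\begin{itemize}
\item Since $A$ is Hermitian, the spectral theorem gives $A = Q\Lambda Q^\dagger$ with $Q$ unitary and $\Lambda$ real diagonal.
\item Hence $I+iA = Q(I+i\Lambda)Q^\dagger$, whose eigenvalues $1+i\mu$ (with $\mu\in\mathbb{R}$) satisfy $|1+i\mu| = \sqrt{1+\mu^2}\ge 1$. So $I+iA$ is invertible regardless of the size of $\Delta t$.
\item Alternatively, $\|(I+iA)v\|^2 = \|v\|^2 + \|Av\|^2$, because the cross terms $i\langle v|Av\rangle - i\langle Av|v\rangle$ cancel for Hermitian $A$. This shows injectivity directly.
\end{itemize}

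\emph{Step 2 (commutation).} The matrices $I+iA$ and $I-iA$ are both polynomials in $A$, so they commute. Consequently $(I+iA)^{-1}$ also commutes with $I-iA$, and $W$ can be written in either order.

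\emph{Step 3 (unitarity).} Using $(I+iA)^\dagger = I - iA$, $(I-iA)^\dagger = I+iA$, and the fact that inversion commutes with the adjoint, we get
\[
W^\dagger = (I+iA)(I-iA)^{-1}.
\]
Therefore
\[
W^\dagger W = (I+iA)(I-iA)^{-1}(I+iA)^{-1}(I-iA).
\]
By Step 2, the middle factors $(I-iA)^{-1}$ and $(I+iA)^{-1}$ can be reordered, and every factor commutes with every other. The product collapses to $I$. Since $W$ is square, $W^\dagger W = I$ already implies $WW^\dagger = I$.

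As a cross-check, the diagonalization gives the same conclusion more transparently: $W = Q\,\mathrm{diag}\!\left(\frac{1-i\mu_j}{1+i\mu_j}\right)Q^\dagger$. Each diagonal entry is a ratio of a complex number to its conjugate and so has modulus one, which means $W$ is a unitary matrix with spectrum on the unit circle.

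The main obstacle is Step 1, invertibility uniformly in $\Delta t$. It is handled by the reality of the spectrum of $A$. The rest is routine algebra.
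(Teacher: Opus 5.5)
Your proof is correct and follows the paper's argument: the paper likewise sets $K = \frac{i\Delta t}{2}H_{\mathrm{int},I}(t)$ (your $iA$), uses that $I+K$ and $I-K$ commute, writes $W^\dagger = (I+K)(I-K)^{-1}$, and collapses $W^\dagger W$ to $I$. Your Step 1 additionally makes explicit that $I \pm iA$ is invertible for every $\Delta t$, which the paper's proof takes for granted.
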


\begin{proof}
Define $K = \frac{i\Delta t}{2}\, H_{\mathrm{int},I}(t)$.  Because
$H_{\mathrm{int},I}(t)$ is Hermitian, $K^\dagger = -\frac{i\Delta t}{2}\,
H_{\mathrm{int},I}(t) = -K$, so $K$ is skew-Hermitian.  The update
operator is $W = (I + K)^{-1}(I - K)$.  For the adjoint, $W^\dagger = (I
- K^\dagger)[(I + K^\dagger)]^{-1} = (I + K)(I - K)^{-1}$.  Now observe
that $(I + K)(I - K) = I - K^2 = (I - K)(I + K)$, so $(I + K)$ and $(I -
K)$ commute and therefore so do their inverses.  Computing the product:
\begin{align}
W^\dagger W &= (I + K)(I - K)^{-1}(I + K)^{-1}(I - K) \nonumber \\
&= (I + K)(I + K)^{-1}(I - K)^{-1}(I - K) = I. \qedhere
\end{align}
\end{proof}

The unitarity of $W(t)$ holds for every step size $\Delta t$ and every
spectrum of $H_{\mathrm{int},I}(t)$.  There is no stability condition
requiring $\Delta t$ to be small relative to the eigenvalues of
$H_{\mathrm{int},I}$ for the purpose of norm preservation: the state norm
is exactly $1$ after every step, regardless of the Hamiltonian's magnitude.
After $T$ steps, $\|\psi_T\| = \|\psi_0\| = 1$ up to floating-point
rounding errors, which we correct with a single renormalization as a
numerical safeguard rather than an algorithmic necessity.

\paragraph{Unitarity versus accuracy.}
It is important to distinguish the unconditional norm-preservation guarantee
from the question of integration accuracy.  The Cayley transform is a
second-order integrator: its local truncation error (the difference between
the Cayley update and the exact solution of~\eqref{eq:ip_evolution} over
one step) is $O(\Delta t^3 \|H_{\mathrm{int},I}\|^3)$~\cite{hairer2006geometric}.
The norm of the error is controlled by $\|H_{\mathrm{int},I}\|$, which is
why the interaction picture (Section~\ref{sec:interaction_picture}) is
beneficial: it removes the free-frequency contribution $\|H_0\|$ from the
operator whose norm governs the error.  However, the error is nonzero for
any $\Delta t > 0$ and any nonzero $H_{\mathrm{int},I}$.  Over $T$ steps,
phase errors accumulate: the discrete trajectory
$\{\ket{\psi_I(t)}\}_{t=0}^T$ deviates from the exact continuous-time
trajectory by an amount that grows with $T$.  This phase drift does not
violate any property used by the model.  The Born-rule output at each step
depends on the current discrete state, not on its fidelity to the
continuous-time solution; the training objective
(Section~\ref{sec:init_training}) optimizes the parameters of the discrete
model directly.  The continuous-time Schr\"{o}dinger
equation~\eqref{eq:schrodinger} serves as the architectural motivation for
the update rule, not as a target that the discrete model must track.  The
guarantees that carry over exactly from continuous to discrete time are norm
preservation (Proposition~\ref{prop:cayley_unitary}), the validity of the
Born-rule output, and the conservation properties of the probability
currents (Section~\ref{sec:currents}).  Trajectory accuracy is a separate
property that depends on $\Delta t$ and $\|H_{\mathrm{int},I}\|$.

It is worth noting that Wisdom et al.~\cite{wisdom2016full} also use the
Cayley transform in their Full-Capacity Unitary RNN, but in a different
role: they use it to parameterize a single fixed unitary transition matrix
from a learnable skew-Hermitian matrix.  In our framework, the Cayley
transform is applied at each time step to a different interaction-picture
Hamiltonian $H_{\mathrm{int},I}(t)$, producing a time-varying sequence of
unitary updates rather than a single fixed unitary matrix.  Both uses
exploit the same algebraic fact, namely that the Cayley transform maps
skew-Hermitian matrices to unitary matrices, but ours applies it as a
discretization of continuous dynamics (in the spirit of the Crank--Nicolson
method~\cite{crank1947practical}) rather than as a static parameterization.

In implementation, we solve the linear system~\eqref{eq:cayley_update} at
each step using a batched direct solver rather than explicitly forming the
inverse $(I + \frac{i\Delta t}{2}\, H_{\mathrm{int},I})^{-1}$.  The
low-rank-plus-diagonal structure of $H_{\mathrm{int},I}$ (inherited from
the construction in equation~\eqref{eq:interaction}) can be exploited via
the Woodbury matrix identity to reduce the cost from $O(N^3)$ to $O(Nr^2)$
per step.  The Woodbury factorization involves inverting an $r \times r$
matrix $G = I_r + \frac{i\Delta t}{2}\,\tilde{\Phi}^\dagger
D^{-1}\tilde{\Phi}$, where $D$ is the diagonal part of the coefficient
matrix (Section~\ref{sec:complexity}).  The conditioning of $G$ depends on
the spectrum of $\tilde{\Phi}^\dagger D^{-1}\tilde{\Phi}$, which in turn
depends on the current interaction Hamiltonian.  For typical Hamiltonian
magnitudes encountered during training, $G$ is well conditioned because
$\|H_{\mathrm{int},I}\|$ is moderate and $D$ is close to the identity.
If training drives $\|H_{\mathrm{int},I}\|$ to large values, the
conditioning of $G$ can degrade; monitoring the condition number of $G$
during training provides a diagnostic for this scenario.

\subsection{Born-Rule Decoding}
\label{sec:born_rule}

At each time step, the model must produce a probability distribution over a
vocabulary of $V$ tokens.  The standard approach in neural language models
is to compute a linear projection of the hidden state followed by softmax
normalization.  Section~\ref{sec:related} discussed a limitation of this
approach identified by Yang et al.~\cite{yang2018breaking}: when the hidden
dimension $N$ is smaller than the vocabulary size $V$, the matrix of
log-probabilities across all contexts has rank at most $N$, creating a
bottleneck that prevents the model from representing certain distributions
regardless of its parameter values.  We now define an alternative decoding
mechanism whose algebraic structure differs from the softmax readout in a
way that is directly relevant to this bottleneck and to the interference
mechanism described throughout the paper.

We associate with each vocabulary item $k \in \{0, 1, \ldots, V-1\}$ a
learnable measurement vector $\ket{m_k} \in \mathbb{C}^N$ and define the
probability of token $k$ as
\begin{equation}
\label{eq:born_rule}
p(k \mid \psi(t)) = \left|\langle m_k | \psi(t) \rangle\right|^2.
\end{equation}
This is a quadratic output rule: the probability depends on the squared
modulus of a complex inner product between the state and a learned vector.
The same mathematical form appears in quantum mechanics as the Born rule,
and we adopt that name to make the connection to the quantum cognition
literature (Section~\ref{sec:quantum_cognition}) explicit, while
emphasizing that the justification for using this output rule in our
framework is algebraic (quadratic access to phase information), not
physical.  The inner product $\langle m_k | \psi(t) \rangle =
\sum_{j=0}^{N-1} [m_k]_j^*\, c_j(t)$ is a complex number whose squared
modulus gives the output probability.  For these probabilities to form a
valid distribution over the vocabulary, they must sum to one.  Collecting
the measurement vectors as columns of a matrix $M = [\ket{m_0}, \ldots,
\ket{m_{V-1}}] \in \mathbb{C}^{N \times V}$, the condition $\sum_k
p(k|\psi) = 1$ for all unit-norm $\ket{\psi}$ is equivalent to
\begin{equation}
\label{eq:resolution}
\sum_{k=0}^{V-1} \ket{m_k}\bra{m_k} = MM^\dagger = I_N.
\end{equation}
This is the resolution-of-identity condition: the outer products of the
measurement vectors must sum to the identity on $\mathbb{C}^N$.  When $V
\geq N$ (which is always the case in language modeling, where typical
vocabularies contain tens of thousands of tokens while latent dimensions
are in the hundreds), this requires $M$ to have orthonormal rows.

\paragraph{Enforcing the resolution of identity.}
In implementation, we store $M$ as a learnable complex matrix and enforce
the row-orthonormality constraint $MM^\dagger = I_N$ via QR decomposition
at each forward pass: we compute the thin QR factorization of $M^\dagger
\in \mathbb{C}^{V \times N}$ and replace $M^\dagger$ with the $Q$ factor,
whose columns are orthonormal.  This is a projection onto the Stiefel
manifold $\mathrm{St}(N, V) = \{X \in \mathbb{C}^{V \times N} : X^\dagger
X = I_N\}$~\cite{absil2008optimization}.  Gradients of the loss with
respect to the pre-projection parameter matrix flow through the QR
decomposition via the chain rule; modern automatic differentiation
frameworks support differentiation through QR~\cite{townsend2016pymanopt}.
The resulting optimization trajectory is not identical to Riemannian
gradient descent on the Stiefel manifold~\cite{absil2008optimization}
(which would require a retraction and a metric correction), but it is a
well-studied approximation that has been used successfully in prior work on
orthogonal and unitary parameterizations~\cite{wisdom2016full}.  An
alternative approach would be to parameterize $M$ directly via a product
of Householder reflections or Givens rotations, which would eliminate the
need for the QR projection; we use QR for simplicity and defer the
comparison of manifold optimization strategies to future work.

To verify that the Born rule produces valid probabilities under this
constraint, compute
\begin{equation}
\sum_{k} p(k|\psi) = \sum_k \langle \psi | m_k \rangle \langle m_k |
\psi \rangle = \langle \psi | \left(\sum_k |m_k\rangle\langle m_k|\right)
| \psi \rangle = \langle \psi | I_N | \psi \rangle = \|\psi\|^2 = 1,
\end{equation}
where the last step uses the unit-norm property of the state.  This
derivation makes explicit the interdependence between the components of the
architecture: the Born rule produces valid probabilities \emph{because} the
Cayley discretization preserves the unit norm that the Hermitian Hamiltonian
conserves.  If the discretization introduced even a small norm drift at
each step, the output probabilities would no longer sum to one and a
corrective normalization would be needed, defeating the purpose of the
construction.

\paragraph{Algebraic structure of the output.}
The algebraic structure of the Born-rule output differs from the softmax
readout in a way that bears directly on the expressivity of the model.  In
a real-valued model with hidden state $h \in \mathbb{R}^N$ and output
weight matrix $W_o \in \mathbb{R}^{V \times N}$, the log-probability of
token $k$ is $\log p(k) = w_k^\top h - \log Z$, which depends on the
state only through the linear projection $w_k^\top h$.  The matrix of
these projections across all possible states and tokens has rank at most
$N$~\cite{yang2018breaking}.  Under the Born rule, the probability is
\begin{equation}
p(k|\psi) = \left|\sum_{j} [m_k]_j^*\, c_j\right|^2 = \sum_{j}\sum_{j'} [m_k]_j^*\, [m_k]_{j'}\, c_j^*\, c_{j'}.
\end{equation}
This is a quadratic form in the amplitudes, involving both the $N$ diagonal
terms $|[m_k]_j|^2 |c_j|^2$ (which depend only on magnitudes) and the
$\binom{N}{2}$ off-diagonal cross terms (which depend on relative phases).
Each cross term has the form
\begin{equation}
[m_k]_j^*\, [m_k]_{j'}\, c_j^*\, c_{j'} = |[m_k]_j|\,|[m_k]_{j'}|\, r_j\, r_{j'}\, e^{i(\alpha_{j'} - \alpha_j + \theta_{j'} - \theta_j)},
\end{equation}
where $\alpha_j = \arg[m_k]_j$ and $\theta_j = \arg c_j$.  The real part
of this expression contributes to $p(k|\psi)$, and its sign depends on the
combined phase $\alpha_{j'} - \alpha_j + \theta_{j'} - \theta_j$.  Two
states with identical magnitudes but different relative phases therefore
produce different token probabilities, an impossibility under a linear
readout of a real-valued state.  The quadratic structure means the
Born-rule output depends on $N^2$ features of the state (the $N$ squared
magnitudes and $N(N-1)$ real parameters in the off-diagonal cross terms),
compared to the $N$ features accessible to a linear readout.  The formal
version of this observation is Lemma~\ref{lem:quadratic_lifting} in
Section~\ref{sec:separation}, and the resulting dimensional separation is
Theorem~\ref{thm:separation}.  We note that this comparison is specific to
the affine-softmax readout used by standard architectures; a real-valued
model equipped with a polynomial readout of degree two or higher could
access a larger feature space, though such readouts are not standard in the
architectures surveyed in Section~\ref{sec:related}.

This quadratic structure is where the interference mechanism becomes
concrete at the output level.  The cross terms can contribute positively or
negatively to the probability of token $k$, depending on the phase
alignment.  When the Hamiltonian evolution causes the phases $\theta_j$ to
rotate such that previously constructive cross terms become destructive for
a given token $k$, the probability $p(k|\psi)$ decreases, not because a
gate has been driven to saturation, but because the amplitudes are now
interfering against that token.  Conversely, when the phases align
constructively for a different token $k'$, its probability increases.  This
is the Born-rule realization of the disambiguation process described in the
Introduction: the model suppresses one interpretation and reinforces another
by rotating phases in its latent space, and the quadratic output converts
these rotations into probability changes.

The connection to the quantum cognition literature
(Section~\ref{sec:quantum_cognition}) is also direct: the Born
rule~\eqref{eq:born_rule} is the same squared-amplitude measurement
postulate that Busemeyer and Bruza~\cite{busemeyer2012quantum} use to
explain interference effects in human judgment.  In their framework, the
measurement vectors correspond to possible responses in a psychological
experiment and the state encodes the subject's cognitive disposition; in
ours, the measurement vectors correspond to vocabulary tokens and the state
encodes the model's belief about the next token given the preceding context.
Whether the structural similarity between these two settings reflects a
deeper connection between language statistics and cognitive interference
patterns, or is merely a mathematical coincidence, is an empirical question
that the present theoretical framework cannot answer.

\subsection{Initialization and Training Objective}
\label{sec:init_training}

The initial state $\ket{\psi(0)}$ must be a unit-norm vector in
$\mathbb{C}^N$ defined independently of the input sequence, since no tokens
have been observed at $t = 0$.  We parameterize it using two real-valued
vectors $\mathbf{a}, \mathbf{b} \in \mathbb{R}^N$ as
\begin{equation}
\label{eq:init}
\ket{\psi(0)} = \frac{\mathbf{a} + i\mathbf{b}}{\|\mathbf{a} + i\mathbf{b}\|}.
\end{equation}
The division by the norm ensures $\|\psi(0)\| = 1$ for any values of
$\mathbf{a}$ and $\mathbf{b}$, so standard gradient-based
optimization~\cite{kingma2015adam} can update these parameters freely
without risking an invalid initial state.  The parameters are shared across
all training sequences and optimized jointly with the rest of the model.
After normalization, the magnitudes $|a_j + ib_j| / \|\mathbf{a} + i\mathbf{b}\|$
determine how much initial weight the model places on each latent dimension,
and the phases $\arg(a_j + ib_j)$ set the initial phase relationships
between dimensions before any tokens are processed.  Together, these
constitute a learned prior: the optimization discovers an initial
distribution of amplitude and an initial pattern of phase relationships
that provide a useful starting configuration for processing arbitrary input
sequences.

The training objective is the standard negative log-likelihood for
autoregressive language modeling, computed using the Born-rule
probabilities.  Given a sequence $x_0, x_1, \ldots, x_{T-1}$ with targets
$y_t = x_{t+1}$, the loss is
\begin{equation}
\label{eq:loss}
\mathcal{L} = -\sum_{t=0}^{T-1} \log p(y_t \mid \psi(t)) = -\sum_{t=0}^{T-1} \log \left|\langle m_{y_t} | \psi(t) \rangle\right|^2.
\end{equation}
This objective drives the model to adjust its parameters so that, at each
time step, the state $\ket{\psi(t)}$ has large overlap (in the Born-rule
sense) with the measurement vector $\ket{m_{y_t}}$ corresponding to the
correct next token.  Minimizing~\eqref{eq:loss} simultaneously shapes the
initial state (determining the starting point of the trajectory), the free
frequencies (determining the baseline timescale structure), the Hamiltonian
generator (determining how inputs steer the evolution), and the measurement
vectors (determining which state configurations correspond to which tokens).
The full set of learnable parameters is: the initial-state vectors
$\mathbf{a}$ and $\mathbf{b}$, the free frequencies $\{\lambda_j\}_{j=0}^{N-1}$,
the measurement matrix $M$, and the weights $\theta$ of the neural network
$g_\theta$.

Gradients of $\mathcal{L}$ with respect to all parameters are computed by
backpropagation through the sequence of Cayley integration steps.  Each
step involves solving the linear system~\eqref{eq:cayley_update}, and the
backward pass differentiates through this solve using implicit
differentiation: given a system $Ax = b$, the gradient with respect to any
parameter entering $A$ or $b$ requires solving one additional linear system
with the coefficient matrix $A^\dagger$.  This avoids forming or storing
the matrix inverse and is supported natively by modern automatic
differentiation frameworks.  The computational cost per time step is
dominated by the linear solve, which scales as $O(Nr^2)$ when the Woodbury
identity exploits the low-rank structure of $H_{\mathrm{int},I}$.

\paragraph{Gradient propagation through the state versus through the parameters.}
The unitarity of each Cayley step has a direct consequence for gradient
propagation through the hidden state.  The Jacobian of the Cayley update
$\ket{\psi_{t+1}} = W(t)\ket{\psi_t}$ with respect to the input state
$\ket{\psi_t}$ is the unitary matrix $W(t)$ itself.  The gradient of the
loss with respect to $\ket{\psi_t}$ is obtained by applying $W(t)^\dagger
= W(t)^{-1}$, which preserves the gradient's norm.  Over $T - t$ steps,
the gradient passes through $T - t$ unitary matrices, and its norm remains
unchanged.  This prevents the vanishing and exploding gradient problem for
the \emph{state gradient pathway}~\cite{vorontsov2017orthogonality},
providing the same benefit as the unitary RNNs of Arjovsky et
al.~\cite{arjovsky2016unitary} and Wisdom et al.~\cite{wisdom2016full}.

However, the gradient of the loss with respect to the \emph{parameters} of
$g_\theta$ at step $t$ involves an additional pathway: the gradient flows
from $\ket{\psi_{t+1}}$ through $W(t)$, into $H_{\mathrm{int},I}(t)$, and
then through the neural network $g_\theta$.  The network $g_\theta$ is an
unconstrained feedforward network with no norm-preservation guarantee.
Gradients passing through $g_\theta$ at each step can grow or shrink
depending on the condition of $g_\theta$'s Jacobian, and this growth or
shrinkage is not mitigated by the unitarity of the state dynamics.  The
unitary guarantee therefore addresses one source of gradient instability
(the recurrent state pathway) but not all sources (the parameter pathway
through $g_\theta$).  Standard techniques for stabilizing feedforward
networks, such as careful initialization, normalization within $g_\theta$,
and learning rate scheduling~\cite{kingma2015adam}, remain relevant for
the parameter gradients.

Therefore, the complete forward pass is as follows: at $t = 0$, the state
is initialized via~\eqref{eq:init}.  At each subsequent step $t$, the
network $g_\theta$ generates the interaction Hamiltonian
$H_{\mathrm{int}}(t)$ from the current token embedding and current
interaction-picture state; the interaction-picture Hamiltonian
$H_{\mathrm{int},I}(t)$ is computed via the
conjugation~\eqref{eq:ip_hamiltonian}; the Cayley
update~\eqref{eq:cayley_update} advances the interaction-picture state by
one step; and the Born rule~\eqref{eq:born_rule} produces a distribution
over the vocabulary from which the loss is computed.  The
loss~\eqref{eq:loss} aggregates the log-probabilities across all time
steps, and its gradients flow backward through the Cayley solves, through
the Hamiltonian generator $g_\theta$, and into all learnable parameters.
The entire pipeline is differentiable end-to-end, with the unitarity of
each Cayley step ensuring that gradients propagating along the state
pathway neither vanish nor explode, while gradients through the parameter
pathway are subject to the same considerations as in any deep feedforward
network.

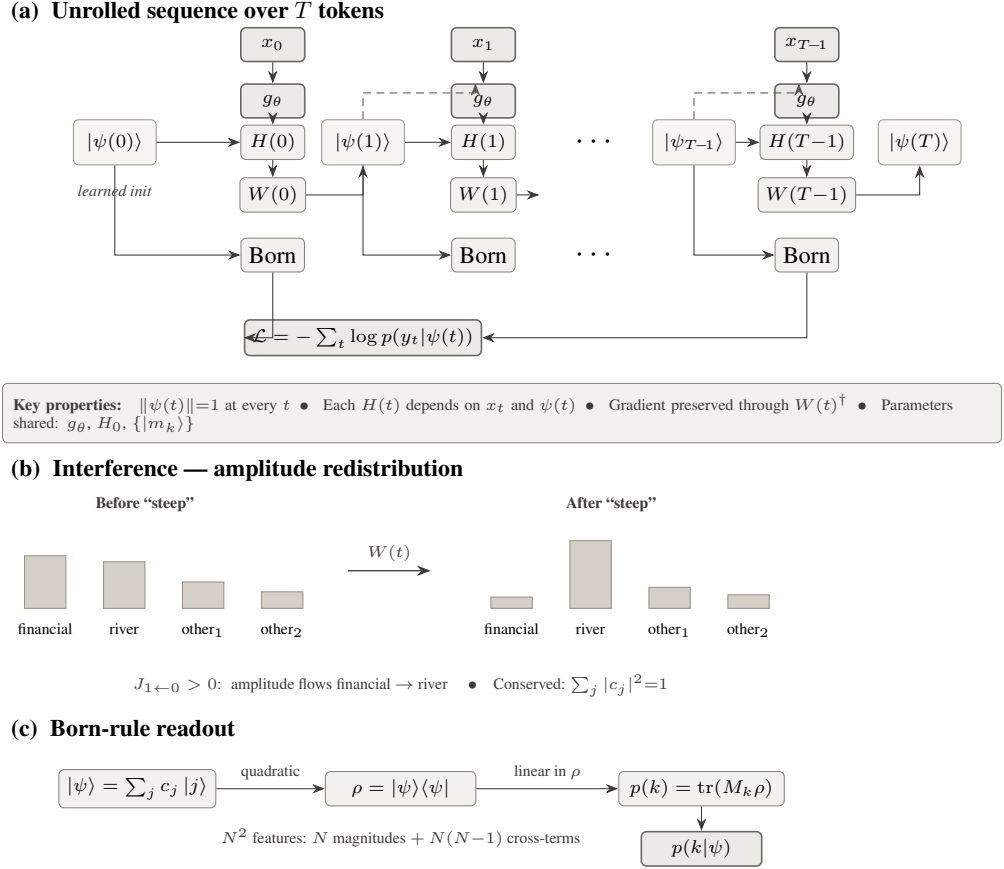
\begin{figure}[h!]
\centering
\begin{tikzpicture}[>=Stealth, font=\scriptsize,
  box/.style={draw, rectangle, rounded corners=2pt, minimum height=0.45cm,
              inner sep=3pt, align=center},
  op/.style={box, fill=figTeal, draw=figTealBorder},
  nn/.style={box, fill=figBlue, draw=figBlueBorder, line width=0.6pt},
  st/.style={box, fill=figState, draw=figStateBorder, minimum width=1.1cm, minimum height=0.6cm},
  lbl/.style={font=\tiny, text=figSlate},
  flow/.style={->, >=Stealth, line width=0.4pt, figSlate},
  gflow/.style={->, >=Stealth, densely dashed, line width=0.4pt, figAccent},
]

\node[font=\footnotesize\bfseries, anchor=north west] at (-4.8,2.0)
  {(a)\; Unrolled sequence over $T$ tokens};

\def\dx{2.8}
\def\Lshift{-1.5}

\pgfmathsetmacro{\psiZx}{-1.8+\Lshift}
\node[st] (psi0) at (\psiZx,0) {$\ket{\psi(0)}$};
\node[lbl, anchor=north] at (\psiZx,-0.45) {\emph{learned init}};

\pgfmathsetmacro{\sZx}{0.3+\Lshift}
\node[nn, minimum width=0.85cm] (x0) at (\sZx,1.3) {$x_0$};
\node[nn, minimum width=0.85cm] (g0) at (\sZx,0.55) {$g_\theta$};
\node[op, minimum width=0.85cm] (H00) at (\sZx,0) {$H(0)$};
\node[op, minimum width=0.85cm] (W0) at (\sZx,-0.7) {$W(0)$};
\draw[flow] (x0)--(g0); \draw[flow] (g0)--(H00);
\draw[flow] (H00)--(W0); \draw[flow] (psi0)--(H00);
\pgfmathsetmacro{\psiAx}{1.5+\Lshift}
\node[st] (psi1) at (\psiAx,0) {$\ket{\psi(1)}$};
\draw[flow] (W0.east) -| (psi1.south);
\node[op, minimum width=0.75cm] (b0) at (\sZx,-1.5) {\footnotesize Born};
\draw[flow] (psi0.south) |- (b0.west);

\pgfmathsetmacro{\xA}{\sZx+\dx}
\node[nn, minimum width=0.85cm] (x1) at (\xA,1.3) {$x_1$};
\node[nn, minimum width=0.85cm] (g1) at (\xA,0.55) {$g_\theta$};
\node[op, minimum width=0.85cm] (H01) at (\xA,0) {$H(1)$};
\node[op, minimum width=0.85cm] (W1) at (\xA,-0.7) {$W(1)$};
\draw[flow] (x1)--(g1); \draw[flow] (g1)--(H01);
\draw[flow] (H01)--(W1); \draw[flow] (psi1)--(H01);
\draw[flow] (W1.east) -- ++(0.3,0);
\draw[gflow] (psi1.north) -- ++(0,0.35) -| ([xshift=-3pt]g1.north);
\node[op, minimum width=0.75cm] (b1) at (\xA,-1.5) {\footnotesize Born};
\draw[flow] (psi1.south) |- (b1.west);

\pgfmathsetmacro{\xD}{\xA+1.5}
\node[font=\large] at (\xD,0) {$\cdots$};
\node[font=\large] at (\xD,-1.5) {$\cdots$};

\pgfmathsetmacro{\xB}{\xD+2.8}
\pgfmathsetmacro{\psiTx}{\xB-1.5}
\node[st] (psiTm) at (\psiTx,0) {$\ket{\psi_{T\!-\!1}}$};
\node[nn, minimum width=0.85cm] (xT) at (\xB,1.3) {$x_{T\!-\!1}$};
\node[nn, minimum width=0.85cm] (gT) at (\xB,0.55) {$g_\theta$};
\node[op, minimum width=0.85cm] (H0T) at (\xB,0) {$H(T{-}1)$};
\node[op, minimum width=0.85cm] (WT) at (\xB,-0.7) {$W(T{-}1)$};
\draw[flow] (xT)--(gT); \draw[flow] (gT)--(H0T);
\draw[flow] (H0T)--(WT); \draw[flow] (psiTm)--(H0T);
\pgfmathsetmacro{\psiFx}{\xB+1.5}
\node[st] (psiF) at (\psiFx,0) {$\ket{\psi(T)}$};
\draw[flow] (WT.east) -| (psiF.south);
\draw[gflow] (psiTm.north) -- ++(0,0.35) -| ([xshift=-3pt]gT.north);
\node[op, minimum width=0.75cm] (bT) at (\xB,-1.5) {\footnotesize Born};
\draw[flow] (psiTm.south) |- (bT.west);

\pgfmathsetmacro{\lossX}{1.5+\Lshift}
\node[nn, minimum width=2.8cm] (losstot) at (\lossX,-2.6)
  {$\mathcal{L} = -\sum_t \log p(y_t|\psi(t))$};
\draw[flow] (b0.south) |- (losstot.west);
\draw[flow] (bT.south) |- (losstot.east);

\node[draw=figTealBorder, fill=figTeal, rounded corners=2pt, inner sep=4pt,
      lbl, anchor=north west, text width=13cm, align=left] at (-4.8,-3.2) {%
  \textbf{Key properties:}\;\;
  $\|\psi(t)\|{=}1$ at every $t$\;\;$\bullet$\;\;
  Each $H(t)$ depends on $x_t$ and $\psi(t)$\;\;$\bullet$\;\;
  Gradient preserved through $W(t)^\dagger$\;\;$\bullet$\;\;
  Parameters shared: $g_\theta$, $H_0$, $\{|m_k\rangle\}$};

\node[font=\footnotesize\bfseries, anchor=north west] at (-4.8,-4.1)
  {(b)\; Interference --- amplitude redistribution};

\node[lbl, font=\tiny\bfseries, anchor=north] at (-2.9,-4.6)
  {Before ``steep''};

\def\ybar{-6.2}
\foreach \j/\h/\lab in {0/0.7/{financial}, 1/0.62/{river}, 2/0.35/{other$_1$}, 3/0.22/{other$_2$}} {
  \pgfmathsetmacro{\xb}{-4.5 + \j*1.05}
  \fill[figBar] (\xb,\ybar) rectangle (\xb+0.55,\ybar+\h);
  \draw[figBarBorder] (\xb,\ybar) rectangle (\xb+0.55,\ybar+\h);
  \node[font=\tiny, anchor=north] at (\xb+0.275,\ybar-0.08) {\lab};
}

\draw[->, >=Stealth, line width=0.5pt, figSlate]
  (-0.2,-5.7) -- node[above, font=\tiny] {$W(t)$} (0.9,-5.7);

\node[lbl, font=\tiny\bfseries, anchor=north] at (3.3,-4.6)
  {After ``steep''};

\foreach \j/\h/\lab in {0/0.15/{financial}, 1/0.9/{river}, 2/0.28/{other$_1$}, 3/0.18/{other$_2$}} {
  \pgfmathsetmacro{\xb}{1.7 + \j*1.05}
  \fill[figBar] (\xb,\ybar) rectangle (\xb+0.55,\ybar+\h);
  \draw[figBarBorder] (\xb,\ybar) rectangle (\xb+0.55,\ybar+\h);
  \node[font=\tiny, anchor=north] at (\xb+0.275,\ybar-0.08) {\lab};
}

\node[lbl, anchor=north, align=center] at (0.5,-6.95) {%
  $J_{1\leftarrow 0} > 0$:\; amplitude flows financial $\to$ river
  \quad$\bullet$\quad
  Conserved: $\sum_j |c_j|^2 {=} 1$};

\node[font=\footnotesize\bfseries, anchor=north west] at (-4.8,-7.55)
  {(c)\; Born-rule readout};

\def\yc{-8.6}
\node[op, minimum width=2.0cm] (psiR) at (-3.0,\yc) {$\ket{\psi} = \sum_j c_j\ket{j}$};

\node[op, minimum width=2.0cm] (rhoR) at (0.5,\yc) {$\rho = \ket{\psi}\!\bra{\psi}$};
\draw[flow] (psiR) -- node[above, lbl] {quadratic} (rhoR);

\node[op, minimum width=2.2cm] (prob) at (4.5,\yc) {$p(k) = \operatorname{tr}(M_k \rho)$};
\draw[flow] (rhoR) -- node[above, lbl] {linear in $\rho$} (prob);

\node[lbl, anchor=north] at (0.5,\yc-0.4) {$N^2$ features: $N$ magnitudes $+$ $N(N{-}1)$ cross-terms};

\node[nn, minimum width=1.6cm] (out) at (4.5,\yc-0.8) {$p(k|\psi)$};
\draw[flow] (prob) -- (out);

\end{tikzpicture}
\caption{Multi-scale view of the quantum sequence model.  \textbf{(a)}~The model unrolled over $T$ time steps.  At each step the same network $g_\theta$ generates a token- and state-dependent Hamiltonian; the Cayley update advances the state on the complex unit sphere; and the Born rule reads out token probabilities.  The loss aggregates log-probabilities across all steps.  \textbf{(b)}~Illustration of the interference mechanism.  Processing the disambiguating token ``steep'' after the prefix ``The bank was'' causes probability to flow from the \emph{financial} interpretation to the \emph{river} interpretation via conserved, antisymmetric probability currents $J_{j\leftarrow k}$.  \textbf{(c)}~Detail of the Born-rule readout.  The quadratic map $\psi\mapsto\psi\psi^\dagger$ lifts the $N$-dimensional complex state into the $N^2$-dimensional space of Hermitian matrices, exposing both magnitude and phase cross-terms to the linear measurement $\operatorname{tr}(M_k\rho)$.}
\label{fig:multistep}
\end{figure}

\section{Conserved Probability Currents}
\label{sec:currents}

The derivation in this section mirrors the standard quantum-mechanical
treatment of probability current in the Schr\"{o}dinger
equation~\cite{sakurai1994modern,griffiths2005introduction}.  Our
contribution is threefold: we interpret the resulting currents as a
diagnostic for \emph{latent probability flow} in a trained language model,
we show how the Hamiltonian decomposition $H = H_0 + H_{\mathrm{int}}$
isolates the input-driven component of each current, and we derive an exact
discrete-time variant (Section~\ref{sec:current_discrete}) consistent with
the Cayley update.

Section~\ref{sec:model} established that the Hermitian constraint on $H(t)$
guarantees conservation of the total probability $\sum_j |c_j(t)|^2 = 1$
at every time step.  This global conservation law ensures that the
Born-rule output (Section~\ref{sec:born_rule}) produces a valid probability
distribution, but it does not describe \emph{how} the model redistributes
amplitude among its latent dimensions as it processes each token.  Two
models that both conserve total probability may differ entirely in the
internal pattern of amplitude transfer they execute at each step.  This
section derives a finer-grained conservation law, a continuity equation
with explicit pairwise flux terms, that resolves the internal flow of
probability.  The derivation proceeds in continuous time, using the
Schr\"{o}dinger equation~\eqref{eq:schrodinger} that motivates the
architecture, and produces probability currents whose structural properties
(antisymmetry, conservation) follow algebraically from the Hermiticity of
$H(t)$.  Because the implemented model operates in discrete time via the
Cayley update (Section~\ref{sec:cayley}), the continuous-time currents are
approximations to the actual discrete probability changes;
Section~\ref{sec:current_discrete} quantifies this approximation and
defines exact discrete currents that inherit the same structural properties.

\subsection{Derivation of the Continuity Equation}
\label{sec:continuity}

Define the occupation probability of latent dimension $j$ at time $t$ as
\begin{equation}
\label{eq:occupation}
p_j(t) = |c_j(t)|^2,
\end{equation}
where $c_j(t) = \langle j | \psi(t) \rangle$ is the $j$-th amplitude of
the state vector.  The normalization constraint $\sum_j p_j(t) = 1$ ensures
that $\{p_j(t)\}_{j=0}^{N-1}$ forms a probability distribution over the
$N$ latent dimensions at each time $t$.  We now derive the equation
governing how this distribution evolves.

The Schr\"{o}dinger equation~\eqref{eq:schrodinger} in component form reads
\begin{equation}
\label{eq:component_schrodinger}
i\,\dot{c}_j(t) = \sum_{k=0}^{N-1} H_{jk}(t)\, c_k(t),
\end{equation}
so that $\dot{c}_j = -i \sum_k H_{jk}\, c_k$.  Taking the complex
conjugate and applying the Hermiticity condition $H_{jk}^* = H_{kj}$
yields
\begin{equation}
\label{eq:conjugate_schrodinger}
\dot{c}_j^*(t) = i \sum_{k=0}^{N-1} H_{kj}(t)\, c_k^*(t).
\end{equation}
Differentiating the occupation probability~\eqref{eq:occupation} via the
product rule:
\begin{align}
\dot{p}_j &= \dot{c}_j^*\, c_j + c_j^*\, \dot{c}_j \nonumber \\[4pt]
&= \left(i \sum_k H_{kj}\, c_k^*\right) c_j \;+\; c_j^*\left(-i \sum_k H_{jk}\, c_k\right) \nonumber \\[4pt]
&= \sum_{k=0}^{N-1} \Big(i\, H_{kj}\, c_k^*\, c_j \;-\; i\, H_{jk}\, c_j^*\, c_k\Big).
\label{eq:dpj_raw}
\end{align}
Each summand can be simplified by recognizing its algebraic structure.
Using $H_{kj} = H_{jk}^*$ and the commutativity of scalar multiplication,
the first term in each summand satisfies $H_{kj}\, c_k^*\, c_j =
H_{jk}^*\, c_k^*\, c_j = (H_{jk}\, c_j^*\, c_k)^*$.  Setting $w =
H_{jk}\, c_j^*\, c_k$, the summand becomes $i(w^* - w)$.  The identity
$i(w^* - w) = i(-2i\operatorname{Im}(w)) = 2\operatorname{Im}(w)$, valid
for any $w \in \mathbb{C}$, then gives the continuity equation:
\begin{equation}
\label{eq:continuity}
\boxed{\;\frac{dp_j}{dt} = \sum_{k=0}^{N-1} J_{j \leftarrow k}(t),\;}
\end{equation}
where the \textit{probability current} from dimension $k$ to dimension $j$
is
\begin{equation}
\label{eq:current}
J_{j \leftarrow k}(t) \;=\; 2\,\operatorname{Im}\!\Big(H_{jk}(t)\, c_j^*(t)\, c_k(t)\Big).
\end{equation}
The current $J_{j \leftarrow k}(t)$ is a real number at each time $t$.
When positive, probability is flowing from dimension $k$ into dimension
$j$; when negative, the flow is reversed.  The continuity
equation~\eqref{eq:continuity} states that the rate of change of
probability at any dimension is entirely accounted for by the sum of
pairwise currents into that dimension from every other dimension.

\subsection{Properties of the Probability Current}
\label{sec:current_properties}

The currents inherit structural properties from the Hermiticity of $H(t)$
that constrain them to be internally consistent: no current flows from a
dimension to itself, and every unit of probability arriving at one
dimension departs from exactly one other.

\begin{proposition}[Structural properties of the probability current]
\label{prop:current_properties}
The probability currents defined in~\eqref{eq:current} satisfy the
following three properties for all $j$, $k$, and $t$:
\begin{enumerate}
    \item \textbf{Antisymmetry:} $J_{j \leftarrow k}(t) = -J_{k \leftarrow j}(t)$.
    \item \textbf{Zero self-current:} $J_{j \leftarrow j}(t) = 0$.
    \item \textbf{Global conservation:} $\displaystyle\sum_{j=0}^{N-1} \frac{dp_j}{dt} = 0$.
\end{enumerate}
\end{proposition}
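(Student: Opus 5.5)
The plan is to work directly from the definition~\eqref{eq:current}, $J_{j\leftarrow k} = 2\operatorname{Im}(H_{jk}\,c_j^*\,c_k)$, and use only the Hermiticity relation $H_{kj} = H_{jk}^*$ together with the elementary fact that $\operatorname{Im}(w^*) = -\operatorname{Im}(w)$ for any $w\in\mathbb{C}$. The three properties will be proved in the order stated. Antisymmetry implies the zero self-current property, and the two together yield global conservation via the continuity equation~\eqref{eq:continuity}.

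\textbf{Antisymmetry.} I will write $J_{k\leftarrow j} = 2\operatorname{Im}(H_{kj}\,c_k^*\,c_j)$. Substituting $H_{kj} = H_{jk}^*$ gives $H_{kj}c_k^*c_j = H_{jk}^*\,c_k^*\,c_j = (H_{jk}\,c_j^*\,c_k)^*$. Taking imaginary parts then gives
\[
J_{k\leftarrow j} = 2\operatorname{Im}\bigl((H_{jk}c_j^*c_k)^*\bigr) = -2\operatorname{Im}(H_{jk}c_j^*c_k) = -J_{j\leftarrow k}.
\]
This uses exactly the same conjugation step as the derivation of~\eqref{eq:continuity}.

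\textbf{Zero self-current.} The most direct route is to note that $H_{jj}$ is real by Hermiticity and $c_j^*c_j = |c_j|^2$ is real. Hence $H_{jj}|c_j|^2$ is real and its imaginary part vanishes. Alternatively, setting $k=j$ in antisymmetry gives $J_{j\leftarrow j} = -J_{j\leftarrow j}$, so $J_{j\leftarrow j} = 0$. I will state both routes briefly.

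\textbf{Global conservation.} I will sum the continuity equation~\eqref{eq:continuity} over $j$ to obtain $\sum_j \dot p_j = \sum_{j,k} J_{j\leftarrow k}$. The double sum over the finite index set $\{0,\dots,N-1\}^2$ can be split into the diagonal terms, which vanish by the second property, and the off-diagonal pairs $\{(j,k),(k,j)\}$ with $j\neq k$, which cancel pairwise by antisymmetry. Equivalently, the double sum equals its own negative after relabeling $j\leftrightarrow k$. As a consistency check, this agrees with the norm-preservation computation in Section~\ref{sec:hamiltonian}.

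There is no real obstacle here. The only point needing care is keeping the index order and conjugation straight in the antisymmetry step: the current is defined with $H_{jk}$ paired with $c_j^*c_k$, and the swap must produce the exact complex conjugate. Everything else follows from this identity.
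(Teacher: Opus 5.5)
Your proposal is correct and follows essentially the same route as the paper: antisymmetry from $H_{kj}=H_{jk}^*$ and $\operatorname{Im}(w^*)=-\operatorname{Im}(w)$, zero self-current from the reality of $H_{jj}|c_j|^2$, and global conservation by summing the continuity equation, with diagonal terms vanishing and off-diagonal pairs cancelling. Your two extra routes are valid but not needed: getting property (2) by setting $k=j$ in antisymmetry, and the $j\leftrightarrow k$ relabeling argument for property (3).
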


\begin{proof}
For property (1), compute $J_{k \leftarrow j} = 2\operatorname{Im}(H_{kj}\,
c_k^*\, c_j)$.  Since $H_{kj} = H_{jk}^*$ and $c_k^*\, c_j = (c_j^*\,
c_k)^*$, the argument is $H_{jk}^*\,(c_j^*\, c_k)^* = (H_{jk}\, c_j^*\,
c_k)^*$.  The identity $\operatorname{Im}(w^*) = -\operatorname{Im}(w)$
then yields $J_{k \leftarrow j} = -2\operatorname{Im}(H_{jk}\, c_j^*\,
c_k) = -J_{j \leftarrow k}$.

For property (2), set $k = j$ in~\eqref{eq:current}: $J_{j \leftarrow j}
= 2\operatorname{Im}(H_{jj}\,|c_j|^2)$.  The diagonal entry $H_{jj}$ of a
Hermitian matrix is real, and $|c_j|^2 \in \mathbb{R}$, so their product
is real and its imaginary part vanishes.

For property (3), sum the continuity equation~\eqref{eq:continuity} over
$j$: $\sum_j \dot{p}_j = \sum_j \sum_k J_{j \leftarrow k}$.  Every pair
$(j,k)$ with $j \neq k$ contributes $J_{j \leftarrow k} + J_{k \leftarrow
j} = 0$ by antisymmetry, and the diagonal terms $J_{j \leftarrow j}$
vanish by property (2).  The double sum is therefore zero.
\end{proof}

Since $J_{j \leftarrow j} = 0$, the continuity equation~\eqref{eq:continuity}
can equivalently be written as a sum restricted to $k \neq j$:
\begin{equation}
\label{eq:continuity_offdiag}
\frac{dp_j}{dt} = \sum_{k \neq j} J_{j \leftarrow k}(t).
\end{equation}
This form makes explicit that the rate of change of occupation probability
at dimension $j$ equals the net inflow from all other dimensions.  The
$\binom{N}{2}$ independent currents $\{J_{j \leftarrow k}(t)\}_{j < k}$
(antisymmetry halves the count) form a complete, internally consistent
ledger of probability transfers at each instant: every unit of probability
that arrives at one dimension is drawn from a specific other dimension,
with no residual or unaccounted-for flow.

\subsection{Structure of the Currents Under the Hamiltonian Decomposition}
\label{sec:current_decomposition}

The decomposition $H(t) = H_0 + H_{\mathrm{int}}(t)$ introduced in
Section~\ref{sec:hamiltonian} determines which component of the Hamiltonian
drives the probability currents.  Because $H_0 =
\operatorname{diag}(\lambda_0, \ldots, \lambda_{N-1})$ is diagonal, its
off-diagonal entries vanish: $[H_0]_{jk} = 0$ for $j \neq k$.  Therefore,
for any pair of distinct dimensions $j \neq k$,
\begin{equation}
H_{jk}(t) = [H_{\mathrm{int}}(t)]_{jk},
\end{equation}
and the current between them depends exclusively on the interaction
Hamiltonian:
\begin{equation}
\label{eq:current_interaction}
J_{j \leftarrow k}(t) = 2\,\operatorname{Im}\!\Big([H_{\mathrm{int}}(t)]_{jk}\, c_j^*(t)\, c_k(t)\Big) \qquad (j \neq k).
\end{equation}
This has an immediate and important consequence: when no input is driving
the interaction ($H_{\mathrm{int}}(t) = 0$), every off-diagonal current
vanishes, and the occupation probabilities $\{p_j(t)\}$ are individually
conserved.  Under $H_0$ alone, the amplitudes rotate in the complex plane
at their respective frequencies $\lambda_j$, but their squared magnitudes
do not change.  The free Hamiltonian modulates phases without transferring
probability; all probability redistribution is caused by the interaction
term $H_{\mathrm{int}}(t)$, which is generated by the neural network
$g_\theta$ from the current token embedding and state
(Section~\ref{sec:hamiltonian}).  The currents therefore isolate the causal
contribution of each token to the model's internal dynamics, separating
input-driven redistribution from the baseline phase evolution.

The low-rank-plus-diagonal construction of $H_{\mathrm{int}}(t)$ from
equation~\eqref{eq:interaction} further decomposes each current into
contributions from individual coupling channels.  Since
$[H_{\mathrm{int}}(t)]_{jk} = \sum_{a=1}^{r} \Phi_{ja}(t)\,\Phi_{ka}^*(t)$
for $j \neq k$ (the diagonal term $\operatorname{diag}(\delta(t))$
contributes only when $j = k$), the current separates as
\begin{equation}
\label{eq:current_channels}
J_{j \leftarrow k}(t) = 2 \sum_{a=1}^{r} \operatorname{Im}\!\Big(\Phi_{ja}(t)\,\Phi_{ka}^*(t)\, c_j^*(t)\, c_k(t)\Big).
\end{equation}
Each coupling channel $a \in \{1, \ldots, r\}$ contributes an independent
term to the total current between dimensions $j$ and $k$.  The $a$-th term
transfers probability at a rate determined jointly by two factors: the
coupling strengths $\Phi_{ja}(t)$ and $\Phi_{ka}(t)$, which are set by the
network $g_\theta$ in response to the current token and state, and the
current amplitudes $c_j(t)$ and $c_k(t)$, which reflect the model's
accumulated response to all previous tokens.  A current can flow between
$j$ and $k$ only if both dimensions carry nonzero amplitude and the
interaction Hamiltonian couples them; the direction of the flow (from $k$
to $j$ or from $j$ to $k$) is determined by the sign of
$\operatorname{Im}(\Phi_{ja}\,\Phi_{ka}^*\, c_j^*\, c_k)$, which depends
on the relative phases of the coupling and the amplitudes.

The state-dependence of the current formula merits emphasis.  The same
token can trigger different current patterns at different points in a
sequence, because the state $\ket{\psi(t)}$ encodes the cumulative effect
of all preceding tokens.  Two occurrences of the same word in different
contexts will produce different currents, even though the interaction
Hamiltonian $H_{\mathrm{int}}(t)$ depends on the same token embedding in
both cases, because the amplitudes $c_j(t)$ and $c_k(t)$ entering the
current formula differ.  This context-sensitivity is not an additional
feature that must be engineered; it is an algebraic consequence of the
bilinear dependence of~\eqref{eq:current} on the Hamiltonian and the
state.

\subsection{Interpretation as a Diagnostic}
\label{sec:current_diagnostic}

The probability currents provide a mechanistic account of how the model
processes each token, distinct from post-hoc interpretability methods in
both its derivation and its guarantees.  Gradient-based attribution
methods~\cite{simonyan2014deep} answer a counterfactual question (how would
the output change if this input were perturbed?) by linearizing the model
around its current operating point, an approximation whose fidelity depends
on the local curvature of the loss landscape.  Attention
visualization~\cite{vaswani2017attention} reveals which context positions a
Transformer attends to but not what computational role the attended content
plays.  Probing classifiers~\cite{belinkov2022probing} test whether specific
information is \emph{decodable} from a hidden state but do not reveal the
\emph{process} by which that information was incorporated.  The probability
currents answer a different question: at each time step, where is
probability flowing, from which dimensions, to which dimensions, and at
what rate?

The structural properties established in Proposition~\ref{prop:current_properties}
impose internal consistency constraints that post-hoc attributions are not
guaranteed to satisfy: any probability gained by one dimension is explicitly
sourced from one or more other dimensions, and the total is exactly
conserved.  These constraints follow algebraically from the Hermiticity of
$H(t)$ and cannot be violated regardless of the specific parameter values
or training procedure.

Two distinctions are necessary regarding the scope of these guarantees.
First, the continuity equation~\eqref{eq:continuity} and the current
formula~\eqref{eq:current} are derived from the continuous-time
Schr\"{o}dinger dynamics, while the implemented model advances in discrete
Cayley steps.  Section~\ref{sec:current_discrete} below shows that the
continuous-time currents approximate the actual discrete probability changes
to first order in $\Delta t$, and defines exact \emph{midpoint currents}
that reproduce the discrete changes without approximation error while
retaining antisymmetry and conservation.  For diagnostic use on the
implemented model, the midpoint currents
(equation~\eqref{eq:midpoint_current}) are the appropriate tool; the
continuous-time currents derived in this subsection provide the conceptual
framework and the structural guarantees that the midpoint currents inherit.

Second, the currents describe how probability redistributes across latent
dimensions in the model's internal state, not how probability redistributes
across vocabulary tokens in the output distribution.  The connection between
internal redistribution and output change is mediated by the Born rule
(Section~\ref{sec:born_rule}): a current flowing from dimension $k$ to
dimension $j$ increases $|c_j|^2$ and decreases $|c_k|^2$, and also
modifies the cross terms $c_j c_{j'}^*$ in the Born-rule expansion, both
of which affect the output distribution.  The currents therefore provide a
causal account of one specific aspect of the model's computation (amplitude
redistribution), from which output-level effects can be derived but are not
directly read off.

The channel decomposition~\eqref{eq:current_channels} adds a further layer
of resolution.  It identifies which of the $r$ coupling channels mediates
each part of a given probability transfer, linking the redistribution to
specific columns of the matrix $\Phi(t)$ produced by $g_\theta$.  After
training, the dominant currents triggered by a given token at a given
context constitute a token-level summary of the model's internal response:
which dimensions gain amplitude, which lose it, and which coupling pathways
carry the transfer.

This diagnostic is available at every time step and for every input
sequence.  It does not require selecting a specific output class with
respect to which attribution is computed (as gradient methods do), and it
does not require training a separate model (as probing methods do).  Its
computational cost is $O(N^2)$ per time step for the full current matrix,
or $O(Nr)$ per time step if only the channel-decomposed currents for a
preselected subset of dimension pairs are needed.  For the typical regime
in which $N$ is in the hundreds and $r \ll N$, this cost is modest relative
to the $O(Nr^2)$ cost of the Cayley update itself.

\subsection{Relation to the Discrete Update}
\label{sec:current_discrete}

The probability currents~\eqref{eq:current} are defined for the
continuous-time Schr\"{o}dinger dynamics.  In practice, the model advances
in discrete steps via the Cayley update (Section~\ref{sec:cayley}), and
the relationship between the continuous-time currents and the actual
discrete probability changes requires specification.  This subsection
establishes that relationship and defines an exact discrete alternative.

After one Cayley step with step size $\Delta t$, the discrete change in
occupation probability is
\begin{equation}
\Delta p_j(t) \;=\; |c_j(t + \Delta t)|^2 - |c_j(t)|^2.
\end{equation}
The Cayley transform implements the implicit midpoint rule, which is a
second-order integrator.  The continuous-time currents therefore
approximate the discrete changes as
\begin{equation}
\label{eq:current_discrete_approx}
\Delta p_j(t) = \Delta t \sum_{k \neq j} J_{j \leftarrow k}(t) \;+\; O(\Delta t^2).
\end{equation}
The $O(\Delta t^2)$ error arises from the difference between the
continuous-time derivative $\dot{p}_j(t)$ and the finite-difference
quotient $\Delta p_j(t)/\Delta t$.  For step sizes $\Delta t$ of order
unity (the natural choice when each discrete step corresponds to one
token), the continuous-time currents are first-order accurate estimates of
the actual discrete probability transfers.  This means the continuous-time
currents provide qualitatively correct but quantitatively approximate
descriptions of the discrete model's behavior: they correctly identify the
direction and relative magnitude of probability flow between dimensions,
but their numerical values may differ from the actual discrete changes by
an $O(\Delta t^2)$ correction.

Exact discrete probability changes $\Delta p_j(t)$ can always be computed
from the Cayley update itself, since the update is explicitly unitary and
the occupation probabilities before and after each step are available.
However, expressing $\Delta p_j$ in terms of the matrix entries of the
Cayley update operator $W(t)$ yields a sum over products of entries of
$W(t)$ and amplitudes that does not factor into a simple pairwise form
with guaranteed antisymmetry.  The continuous-time currents remain useful
because they provide the pairwise, antisymmetric, conserving decomposition
that makes the diagnostic interpretable.

\paragraph{Midpoint currents: an exact discrete decomposition.}
The tension between the appealing structural properties of the
continuous-time currents and the fact that the model operates in discrete
time can be resolved by defining currents evaluated at the implicit
midpoint that the Cayley scheme uses.  Specifically, evaluate the
continuous-time current formula~\eqref{eq:current} at the midpoint state
$\frac{1}{2}(\ket{\psi(t+\Delta t)} + \ket{\psi(t)})$ and the Hamiltonian
$H(t)$, mirroring the implicit midpoint evaluation that defines the Cayley
scheme (equation~\eqref{eq:cayley_derivation}).  The resulting midpoint
currents are
\begin{equation}
\label{eq:midpoint_current}
J_{j \leftarrow k}^{\text{mid}}(t) = 2\,\operatorname{Im}\!\Big(H_{jk}(t)\, \bar{c}_j^*(t)\, \bar{c}_k(t)\Big), \qquad \bar{c}(t) = \tfrac{1}{2}\big(c(t+\Delta t) + c(t)\big).
\end{equation}
These midpoint currents satisfy antisymmetry by the same argument as
Proposition~\ref{prop:current_properties} (the proof depends only on the
Hermiticity of $H_{jk}$ and the algebraic structure of the imaginary part,
both of which hold identically for the midpoint amplitudes $\bar{c}_j$).
They also reproduce the exact discrete probability changes without residual
error:
\begin{equation}
\label{eq:midpoint_exact}
\Delta p_j(t) = \Delta t \sum_{k \neq j} J_{j \leftarrow k}^{\text{mid}}(t).
\end{equation}
To see why, note that the Cayley update is derived precisely by evaluating
the right-hand side of the Schr\"{o}dinger equation at the midpoint
(equation~\eqref{eq:cayley_derivation}).  The discrete change $\Delta p_j$
can be written as $\Delta p_j = \Delta t\, (\dot{p}_j)_{\text{mid}}$,
where $(\dot{p}_j)_{\text{mid}}$ is the time derivative of $p_j$ evaluated
at the midpoint state.  The derivation of the continuity equation in
Section~\ref{sec:continuity} applies identically to any state vector and
any Hermitian matrix, so evaluating it at $(\bar{c}, H)$ produces
$(\dot{p}_j)_{\text{mid}} = \sum_k J_{j \leftarrow k}^{\text{mid}}$, and
multiplying by $\Delta t$ yields~\eqref{eq:midpoint_exact}.

The midpoint currents therefore provide an exact, pairwise, antisymmetric
decomposition of each discrete update, with no approximation error.  The
cost of computing them is the same as for the continuous-time currents
($O(N^2)$ for the full matrix or $O(Nr)$ for channel-decomposed currents
on selected pairs), plus the requirement of having the post-step state
$\ket{\psi(t+\Delta t)}$ available, which it is after the forward pass.
For diagnostic purposes on the implemented discrete model, the midpoint
currents are the recommended tool: they inherit the structural properties
of Proposition~\ref{prop:current_properties} (antisymmetry, zero
self-current, global conservation) while exactly matching the discrete
probability changes that the Cayley update produces.

\paragraph{Summary of the two current variants.}
The continuous-time currents $J_{j \leftarrow k}(t)$
(equation~\eqref{eq:current}) are derived from the continuous
Schr\"{o}dinger equation and evaluated at the pre-step state $c(t)$.
They provide the conceptual foundation, the structural properties
(Proposition~\ref{prop:current_properties}), the Hamiltonian decomposition
(Section~\ref{sec:current_decomposition}), and the channel factorization
(equation~\eqref{eq:current_channels}).  They approximate the discrete
probability changes to first order, with $O(\Delta t^2)$ error.  The
midpoint currents $J_{j \leftarrow k}^{\text{mid}}(t)$
(equation~\eqref{eq:midpoint_current}) are evaluated at the average of the
pre-step and post-step states.  They inherit all structural properties,
support the same Hamiltonian and channel decompositions (by replacing
$c_j, c_k$ with $\bar{c}_j, \bar{c}_k$ in all formulas), and reproduce the
exact discrete probability changes.  The choice between the two depends on
whether qualitative interpretability (continuous-time currents, available
without the post-step state) or quantitative fidelity to the discrete
dynamics (midpoint currents, requiring the post-step state) is the
priority.

Having established the structural and diagnostic properties of the model's
dynamics, we turn in the next section to the question of representational
capacity: for a concrete family of tasks, how does the state dimension
required by a complex unitary model compare to the dimension required by a
real orthogonal model?  The formal answer, a quadratic lower bound of
$\Omega(N^2)$ on the real state dimension required to match a complex model
of dimension $N$, is proved in full as Theorem~\ref{thm:separation} and
Lemma~\ref{lem:softmax_rank} of Section~\ref{sec:lower_bound}.

\section{Expressivity Separation}
\label{sec:separation}

The preceding sections constructed a sequence model whose state evolves
unitarily in $\mathbb{C}^N$ and whose output probabilities are extracted by
the Born rule.  Section~\ref{sec:born_rule} observed that the Born-rule
output is a quadratic function of the complex amplitudes, involving
$\binom{N}{2}$ pairwise interference terms that depend on relative phases.
Section~\ref{sec:currents} showed that the probability currents generated
by input tokens redistribute amplitude through these same pairwise
channels.  These observations suggest that the complex-valued model encodes
more information per dimension than a real-valued model with a linear
readout, because the quadratic output structure converts $N$ complex
coordinates into $O(N^2)$ effective features.  This section formalizes that
suggestion as a separation theorem: we construct a concrete family of
disambiguation tasks, prove that a complex unitary model of dimension $N$
solves each task exactly, and prove that any real orthogonal model with an
affine-softmax readout requires dimension $\Omega(N^2)$ for the same task.

\begin{figure}[h!]
\centering
\begin{tikzpicture}[>=Stealth, font=\scriptsize,
  box/.style={draw, rectangle, rounded corners=2pt, minimum height=0.5cm,
              inner sep=4pt, align=center, draw=figTealBorder},
  lb/.style={box, fill=figTeal},
  arr/.style={->, >=Stealth, line width=0.4pt, figSlate},
  lbl/.style={font=\tiny, text=figSlate},
]

\node[font=\scriptsize\bfseries] at (-3.0,0.6) {Born-Rule Readout (CUSM)};

\node[lb, minimum width=2.2cm] (cs) at (-3.0,-0.15) {$\ket{\psi} \in \mathbb{C}^N$};

\node[lb, minimum width=2.4cm] (rho) at (-3.0,-1.25)
  {$\rho = \ket{\psi}\!\bra{\psi}$};
\node[lbl, anchor=east] at ([xshift=-6pt]rho.west) {\emph{Veronese-type lifting}};

\node[lb, minimum width=2.6cm] (feat) at (-3.0,-2.4)
  {$\underbrace{|c_j|^2}_{N} + \underbrace{c_j^* c_{j'}}_{N(N{-}1)}$};

\node[lb, minimum width=2.4cm] (bp) at (-3.0,-3.55) {$p(k) = \operatorname{tr}(M_k\rho)$};

\draw[arr] (cs) -- node[right, lbl] {quadratic} (rho);
\draw[arr] (rho) -- (feat);
\draw[arr] (feat) -- node[right, lbl] {linear in $\rho$} (bp);

\node[lbl, anchor=north, align=center] at (-3.0,-4.05) {%
  Accesses \textbf{$N^2$ features}};

\node[font=\scriptsize\bfseries] at (3.0,0.6) {Affine-Softmax Readout (ROSM)};

\node[lb, minimum width=2.2cm] (rs) at (3.0,-0.15) {$h \in \mathbb{R}^d$};

\node[lb, minimum width=2.4cm] (zk) at (3.0,-1.25)
  {$z_k = w_k^\top h + b_k$};
\node[lbl, anchor=west] at ([xshift=6pt]zk.east) {\emph{affine}};

\node[lb, minimum width=2.6cm] (logp) at (3.0,-2.4)
  {$\log p(k) = z_k - \log Z$};

\node[lb, minimum width=2.4cm] (sp) at (3.0,-3.55) {$p(k) = \mathrm{softmax}(z)_k$};

\draw[arr] (rs) -- node[right, lbl] {linear} (zk);
\draw[arr] (zk) -- (logp);
\draw[arr] (logp) -- (sp);

\node[lbl, anchor=north, align=center] at (3.0,-4.05) {%
  $\operatorname{rank}(\log P) \leq d{+}2$};

\draw[figDivider, line width=0.3pt] (0,0.6) -- (0,-4.4);

\node[lbl, font=\tiny\bfseries, anchor=south] at (0,-4.6) {Dimensional comparison};

\fill[figBar] (-2.0,-5.05) rectangle (-1.0,-4.85);
\draw[figBarBorder] (-2.0,-5.05) rectangle (-1.0,-4.85);
\node[lbl] at (-1.5,-4.95) {$N$};
\node[lbl, anchor=east] at (-2.15,-4.95) {CUSM:};

\fill[figBar] (0.2,-5.05) rectangle (4.0,-4.85);
\draw[figBarBorder] (0.2,-5.05) rectangle (4.0,-4.85);
\node[lbl] at (2.1,-4.95) {$d \geq N^2 - 2$};
\node[lbl, anchor=east] at (0.05,-4.95) {ROSM:};

\draw[<->, >=Stealth, line width=0.3pt, figTealBorder] (-1.5,-5.2) -- (2.1,-5.2)
  node[midway, below, lbl] {quadratic gap: $\Theta(N)$};

\node[draw=figAccent, rounded corners=2pt, inner sep=5pt, fill=figTeal,
      text width=11cm, align=center, font=\scriptsize] (thm) at (0,-6.05) {%
  \textbf{Theorem \ref{thm:separation} (conditional).}\;
  A CUSM of dimension $N$ solves $\mathcal{D}_N$ exactly.
  Any ROSM with affine-softmax readout that matches $p^*$ requires
  $d \geq N^2 - 2$, assuming $\operatorname{rank}(L^*) = N^2$
  (see Remark~\ref{rem:conditional}).};

\end{tikzpicture}
\caption{Source of the expressivity separation.  \textbf{Left:}~The Born-rule readout applies a quadratic (Veronese-type) lifting $\psi\mapsto\psi\psi^\dagger$, promoting the $N$-dimensional complex state to the $N^2$-dimensional space of rank-one Hermitian matrices.  The measurement $\operatorname{tr}(M_k\rho)$ is then linear in the lifted space, accessing all $N^2$ features including the $\binom{N}{2}$ pairwise phase cross-terms that encode interference.  \textbf{Right:}~The affine-softmax readout computes $z_k = w_k^\top h + b_k$, which is linear in the $d$-dimensional real state.  The rank constraint (Lemma~\ref{lem:softmax_rank}) limits the log-probability matrix to rank $d{+}2$.  \textbf{Bottom:}~Matching the $N^2$ features required by the Born-rule target forces $d \geq N^2 - 2$, a quadratic gap over the complex model's dimension $N$, under the full-rank condition on $L^*$ stated in Theorem~\ref{thm:separation}.}
\label{fig:separation}
\end{figure}

Three preliminary remarks are necessary to frame the result precisely.
First, the separation established here is between two specific \emph{output
mechanisms}: the Born rule (quadratic in the state) and the affine-softmax
readout (linear in the state).  The lower bound on the real model's
dimension (Lemma~\ref{lem:softmax_rank}) relies on the rank constraint
imposed by the affine-softmax structure~\cite{yang2018breaking}.  A
real-valued model equipped with a different readout, for instance a
polynomial readout of degree two or higher, would not be subject to the
same rank bound and could potentially narrow the gap.  The separation
should therefore be understood as a comparison between the standard
real-valued architecture (orthogonal dynamics with affine-softmax output,
as used by all unitary and orthogonal RNNs surveyed in
Section~\ref{sec:related}) and the proposed complex-valued architecture
(unitary dynamics with Born-rule output), not as a fundamental limitation
of real-valued representations independent of the output mechanism.
Second, both model classes defined below
(Definitions~\ref{def:cusm} and~\ref{def:rosm}) use state-independent
transitions: each token maps to a single fixed unitary or orthogonal
matrix, with no dependence on the current state.  This is strictly weaker
than the full architecture of Section~\ref{sec:model}, which uses
state-dependent Hamiltonians to achieve nonlinearity
(Section~\ref{sec:hamiltonian}).  The separation theorem proves an
advantage for the simplified CUSM, and since the full model subsumes the
CUSM as a special case, the advantage carries over; but the theorem does
not characterize the representational capacity of the full state-dependent
model.  Third, the formal task family $\mathcal{D}_N$ defined below uses
abstract context and query tokens, not natural-language words.  Whether
natural language contains subproblems whose structure is captured by
$\mathcal{D}_N$ is an empirical question that this section does not
address; the task family serves to establish a provable dimensional gap,
not to model a specific linguistic phenomenon.

\subsection{Model Classes for the Separation}
\label{sec:model_classes}

For the separation theorem, we compare two simplified model classes that
isolate the representational distinction between complex and real state
spaces.  Both assign one fixed transition matrix per alphabet token, with
no state-dependent dynamics.

\begin{definition}[Complex Unitary Sequence Model (CUSM)]
\label{def:cusm}
A CUSM of dimension $N$ over an alphabet $\Sigma$ and vocabulary $\mathcal{V}
= \{0, \ldots, V-1\}$ consists of:
\begin{itemize}
    \item An initial state $\ket{\psi_0} \in \mathbb{C}^N$ with $\|\psi_0\| = 1$,
    \item For each token $x \in \Sigma$, a unitary matrix $W_x \in U(N)$,
    \item A set of measurement vectors $\{\ket{m_k}\}_{k=0}^{V-1}$ in
          $\mathbb{C}^N$ satisfying the resolution of identity $\sum_{k}
          \ket{m_k}\bra{m_k} = I_N$.
\end{itemize}
Given an input sequence $x_1, x_2, \ldots, x_T$, the model updates its
state as
\begin{equation}
\ket{\psi(t)} = W_{x_t} \ket{\psi(t-1)}, \qquad t = 1, \ldots, T,
\end{equation}
and produces output probabilities at each step via the Born rule:
\begin{equation}
p(k \mid \psi(t)) = \left|\braket{m_k}{\psi(t)}\right|^2.
\end{equation}
\end{definition}

The CUSM is a special case of the full architecture of Section~\ref{sec:model},
obtained by restricting $g_\theta$ to depend only on the token embedding
(not on the state), setting $H_0 = 0$, and using $\Delta t = 1$ in the
Cayley discretization so that each token's unitary is $W_{x_t} =
\text{Cayley}(-iH_{\mathrm{int}}(x_t)/2)$.  The state-dependent
nonlinearity discussed in Section~\ref{sec:hamiltonian} is absent from the
CUSM; implementing it would strictly enlarge the class of representable
functions.

\begin{definition}[Real Orthogonal Sequence Model (ROSM)]
\label{def:rosm}
A ROSM of dimension $d$ over an alphabet $\Sigma$ and vocabulary $\mathcal{V}
= \{0, \ldots, V-1\}$ consists of:
\begin{itemize}
    \item An initial state $h_0 \in \mathbb{R}^d$ with $\|h_0\| = 1$,
    \item For each token $x \in \Sigma$, an orthogonal matrix $Q_x \in O(d)$,
    \item Output weight vectors $\{w_k\}_{k=0}^{V-1}$ in $\mathbb{R}^d$
          and a bias vector $b \in \mathbb{R}^V$.
\end{itemize}
Given an input sequence $x_1, x_2, \ldots, x_T$, the model updates its
state as
\begin{equation}
h(t) = Q_{x_t}\, h(t-1), \qquad t = 1, \ldots, T,
\end{equation}
and produces output probabilities at each step via the affine-softmax readout:
\begin{equation}
\label{eq:rosm_output}
p(k \mid h(t)) = \frac{\exp\!\big(w_k^\top h(t) + b_k\big)}{\displaystyle\sum_{k'=0}^{V-1} \exp\!\big(w_{k'}^\top h(t) + b_{k'}\big)}.
\end{equation}
\end{definition}

Both model classes use norm-preserving transitions: the CUSM preserves
$\|\psi\| = 1$, and the ROSM preserves $\|h\| = 1$.  The distinction lies
in the output mechanism.  The CUSM uses the Born rule, which is a quadratic
function of the complex state.  The ROSM uses an affine projection followed
by softmax, which depends on the state through the linear map $h \mapsto Wh
+ b$.  This difference in the algebraic degree of the output is the source
of the separation.

The affine-softmax readout in Definition~\ref{def:rosm} is the standard
output mechanism used by recurrent neural networks, including the unitary
and orthogonal RNNs of Arjovsky et al.~\cite{arjovsky2016unitary} and
Wisdom et al.~\cite{wisdom2016full}.  A real-valued model that replaced this
readout with a quadratic function of the state (e.g., $p(k) \propto
(w_k^\top h)^2$) would itself perform an implicit polynomial lifting and
would not be subject to the rank bound derived in
Lemma~\ref{lem:softmax_rank}.  The separation theorem compares the two
model classes as defined, not all possible combinations of dynamics and
readouts.

\subsection{The Disambiguation Task Family}
\label{sec:task_family}

For each integer $N \geq 2$, we define a disambiguation task $\mathcal{D}_N$
that requires the model to produce a specific probability distribution at
the final position of a sequence, where the target distribution depends
jointly on two tokens separated by an arbitrary number of filler tokens.

\paragraph{Alphabet and sequence structure.}
The input alphabet is $\Sigma_N = \{a_0, a_1, \ldots, a_{N-1}\} \cup
\{b_0, b_1, \ldots, b_{N-1}\} \cup \{\sigma\}$, containing $N$ context
tokens, $N$ query tokens, and a single filler token.  For a fixed sequence
length $T \geq 3$, the input sequences have the form
\begin{equation}
\label{eq:task_sequence}
s = (a_i,\; \underbrace{\sigma, \;\ldots,\; \sigma}_{T - 2},\; b_j), \qquad i, j \in \{0, \ldots, N-1\}.
\end{equation}
There are $N^2$ such sequences, one for each pair $(i, j)$.

\paragraph{Target distribution.}
The output vocabulary has $V \geq N^2$ tokens.  The target probability
distribution at the final position (after processing $b_j$) is defined in
terms of a specific Born-rule construction.  Choose:
\begin{enumerate}
    \item $N$ unit vectors $\ket{\psi_0}, \ldots, \ket{\psi_{N-1}} \in
          \mathbb{C}^N$ (the \emph{context states}),
    \item $N$ unitary matrices $W_0, \ldots, W_{N-1} \in U(N)$ (the
          \emph{query unitaries}),
    \item $V$ measurement vectors $\{\ket{m_k}\}_{k=0}^{V-1}$ in
          $\mathbb{C}^N$ satisfying the resolution of identity $\sum_k
          \ket{m_k}\bra{m_k} = I_N$ and the informationally complete
          condition (defined below).
\end{enumerate}
The target distribution for input pair $(i, j)$ is
\begin{equation}
\label{eq:target_distribution}
p^*(k \mid i, j) = \left|\braket{m_k}{W_j \psi_i}\right|^2 =
\operatorname{tr}\!\left(M_k\, W_j \ket{\psi_i}\bra{\psi_i} W_j^\dagger\right),
\end{equation}
where $M_k = \ket{m_k}\bra{m_k}$.

The context states, query unitaries, and measurement vectors are part of
the \emph{task specification}, not part of any model's learned parameters.
A model ``solves'' or ``computes'' the task $\mathcal{D}_N$ if, for every
input pair $(i, j)$ and every output token $k$, the model's output
probability at the final position equals $p^*(k \mid i, j)$.

\paragraph{Conditions on the task parameters.}
The context states and query unitaries must satisfy a spanning condition,
and the measurement vectors must be informationally complete.  We state
these conditions precisely.

\begin{definition}[General-position unitaries]
\label{def:general_position}
The context states $\{\ket{\psi_i}\}_{i=0}^{N-1}$ and query unitaries
$\{W_j\}_{j=0}^{N-1}$ are in \emph{general position} if the $N^2$ density
matrices
\begin{equation}
\label{eq:density_matrices}
\rho_{ij} = W_j \ket{\psi_i}\bra{\psi_i} W_j^\dagger, \qquad i, j \in \{0, \ldots, N-1\},
\end{equation}
are linearly independent as elements of the $N^2$-dimensional real vector
space of $N \times N$ Hermitian matrices.
\end{definition}

\begin{definition}[Informationally complete measurement]
\label{def:ic_povm}
The measurement vectors $\{\ket{m_k}\}_{k=0}^{V-1}$ form an
\emph{informationally complete} measurement if the map $\rho \mapsto
\big(\operatorname{tr}(M_0 \rho),\, \ldots,\, \operatorname{tr}(M_{V-1}
\rho)\big)$ is injective on the space of $N \times N$ Hermitian matrices.
Equivalently, the $V$ matrices $\{M_k = \ket{m_k}\bra{m_k}\}$ span the
space of $N \times N$ Hermitian matrices as a real vector space.
\end{definition}

Informational completeness requires $V \geq N^2$, since the space of
Hermitian matrices has real dimension $N^2$.  Such measurements exist for
every $N$: for instance, the $N^2$ outer products of a set of $N^2$ vectors
forming a weighted 2-design in $\mathbb{C}^N$ provide an informationally
complete measurement~\cite{renes2004symmetric}.  For the separation theorem,
we fix any informationally complete measurement with $V \geq N^2$, and
choose context states and query unitaries in general position; the next
subsection verifies that general-position configurations exist for every
$N$.

\subsection{Upper Bound: Complex Unitary Construction}
\label{sec:upper_bound}

\begin{proposition}[CUSM upper bound]
\label{prop:upper_bound}
For every $N \geq 2$ and every general-position configuration of context
states and query unitaries, a CUSM of dimension $N$ computes $\mathcal{D}_N$
exactly.
\end{proposition}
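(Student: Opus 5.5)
The plan is a direct construction: build a CUSM whose final state on input $(a_i,\sigma^{T-2},b_j)$ is exactly $W_j\ket{\psi_i}$, up to a global phase. Then use the task's own measurement vectors as the readout, so the Born rule reproduces $p^*(k\mid i,j)$ by definition~\eqref{eq:target_distribution}.

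The construction, for dimension $N$ and alphabet $\Sigma_N$, is as follows.
\begin{itemize}
\item Fix any unit vector $\ket{e}\in\mathbb{C}^N$ and set $\ket{\psi_0^{\mathrm{model}}}=\ket{e}$.
\item For each context token $a_i$, choose a unitary $U_i\in U(N)$ with $U_i\ket{e}=\ket{\psi_i}$. Such a unitary exists because $U(N)$ acts transitively on the unit sphere: extend $\ket{e}$ and $\ket{\psi_i}$ to orthonormal bases (Gram--Schmidt) and map one basis to the other. Set $W_{a_i}=U_i$.
\item Set $W_\sigma=I_N$, so the $T-2$ filler tokens leave the state unchanged.
\item Set $W_{b_j}=W_j$, the task's query unitary.
\item Take the measurement vectors $\{\ket{m_k}\}$ to be exactly the task's measurement vectors. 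These satisfy $\sum_k\ket{m_k}\bra{m_k}=I_N$ by hypothesis, so they are admissible in Definition~\ref{def:cusm}.
\end{itemize}

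Unrolling the recursion gives
\begin{equation*}
\ket{\psi(T)}=W_j\, I^{T-2}\, U_i\ket{e}=W_j\ket{\psi_i},
\end{equation*}
so $p(k\mid\psi(T))=\bigl|\braket{m_k}{W_j\psi_i}\bigr|^2=p^*(k\mid i,j)$ for every $i,j,k$. The model also outputs a valid distribution at every intermediate step, since each $W_x$ is unitary and the measurement resolves the identity.

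The general-position condition and informational completeness are not used in this direction; they matter only for the lower bound. I will remark on this explicitly, since the proposition is stated "for every general-position configuration" and the construction in fact works for any configuration.

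There is no real obstacle. The only point requiring care is the existence of $U_i$. If one also wants the CUSM realized inside the architecture of Section~\ref{sec:model} (Cayley image with $\Delta t=1$), there is a subtlety: the Cayley map misses unitaries with eigenvalue $-1$. I would handle this in a remark after the proof. One option is to multiply $U_i$ by a global phase $e^{i\phi}$ chosen so that $-1$ is not an eigenvalue, which changes no Born probability. Another option is to factor such a unitary into two Cayley steps. Neither affects the proposition as stated, which concerns Definition~\ref{def:cusm} directly.
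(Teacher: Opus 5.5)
Your construction is correct and is essentially the paper's proof: a context-token unitary carrying the initial state to $\ket{\psi_i}$, $W_\sigma = I_N$, $W_{b_j} = W_j$, and the task's own measurement vectors. The only cosmetic difference is that the paper takes the initial state to be $\ket{\psi_0}$ rather than an arbitrary $\ket{e}$. Your side remarks are also sound: general position is indeed unused in this direction, and the Cayley image does miss unitaries with eigenvalue $-1$, a point the paper's claim that the CUSM embeds in the full architecture does not address.
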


\begin{proof}
Set the initial state of the CUSM to $\ket{\psi_0^{\mathrm{init}}} =
\ket{\psi_0}$ (the first context state).  For each context token $a_i$,
define the transition unitary as $W_{a_i} = V_i$, where $V_i \in U(N)$ is
any unitary satisfying $V_i \ket{\psi_0} = \ket{\psi_i}$.  Such a unitary
exists for every pair of unit vectors in $\mathbb{C}^N$.  For the filler
token, set $W_\sigma = I_N$.  For each query token $b_j$, set $W_{b_j} =
W_j$.  Equip the model with the measurement vectors $\{\ket{m_k}\}$ from
the task specification.

After processing the input sequence $(a_i, \sigma^{T-2}, b_j)$, the state
is
\begin{equation}
\ket{\psi(T)} = W_j \cdot I_N^{T-2} \cdot V_i \cdot \ket{\psi_0} = W_j \ket{\psi_i}.
\end{equation}
The Born-rule output at position $T$ is
\begin{equation}
p(k \mid \psi(T)) = \left|\braket{m_k}{W_j \psi_i}\right|^2 = p^*(k \mid i, j),
\end{equation}
which matches the target distribution~\eqref{eq:target_distribution} for
every pair $(i,j)$ and every output token $k$.
\end{proof}

\paragraph{Existence of general-position configurations.}
We verify that the general-position condition
(Definition~\ref{def:general_position}) is satisfiable for every $N \geq 2$.

\begin{lemma}[General position is generic]
\label{lem:general_position}
For every $N \geq 2$, the set of $(\ket{\psi_0}, \ldots, \ket{\psi_{N-1}},
W_0, \ldots, W_{N-1})$ that are \emph{not} in general position has
Lebesgue measure zero in the product space $(S^{2N-1})^N \times U(N)^N$.
\end{lemma}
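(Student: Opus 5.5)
The plan is the standard ``zero set of a nontrivial real-analytic function'' argument. The product space $\mathcal{M} = (S^{2N-1})^N \times U(N)^N$ is a connected compact real-analytic manifold, since both $S^{2N-1}$ and $U(N)$ are connected. The measure in the statement is the natural (Riemannian/Haar) one, which agrees up to null sets with Lebesgue measure in analytic charts.

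Identify the real vector space of $N\times N$ Hermitian matrices with $\mathbb{R}^{N^2}$ via a fixed basis, for instance the diagonal units together with the real and imaginary off-diagonal units. Write each $\rho_{ij} = W_j\ket{\psi_i}\bra{\psi_i}W_j^\dagger$ as a coordinate vector in $\mathbb{R}^{N^2}$. Stack these $N^2$ vectors as the columns of a square real matrix $R$, and set
\begin{equation*}
F(\psi_0,\ldots,\psi_{N-1},W_0,\ldots,W_{N-1}) = \det R.
\end{equation*}
The entries of $R$ are real polynomials in the real and imaginary parts of the entries of the $\psi_i$ and $W_j$. Hence $F$ is a polynomial in ambient coordinates, and its restriction to $\mathcal{M}$ is real-analytic. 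The configurations that are not in general position are exactly the zero set $F^{-1}(0)\cap\mathcal{M}$.

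The argument then rests on a standard fact: a real-analytic function on a connected analytic manifold that is not identically zero has a zero set of measure zero. This follows by covering $\mathcal{M}$ with countably many analytic charts and applying, in each chart, the Lebesgue-null property of zero sets of nonzero real-analytic functions on open subsets of $\mathbb{R}^n$, proved by induction on $n$ via Fubini. Connectedness and the identity theorem guarantee that $F$ is not identically zero on any chart once it is nonzero at a single point. The proof therefore reduces to exhibiting \emph{one} configuration in general position.

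That witness is the main obstacle. The first attempt is to take every $\ket{\psi_i}$ equal to one fixed unit vector $\ket{\phi}$, and to choose the $W_j$ so that the $N^2$ projectors $\rho_{ij}$ are linearly independent. This cannot work, because all $i$ then give the same $\rho_{ij}$, so the rows of the construction must genuinely use the $\psi_i$.

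The better plan is to fix the $N^2$ target pure states first and then realise them. Take a known family of $N^2$ rank-one projectors $\ket{v_{ij}}\bra{v_{ij}}$ that span the Hermitian matrices, for example
\begin{equation*}
\ket{e_a},\qquad \tfrac{1}{\sqrt2}(\ket{e_a}+\ket{e_b}),\qquad \tfrac{1}{\sqrt2}(\ket{e_a}+i\ket{e_b})\quad (a<b).
\end{equation*}
Their linear independence follows because each off-diagonal real or imaginary unit can be recovered by subtracting diagonal projectors. We then need $W_j\ket{\psi_i} = \ket{v_{ij}}$ up to phase for all $i,j$. This forces the Gram matrix $\braket{v_{ij}}{v_{i'j}}$ to be independent of $j$, a constraint that fixed targets rarely satisfy, so an explicit witness of this form may fail.

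The fallback is to avoid a closed-form witness and argue inductively or by perturbation. Fix $W_0 = I$ and orthonormal-ish generic $\psi_i$, giving $N$ independent projectors. Then show that for each new $j$, choosing $W_j = e^{i\epsilon H_j}$ with a suitable Hermitian $H_j$ makes the first-order terms $\epsilon\, i[H_j,\ket{\psi_i}\bra{\psi_i}]$ add $N$ new independent directions. This is a rank computation on commutators, showing that the span of $\{[H,\rho_i]\}$ over the chosen $H$ complements the span so far. Alternatively, a small-$N$ explicit check can be combined with an argument that does not depend on $N$. If that rank computation proves stubborn, the last resort is to exhibit the witness numerically-rigorously for each $N$, but the intended proof is the commutator/perturbation argument feeding into the analytic zero-set lemma.
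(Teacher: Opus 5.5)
Your reduction is the same as the paper's and is sound. You state it more carefully than the paper does: connectedness of $(S^{2N-1})^N\times U(N)^N$, countably many analytic charts, and the identity theorem. So everything hinges on exhibiting one configuration with $\det R\neq 0$, and that is where the proposal stops.

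\textbf{The gap.} For $N\ge 3$ the perturbation step is only announced. With $W_0=I$ and $W_j=e^{i\epsilon H_j}$, subtracting the $j=0$ rows gives $\det R=\epsilon^{N(N-1)}\det\bigl[P_i;\ i[H_j,P_i]\bigr]+O(\epsilon^{N(N-1)+1})$, where $P_i=\ket{\psi_i}\bra{\psi_i}$. You would therefore need the $N$ projectors and the $N(N-1)$ commutators to be jointly independent, with each commutator confined to the $2(N-1)$-dimensional tangent space at $P_i$.
\begin{itemize}
\item That is a rank statement of the same kind as the lemma, not a reduction of it, and the proposal does not prove it.
\item It degenerates for orthonormal $\psi_i$, since then $\sum_i[H_j,P_i]=[H_j,I]=0$.
\item A numerical check for each $N$ cannot cover all $N$.
\item No explicit witness is written down even for $N=2$. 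The paper's is $W_1$ the Hadamard matrix, $\ket{\psi_0}=\ket{0}$, $\ket{\psi_1}=(\ket{0}+i\ket{1})/\sqrt2$.
\end{itemize}

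Your diagnosis of why prescribing $N^2$ target projectors fails is correct: the Gram matrix of $\{W_j\ket{\psi_i}\}_i$ cannot depend on $j$. This is exactly the hole in the paper's own $N>2$ step. The paper appeals to ``$N^2$ generic points on a spanning variety are independent,'' but the $\rho_{ij}$ are not free points on that variety, and the paper concedes in its limitations section that the general case is not executed. So both arguments stop at the same place.

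\textbf{One way to close it.} Take $W_j=X^j$, where $X\ket{a}=\ket{a+1}$ on $\mathbb{Z}_N$. Let $Z=\operatorname{diag}(\omega^a)$ with $\omega=e^{2\pi i/N}$, so that $XZX^{-1}=\omega^{-1}Z$.
\begin{enumerate}
\item Conjugation by $X$ acts on $E_k=\operatorname{span}_{\mathbb{C}}\{X^dZ^k\}_{d\in\mathbb{Z}_N}$ as multiplication by $\omega^{-k}$. Moreover $M_N(\mathbb{C})=\bigoplus_k E_k$ with $\dim E_k=N$.
\item With $\pi_k$ the projection onto $E_k$, we have $X^jPX^{-j}=\sum_k\omega^{-jk}\pi_k(P)$. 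Invertibility of the DFT matrix then gives $\operatorname{span}_{\mathbb{C}}\{X^jP_iX^{-j}\}_j=\operatorname{span}_{\mathbb{C}}\{\pi_k(P_i)\}_k$.
\item Hence the $N^2$ matrices $\rho_{ij}$ span $M_N(\mathbb{C})$ iff, for every $k$, the vectors $\pi_k(P_0),\dots,\pi_k(P_{N-1})$ form a basis of $E_k$. When they span, they are $\mathbb{C}$-, hence $\mathbb{R}$-linearly independent, i.e.\ in general position.
\item For each fixed $k$, some choice of unit vectors achieves this. Rank-one projectors span $M_N(\mathbb{C})$ over $\mathbb{C}$ (your spanning family already shows they span the Hermitian matrices over $\mathbb{R}$), and $\pi_k$ is onto.
\item So each of the $N$ determinants $\Delta_k(\psi_0,\dots,\psi_{N-1})$, computed in the basis $\{X^dZ^k\}_d$, is a polynomial in the real and imaginary parts of the $\psi_i$ that does not vanish identically on $(S^{2N-1})^N$. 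Your null-set argument then yields a common non-vanishing point.
\end{enumerate}
That gives the witness for every $N\ge 2$ and completes your proof.
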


\begin{proof}
The $N^2$ density matrices $\rho_{ij} =
W_j\ket{\psi_i}\bra{\psi_i}W_j^\dagger$ can be vectorized as elements of
$\mathbb{R}^{N^2}$ using any fixed orthonormal basis $\{E_1, \ldots,
E_{N^2}\}$ for the real vector space of $N \times N$ Hermitian matrices
(for instance, the generalized Gell-Mann matrices).  Concretely, define the
vectorization $\operatorname{vec}(\rho) \in \mathbb{R}^{N^2}$ by
$[\operatorname{vec}(\rho)]_\alpha = \operatorname{tr}(E_\alpha\, \rho)$.
Stacking these vectors as rows of an $N^2 \times N^2$ real matrix $R$, the
general-position condition is equivalent to $\det(R) \neq 0$.

The determinant $\det(R)$ is a real-analytic function of the parameters
$(\ket{\psi_i}, W_j)$.  A real-analytic function that is not identically
zero has a zero set of measure zero.  It therefore suffices to exhibit a
single configuration for which $\det(R) \neq 0$.

For $N = 2$, we provide an explicit construction.  Set $W_0 = I_2$ and
\begin{equation}
W_1 = \frac{1}{\sqrt{2}}\begin{pmatrix} 1 & 1 \\ 1 & -1 \end{pmatrix},
\end{equation}
and choose the context states
\begin{equation}
\ket{\psi_0} = \ket{0} = \begin{pmatrix}1\\0\end{pmatrix}, \qquad
\ket{\psi_1} = \frac{1}{\sqrt{2}}\begin{pmatrix}1\\i\end{pmatrix}.
\end{equation}
Note that $\ket{\psi_1}$ is not orthogonal to $\ket{\psi_0}$:
$\braket{\psi_0}{\psi_1} = 1/\sqrt{2} \neq 0$.  The four density matrices
are
\begin{align}
\rho_{00} &= \ket{0}\bra{0} = \begin{pmatrix} 1 & 0 \\ 0 & 0 \end{pmatrix}, \\
\rho_{10} &= \ket{\psi_1}\bra{\psi_1} = \frac{1}{2}\begin{pmatrix} 1 & -i \\ i & 1 \end{pmatrix}, \\
\rho_{01} &= W_1\ket{0}\bra{0}W_1^\dagger = \frac{1}{2}\begin{pmatrix} 1 & 1 \\ 1 & 1 \end{pmatrix}, \\
\rho_{11} &= W_1\ket{\psi_1}\bra{\psi_1}W_1^\dagger = \frac{1}{2}\begin{pmatrix} 1 & i \\ -i & 1 \end{pmatrix}.
\end{align}
Parametrize each $2 \times 2$ Hermitian matrix
$\begin{psmallmatrix} \alpha & u + iv \\ u - iv & \gamma \end{psmallmatrix}$
by the coordinate vector $(\alpha, u, v, \gamma) \in \mathbb{R}^4$.  In
these coordinates, the four density matrices become the rows of
\begin{equation}
R = \begin{pmatrix}
1 & 0 & 0 & 0 \\
1/2 & 0 & -1/2 & 1/2 \\
1/2 & 1/2 & 0 & 1/2 \\
1/2 & 0 & 1/2 & 1/2
\end{pmatrix}.
\end{equation}
A direct computation gives $\det(R) = -1/4 \neq 0$, confirming linear
independence for this configuration.

For general $N > 2$, the existence of a nonsingular configuration follows
from the fact that the set of rank-one projectors $\{\ket{\psi}\bra{\psi}
: \ket{\psi} \in \mathbb{C}^N,\, \|\psi\| = 1\}$ (the Veronese-type variety)
spans the full $N^2$-dimensional space of Hermitian matrices.  This
spanning property can be verified directly: for any pair $j < k$, the
projectors $\ket{e_j}\bra{e_j}$ generate the diagonal subspace; the
projectors $\frac{1}{2}(\ket{e_j} + \ket{e_k})(\bra{e_j} + \bra{e_k})$
generate the real off-diagonal directions $\frac{1}{2}(\ket{e_j}\bra{e_k}
+ \ket{e_k}\bra{e_j})$ (after subtracting the diagonal contribution); and
the projectors $\frac{1}{2}(\ket{e_j} + i\ket{e_k})(\bra{e_j} -
i\bra{e_k})$ generate the imaginary off-diagonal directions
$\frac{1}{2i}(\ket{e_j}\bra{e_k} - \ket{e_k}\bra{e_j})$.  Together these
produce $N + 2\binom{N}{2} = N^2$ linearly independent Hermitian matrices,
confirming that the Veronese-type variety spans $\mathbb{R}^{N^2}$.  Since $N^2$
generic points on a spanning variety are linearly independent (the
determinant of the corresponding $N^2 \times N^2$ matrix is a nonzero
polynomial in the parameters of the points), and the set of configurations
for which this determinant vanishes has measure zero, the claim follows for
all $N$.
\end{proof}

\paragraph{The role of non-orthogonal context states.}
The proof uses context states $\ket{\psi_i}$ that are not mutually
orthogonal.  This is essential.  If the context states form an orthonormal
basis $\{\ket{i}\}_{i=0}^{N-1}$, then for each fixed query unitary $W_j$,
the $N$ density matrices $\{W_j \ket{i}\bra{i} W_j^\dagger\}_{i=0}^{N-1}$
sum to the identity: $\sum_i W_j\ket{i}\bra{i}W_j^\dagger = W_j I
W_j^\dagger = I$.  This linear constraint reduces the number of independent
density matrices per unitary from $N$ to $N - 1$, and with $N$ unitaries,
the total number of independent density matrices is at most $N(N-1) =
N^2 - N$, which falls short of the $N^2$ needed for full span by $N$
dimensions.  Using non-orthogonal context states removes the constraint
$\sum_i \rho_{ij} = I$, allowing each unitary to contribute $N$
independent density matrices and enabling the full $N^2$-dimensional span.

\subsection{Lower Bound for Real Orthogonal Models}
\label{sec:lower_bound}

We now prove that any ROSM computing $\mathcal{D}_N$ requires real
dimension $d \geq N^2 - 2$, under the assumption that the target
log-probability matrix $L^*$ has full row rank $N^2$.  The proof proceeds
through three lemmas: (i)~the algebraic structure of the Born-rule output
(Lemma~\ref{lem:quadratic_lifting}), a standard quadratic-lifting
observation from quantum information
theory~\cite{nielsen2000quantum,renes2004symmetric}; (ii)~the rank of the
probability matrix $P^*$ (Lemma~\ref{lem:log_rank}); and (iii)~the rank
constraint imposed by the affine-softmax
readout~\cite{yang2018breaking} (Lemma~\ref{lem:softmax_rank}).  The lower
bound in Theorem~\ref{thm:separation} additionally assumes full row rank of
the target log-probability matrix $L^*$; we do not rely on rank
preservation under entrywise logarithms, which does not hold in general.
The Remark following Theorem~\ref{thm:separation} discusses this assumption
and identifies a log-odds reformulation as a path toward an unconditional
result.  The new contribution is the combination of these ingredients to
yield a formal conditional dimensional separation for sequence models
(Theorem~\ref{thm:separation}).

\begin{lemma}[Quadratic lifting]
\label{lem:quadratic_lifting}
The Born-rule probability~\eqref{eq:target_distribution} depends on the
state $\ket{\psi}$ only through the rank-one density matrix $\rho =
\ket{\psi}\bra{\psi}$, and the dependence is linear.  Specifically, fix an
orthonormal basis $\{E_1, \ldots, E_{N^2}\}$ for the $N^2$-dimensional real
vector space of $N \times N$ Hermitian matrices (with respect to the inner
product $\langle A, B \rangle = \operatorname{tr}(AB)$), and define the
vectorization $\operatorname{vec}(\rho) \in \mathbb{R}^{N^2}$ by
$[\operatorname{vec}(\rho)]_\alpha = \operatorname{tr}(E_\alpha\, \rho)$.
Then
\begin{equation}
p^*(k \mid \psi) = \operatorname{tr}(M_k\, \rho) =
\operatorname{vec}(M_k)^\top\, \operatorname{vec}(\rho),
\end{equation}
where $\operatorname{vec}(M_k) \in \mathbb{R}^{N^2}$ is the vectorization
of $M_k = \ket{m_k}\bra{m_k}$ in the same basis.
\end{lemma}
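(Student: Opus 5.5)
The plan is to establish the identity in two stages. First we rewrite the squared modulus as a trace. Then we expand that trace in the chosen orthonormal basis of Hermitian matrices.

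\emph{Stage one.} Starting from $p^*(k\mid\psi) = |\braket{m_k}{\psi}|^2$, write the squared modulus as $\braket{m_k}{\psi}\,\overline{\braket{m_k}{\psi}} = \braket{m_k}{\psi}\braket{\psi}{m_k}$. Then apply cyclicity of the trace to the scalar $\bra{m_k}\big(\ket{\psi}\bra{\psi}\big)\ket{m_k}$, which gives $\operatorname{tr}(\ket{m_k}\bra{m_k}\,\ket{\psi}\bra{\psi}) = \operatorname{tr}(M_k\rho)$. This shows directly that the probability depends on $\psi$ only through $\rho$. The overall phase of $\psi$ cancels in $\rho$, so the dependence is well defined. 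The dependence is also linear in $\rho$, since $\rho\mapsto\operatorname{tr}(M_k\rho)$ is a linear functional.

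\emph{Stage two.} Here we pass to coordinates. The space $\mathcal{H}_N$ of $N\times N$ Hermitian matrices is a real vector space of dimension $N^2$. On it, $\langle A,B\rangle = \operatorname{tr}(AB)$ is a real inner product. It is real because $\operatorname{tr}(AB)^* = \operatorname{tr}((AB)^\dagger) = \operatorname{tr}(BA) = \operatorname{tr}(AB)$. It is positive definite because $\operatorname{tr}(A^2) = \sum_{jk}|A_{jk}|^2$.

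With $\{E_\alpha\}$ orthonormal, every Hermitian $A$ expands as $A = \sum_\alpha \operatorname{tr}(E_\alpha A)\,E_\alpha$, that is, $A = \sum_\alpha [\operatorname{vec}(A)]_\alpha E_\alpha$. Substituting this expansion for both $M_k$ and $\rho$, bilinearity and orthonormality give
\[
\operatorname{tr}(M_k\rho) = \sum_{\alpha,\beta}[\operatorname{vec}(M_k)]_\alpha[\operatorname{vec}(\rho)]_\beta\operatorname{tr}(E_\alpha E_\beta) = \operatorname{vec}(M_k)^\top\operatorname{vec}(\rho).
\]
This is a Parseval identity in $\mathcal{H}_N$.

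\emph{Main obstacle.} There is no real obstacle here; the step needing care is confirming that the trace pairing is a genuine real inner product on $\mathcal{H}_N$. This is what makes the coordinates real and makes Parseval apply. We would also note that $M_k$ and $\rho$ are both Hermitian, being outer products of a vector with itself, so both lie in $\mathcal{H}_N$ and their vectorizations are legitimate.

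Finally, we would remark that the map $\psi\mapsto\operatorname{vec}(\rho)$ is quadratic in $\psi$. The $N^2$ coordinates of $\operatorname{vec}(\rho)$ comprise the diagonal entries $|c_j|^2$ and the real and imaginary parts of the off-diagonal entries $c_j c_{j'}^*$. This recovers the ``$N^2$ features'' count of Section~\ref{sec:born_rule}.
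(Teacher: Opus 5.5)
Your proposal is correct and follows the same route as the paper. You rewrite $|\langle m_k|\psi\rangle|^2$ as $\operatorname{tr}(M_k\rho)$, then expand both Hermitian matrices in the orthonormal basis $\{E_\alpha\}$ and use orthonormality (a Parseval identity) to obtain $\operatorname{vec}(M_k)^\top\operatorname{vec}(\rho)$; your added checks that $\operatorname{tr}(AB)$ is a genuine real inner product on Hermitian matrices, and that $M_k,\rho$ are Hermitian, are details the paper takes for granted, and they are sound.
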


\begin{proof}
The Born-rule probability is $p^*(k \mid \psi) = |\braket{m_k}{\psi}|^2 =
\operatorname{tr}(M_k \ket{\psi}\bra{\psi}) = \operatorname{tr}(M_k \rho)$.
Since both $M_k$ and $\rho$ are Hermitian,
$\operatorname{tr}(M_k \rho) = \sum_\alpha \operatorname{tr}(E_\alpha
M_k)\,\operatorname{tr}(E_\alpha \rho) = \operatorname{vec}(M_k)^\top
\operatorname{vec}(\rho)$, where the second equality uses the
orthonormality of $\{E_\alpha\}$.
\end{proof}

This lemma states that the Born-rule output is a quadratic function of the
complex amplitudes $c_j$ (since $\rho_{jk} = c_j c_k^*$), but a
\emph{linear} function of the vectorized density matrix
$\operatorname{vec}(\rho)$.  The density matrix has $N^2$ real coordinates:
$N$ diagonal entries $|c_j|^2$ and $N(N-1)$ real parameters in the
off-diagonal entries $c_j c_k^*$ (each complex off-diagonal entry
contributes two real numbers, but Hermiticity halves the independent
count).  The Born rule, by computing a linear function of
$\operatorname{vec}(\rho)$, accesses all $N^2$ of these features, even
though the state $\ket{\psi}$ itself has only $2N - 1$ real degrees of
freedom.  The extra features arise from the cross terms $c_j c_k^*$, which
encode pairwise phase relationships between amplitudes.  This quadratic
lifting~\cite{nielsen2000quantum} is the mechanism through which the Born
rule achieves $O(N^2)$ effective features from an $N$-dimensional complex
state; the formal separation in Theorem~\ref{thm:separation} is a
consequence of this lifting combined with the rank constraint on the
softmax readout.

\begin{lemma}[Rank of the probability matrix]
\label{lem:log_rank}
Let $\mathcal{D}_N$ use context states and query unitaries in general
position and an informationally complete measurement with $V \geq N^2$.
Define the $N^2 \times V$ probability matrix $P^*$ by
\begin{equation}
P^*_{(i,j),\, k} = p^*(k \mid i, j) = \operatorname{tr}(M_k\, \rho_{ij}),
\end{equation}
where rows are indexed by pairs $(i,j) \in \{0,\ldots,N-1\}^2$ and columns
by output tokens $k \in \{0, \ldots, V-1\}$.  Then $\operatorname{rank}(P^*)
= N^2$ for generic choices of the task parameters.  We do \emph{not} derive
any consequence for the entrywise logarithm $L^* = \log P^*$; entrywise
logarithms do not preserve matrix rank in general, and the lower bound
(Theorem~\ref{thm:separation}) is stated conditionally on a full-rank
assumption on $L^*$ rather than on any rank-preservation property of the
logarithm.
\end{lemma}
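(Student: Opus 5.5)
The plan is to factor $P^*$ through the vectorization of Lemma~\ref{lem:quadratic_lifting} and read off the rank from the two factors. Fix an orthonormal basis $\{E_\alpha\}_{\alpha=1}^{N^2}$ of the real space of $N\times N$ Hermitian matrices. By Lemma~\ref{lem:quadratic_lifting}, each entry can be written as
\begin{equation}
P^*_{(i,j),k} = \operatorname{tr}(M_k\rho_{ij}) = \operatorname{vec}(\rho_{ij})^\top \operatorname{vec}(M_k).
\end{equation}
Hence $P^* = R\,\mathcal{M}$. Here $R\in\mathbb{R}^{N^2\times N^2}$ has rows $\operatorname{vec}(\rho_{ij})^\top$; this is exactly the matrix appearing in the proof of Lemma~\ref{lem:general_position}. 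The matrix $\mathcal{M}\in\mathbb{R}^{N^2\times V}$ has columns $\operatorname{vec}(M_k)$. The whole argument reduces to showing that both factors have full rank $N^2$.

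For the left factor, the general-position hypothesis (Definition~\ref{def:general_position}) says the $N^2$ matrices $\rho_{ij}$ are linearly independent. Therefore $\det R\neq 0$ and $R$ is invertible. For the right factor, informational completeness (Definition~\ref{def:ic_povm}) says the $M_k$ span the Hermitian space. So the columns of $\mathcal{M}$ span $\mathbb{R}^{N^2}$, and $\operatorname{rank}\mathcal{M}=N^2$; this requires $V\ge N^2$, which is assumed.

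Combining the two factors, multiplication by the invertible $R$ preserves rank, so $\operatorname{rank}P^*=\operatorname{rank}\mathcal{M}=N^2$. This is the maximum possible, since $P^*$ has only $N^2$ rows. The statement actually holds for every general-position configuration, not merely generic ones.

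The phrase ``for generic choices'' then follows from Lemma~\ref{lem:general_position}, which shows that failures of general position form a measure-zero set. Informationally complete measurements exist by the weighted 2-design remark, so the hypotheses are nonvacuous.

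There is no real obstacle. The only point requiring care is that the vectorization is an isometric isomorphism to $\mathbb{R}^{N^2}$, so that linear independence and spanning statements about Hermitian matrices transfer verbatim to statements about $R$ and $\mathcal{M}$. This follows from the orthonormality of $\{E_\alpha\}$ under $\operatorname{tr}(AB)$.

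I would close by restating, as the lemma does, that nothing is claimed about $L^*=\log P^*$. The lemma also does not need the entries of $P^*$ to be positive; positivity only matters later, for the logarithm to be defined.
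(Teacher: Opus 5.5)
Your proposal is correct and follows the paper's Step~1 exactly. You factor $P^*$ through the vectorization of Lemma~\ref{lem:quadratic_lifting} into an invertible $\rho$-matrix times a full-row-rank measurement matrix, then read off rank $N^2$ from general position and informational completeness; your orientation $P^* = R\,\mathcal{M}$ with $\mathcal{M}\in\mathbb{R}^{N^2\times V}$ is in fact the dimensionally consistent version of the paper's $\hat R\,\hat M^\top$. The paper adds a Step~2 showing $P^*>0$ entrywise for generic parameters, which, as you correctly note, is not needed for the rank claim and only matters for $L^*=\log P^*$ to be defined.
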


\begin{proof}
The proof proceeds in two steps.

\textit{Step 1: The probability matrix has rank $N^2$.}
By Lemma~\ref{lem:quadratic_lifting}, each row of $P^*$ is
$\operatorname{vec}(\rho_{ij})^\top \hat{M}$, where $\hat{M} \in
\mathbb{R}^{N^2 \times V}$ has columns $\operatorname{vec}(M_k)$.  In
matrix form, $P^* = \hat{R}\, \hat{M}^\top$, where $\hat{R} \in
\mathbb{R}^{N^2 \times N^2}$ has rows $\operatorname{vec}(\rho_{ij})^\top$.
The general-position condition guarantees $\operatorname{rank}(\hat{R}) =
N^2$ (Definition~\ref{def:general_position}), and informational
completeness guarantees $\operatorname{rank}(\hat{M}) = N^2$
(Definition~\ref{def:ic_povm}).  Therefore $\operatorname{rank}(P^*) = N^2$.

\textit{Step 2: All entries of $P^*$ are positive.}
Each entry $P^*_{(i,j),\,k} = |\braket{m_k}{W_j\psi_i}|^2$.  A squared
modulus of an inner product of two nonzero vectors vanishes only when the
vectors are orthogonal.  For generic choices of $\ket{\psi_i}$, $W_j$, and
$\ket{m_k}$, no state $W_j\ket{\psi_i}$ is exactly orthogonal to any
measurement vector $\ket{m_k}$, because the set of orthogonal pairs has
measure zero in the product of unit spheres.  Therefore $P^* > 0$
entrywise for generic parameters.
\end{proof}

\begin{lemma}[Softmax rank bound]
\label{lem:softmax_rank}
Let a ROSM of dimension $d$ produce output probabilities $\bar{p}(k \mid
c)$ for $C$ input contexts via the affine-softmax
readout~\eqref{eq:rosm_output}, with hidden states $h_c \in \mathbb{R}^d$,
weight vectors $w_k \in \mathbb{R}^d$, and bias $b \in \mathbb{R}^V$.
Define the $C \times V$ log-probability matrix $\bar{L}$ by $\bar{L}_{c,k}
= \log \bar{p}(k \mid c)$.  Then
\begin{equation}
\operatorname{rank}(\bar{L}) \leq d + 2.
\end{equation}
\end{lemma}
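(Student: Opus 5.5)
The plan is to write $\bar{L}$ explicitly as a sum of a few low-rank pieces and then apply subadditivity of rank. Fix a context $c$. By the affine-softmax readout~\eqref{eq:rosm_output},
\begin{equation}
\bar{L}_{c,k} = w_k^\top h_c + b_k - \log Z_c, \qquad Z_c = \sum_{k'=0}^{V-1} \exp\!\big(w_{k'}^\top h_c + b_{k'}\big).
\end{equation}
The key observation is that each of the three terms factors through a small inner dimension. The first depends on $c$ only through $h_c\in\mathbb{R}^d$. The second depends only on $k$. The third depends only on $c$.

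Accordingly, I would let $H\in\mathbb{R}^{C\times d}$ have rows $h_c^\top$ and $W\in\mathbb{R}^{V\times d}$ have rows $w_k^\top$. Let $z\in\mathbb{R}^C$ have entries $\log Z_c$, and let $\mathbf{1}_C,\mathbf{1}_V$ denote all-ones vectors. Then
\begin{equation}
\bar{L} = H W^\top + \mathbf{1}_C\, b^\top - z\,\mathbf{1}_V^\top .
\end{equation}
The first summand has rank at most $d$, since it factors through $\mathbb{R}^d$. The other two are outer products and so have rank at most $1$ each. Subadditivity of rank, $\operatorname{rank}(A+B)\le\operatorname{rank}(A)+\operatorname{rank}(B)$, then gives $\operatorname{rank}(\bar{L})\le d+2$.

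For a cleaner presentation, I would equivalently write $\bar{L}=\tilde H\tilde W^\top$ with augmented factors
\begin{equation}
\tilde H=[\,H,\ \mathbf{1}_C,\ -z\,]\in\mathbb{R}^{C\times(d+2)}, \qquad \tilde W=[\,W,\ b,\ \mathbf{1}_V\,]\in\mathbb{R}^{V\times(d+2)}.
\end{equation}
This makes the inner dimension $d+2$ explicit.

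There is no real obstacle. The only points needing care are that the bound uses neither orthogonality of the $Q_x$ nor $\|h_c\|=1$, and that $\log Z_c$ is well defined because $Z_c>0$. I would note both, so that the lemma visibly applies to any affine-softmax readout. This generality is exactly what Theorem~\ref{thm:separation} will combine with the assumed $\operatorname{rank}(L^*)=N^2$ to force $d\ge N^2-2$.
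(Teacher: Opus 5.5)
Your proof is correct and is the same as the paper's. The paper also writes $\bar{L} = H W_{\mathrm{out}}^\top + \mathbf{1}_C b^\top - \lambda \mathbf{1}_V^\top$ and applies subadditivity of rank, first bounding the logit part by $d+1$ and then adding $1$ for the log-partition term; your augmented factorization $\bar{L} = \tilde{H}\tilde{W}^\top$ with inner dimension $d+2$ is just a compact restatement of that decomposition.
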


\begin{proof}
The log-probability under the softmax readout is
\begin{equation}
\log \bar{p}(k \mid c) = w_k^\top h_c + b_k - \log \sum_{k'}
\exp(w_{k'}^\top h_c + b_{k'}).
\end{equation}
Define the logit matrix $Z \in \mathbb{R}^{C \times V}$ by $Z_{c,k} =
w_k^\top h_c + b_k$, and the log-partition vector $\lambda \in \mathbb{R}^C$
by $\lambda_c = \log \sum_{k'} \exp(Z_{c,k'})$.  Then $\bar{L} = Z -
\lambda\, \mathbf{1}_V^\top$, where $\mathbf{1}_V \in \mathbb{R}^V$ is the
all-ones vector.

The logit matrix factors as $Z = H\, W_{\mathrm{out}}^\top + \mathbf{1}_C\,
b^\top$, where $H \in \mathbb{R}^{C \times d}$ has rows $h_c^\top$ and
$W_{\mathrm{out}} \in \mathbb{R}^{V \times d}$ has rows $w_k^\top$.  By
subadditivity of matrix rank, $\operatorname{rank}(Z) \leq
\operatorname{rank}(H W_{\mathrm{out}}^\top) + \operatorname{rank}(\mathbf{1}_C
b^\top) \leq d + 1$.  Since $\lambda \mathbf{1}_V^\top$ is a rank-one
matrix,
\begin{equation}
\operatorname{rank}(\bar{L}) = \operatorname{rank}(Z - \lambda
\mathbf{1}_V^\top) \leq \operatorname{rank}(Z) +
\operatorname{rank}(\lambda \mathbf{1}_V^\top) \leq (d + 1) + 1 = d + 2.
\qedhere
\end{equation}
\end{proof}

This rank bound is a consequence of the same structural property identified
by Yang et al.~\cite{yang2018breaking} in their analysis of the softmax
bottleneck: because the logits depend on the hidden state through a linear
projection, the resulting log-probability matrix is constrained to a
low-dimensional affine subspace of $\mathbb{R}^{C \times V}$.  The bound
is specific to the affine-softmax readout.  A readout that computes a
nonlinear function of $h$ before applying softmax (such as a two-layer MLP,
or the mixture-of-softmax readout proposed in~\cite{yang2018breaking}) would
produce a log-probability matrix whose rank is not bounded by $d + 2$, and
the lower bound in Theorem~\ref{thm:separation} would not apply to such a
model.  The separation theorem therefore compares two specific model
families, not all possible architectures.

\begin{theorem}[Expressivity separation (conditional)]
\label{thm:separation}
For every $N \geq 2$, there exists a disambiguation task $\mathcal{D}_N$
with vocabulary size $V = N^2$ satisfying the general-position and
informational-completeness conditions such that:
\begin{enumerate}
    \item A CUSM of complex dimension $N$ computes $\mathcal{D}_N$ exactly.
    \item Suppose additionally that the target log-probability matrix
          $L^*$, with entries $L^*_{(i,j),k} = \log p^*(k \mid i, j)$,
          has full row rank $N^2$.  Then any ROSM of real dimension $d$
          with affine-softmax readout that computes $\mathcal{D}_N$
          satisfies $d \geq N^2 - 2$.
\end{enumerate}
\end{theorem}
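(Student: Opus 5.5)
The plan is to assemble the theorem from the results already established, in three parts: existence of a suitable task, the upper bound, and the conditional lower bound.

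\emph{Existence of the task.} Fix $N\ge 2$ and take $V=N^2$. I first choose an informationally complete measurement with exactly $N^2$ rank-one elements satisfying $\sum_k \ket{m_k}\bra{m_k}=I_N$. The weighted 2-design construction cited after Definition~\ref{def:ic_povm} gives such a measurement. With exactly $N^2$ elements, informational completeness means the $M_k$ form a basis of the Hermitian matrices. Next, by Lemma~\ref{lem:general_position}, almost every choice of context states and query unitaries is in general position. By Step~2 of Lemma~\ref{lem:log_rank}, almost every choice also makes every entry of $P^*$ strictly positive, so $L^*=\log P^*$ is well defined. The intersection of these two full-measure sets is nonempty, and I fix one configuration from it. This defines $\mathcal{D}_N$.

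\emph{Part 1 (upper bound).} This follows immediately from Proposition~\ref{prop:upper_bound}. Set the initial state to $\ket{\psi_0}$, let the context token $a_i$ act by a unitary $V_i$ with $V_i\ket{\psi_0}=\ket{\psi_i}$, let the filler $\sigma$ act as $I_N$, let the query token $b_j$ act as $W_j$, and use the task's measurement vectors. The Born-rule output is then exactly $p^*$.

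\emph{Part 2 (conditional lower bound).} Suppose a ROSM of dimension $d$ computes $\mathcal{D}_N$. Then its $N^2\times V$ log-probability matrix $\bar L$, over the $C=N^2$ contexts $(i,j)$, coincides entrywise with $L^*$. Both matrices are logarithms of identical, strictly positive probabilities, so $\log$ is applied to the same numbers in each. Lemma~\ref{lem:softmax_rank} gives $\operatorname{rank}(\bar L)\le d+2$. The full-row-rank hypothesis gives $\operatorname{rank}(L^*)=N^2$. Combining these, $N^2\le d+2$, so $d\ge N^2-2$. Note that the orthogonality of the $Q_x$ and the sequence structure play no role here. Only the affine-softmax readout applied to the $N^2$ final hidden states matters, which is also why the filler length $T$ does not enter the bound.

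\emph{Main obstacle.} The mathematical content is light. The delicate step is making the existence claim honest: all three genericity conditions (general position, informational completeness with $V=N^2$, and positivity of $P^*$) must hold simultaneously. I would handle this by fixing the measurement first and then intersecting two full-measure subsets of $(S^{2N-1})^N\times U(N)^N$. For Step~2 of Lemma~\ref{lem:log_rank}, I would check that for each fixed $k$, $i$ and $j$, the condition $\braket{m_k}{W_j\psi_i}=0$ cuts out a proper real-analytic subset of the parameter space, hence a measure-zero set. The full-rank condition on $L^*$ is not proved but assumed, exactly as the theorem states. I would not attempt to derive it from $\operatorname{rank}(P^*)=N^2$, since the entrywise logarithm need not preserve rank.
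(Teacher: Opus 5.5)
Your proposal is correct and is essentially the paper's proof. Part~1 is Proposition~\ref{prop:upper_bound}, and Part~2 combines $\bar L = L^*$ with Lemma~\ref{lem:softmax_rank} and the full-row-rank hypothesis. Your explicit existence step (fix the measurement, then intersect the general-position and strict-positivity full-measure sets so that $L^*$ is finite) is a worthwhile addition that the paper leaves implicit.

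One ingredient of that existence step does need repair: how you obtain a measurement with exactly $V = N^2$ elements. A weighted 2-design with exactly $N^2$ elements is necessarily a SIC-POVM. To see this, write $P_k$ for its projectors and $w_k$ for its weights. The 2-design identity gives $\sum_k w_k \operatorname{tr}(P_k P_l) P_k = \frac{1}{N(N+1)}(P_l + I_N)$ together with $I_N = N\sum_k w_k P_k$. When the $P_k$ form a basis, comparing coefficients forces $w_k = 1/N^2$ and $\operatorname{tr}(P_k P_l) = 1/(N+1)$ for $k \neq l$. Existence of SICs in every dimension is Zauner's conjecture, which is open in general. So the citation, which the paper also makes after Definition~\ref{def:ic_povm}, does not establish $V = N^2$ for all $N$.

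The fix is elementary. Take the $N^2$ vectors $u_k$ from the spanning argument in Lemma~\ref{lem:general_position}, namely $e_j$, $(e_j+e_k)/\sqrt{2}$ and $(e_j+ie_k)/\sqrt{2}$ for $j<k$; their projectors form a basis of the Hermitian matrices. Set $S = \sum_k u_k u_k^\dagger \succ 0$ and $\ket{m_k} = S^{-1/2}u_k$. Then $\sum_k \ket{m_k}\bra{m_k} = I_N$. The $M_k$ remain a basis, because $X \mapsto S^{-1/2} X S^{-1/2}$ is a linear bijection of the Hermitian matrices. With this measurement, the rest of your argument goes through unchanged.
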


\begin{proof}
Part (1) is Proposition~\ref{prop:upper_bound}.  For part (2), suppose a
ROSM of dimension $d$ computes $\mathcal{D}_N$, meaning its output
probabilities $\bar{p}(k \mid i, j) = p^*(k \mid i, j)$ for all $i, j, k$.
Then the log-probability matrices coincide: $\bar{L} = L^*$.  By
assumption, $\operatorname{rank}(L^*) = N^2$.  By
Lemma~\ref{lem:softmax_rank}, $\operatorname{rank}(\bar{L}) \leq d + 2$.
Combining these:
\begin{equation}
N^2 = \operatorname{rank}(L^*) = \operatorname{rank}(\bar{L}) \leq d + 2,
\end{equation}
which gives $d \geq N^2 - 2$.
\end{proof}

\begin{remark}
\label{rem:conditional}
The full-rank condition on $L^*$ in part~(2) of
Theorem~\ref{thm:separation} is the only point in the lower-bound argument
where a property beyond the general-position and informational-completeness
conditions on the task parameters is required.  We do \emph{not} rely on
the claim that entrywise logarithms preserve matrix rank: this does not
hold in general, and $\operatorname{rank}(\log P) \neq \operatorname{rank}(P)$
is possible even for $2 \times 2$ matrices.  The condition
$\operatorname{rank}(L^*) = N^2$ is a genericity-type assumption on the
task-construction parameters; a complete analytic characterization of when
it holds is left for future work.

One promising route toward removing this assumption is to reformulate the
rank argument in terms of relative log-odds.  For a fixed reference token
$k_0$, define $\ell^*_{(i,j),k} = \log\frac{p^*(k \mid i,
j)}{p^*(k_0 \mid i, j)}$ for $k \neq k_0$.  Under the affine-softmax
readout (Definition~\ref{def:rosm}), the relative logits of the ROSM are
exactly affine in the hidden state:
\begin{equation}
\bar{\ell}_{c,k} = (w_k - w_{k_0})^\top h_c + (b_k - b_{k_0}),
\end{equation}
so the $N^2 \times (V-1)$ matrix of relative logits has rank at most $d$,
providing a cleaner rank constraint without any assertion about entrywise
logarithms.  Whether a rank-$N^2$ condition on $\ell^*$ follows from the
task structure alone remains an open question that we leave for future work.
\end{remark}

The separation in Theorem~\ref{thm:separation} is between the complex
dimension of the CUSM and the real dimension of the ROSM.  A CUSM of
complex dimension $N$ uses $2N$ real numbers to represent its state (the
real and imaginary parts of $N$ complex amplitudes), while the ROSM
requires at least $N^2 - 2$ real numbers.  The ratio $(N^2 - 2) / (2N)$
grows as $N/2$ for large $N$, so the real orthogonal model needs a state
space that is a factor of $\Theta(N)$ larger than the real representation
of the complex state.  In terms of the complex dimension $N$ alone, the
gap is quadratic: a complex model of dimension $N$ achieves what a real
model cannot achieve below dimension $\Omega(N^2)$.

\subsection{The Source of the Gap}
\label{sec:gap_source}

The separation proved in Theorem~\ref{thm:separation} has a precise
algebraic origin: it arises from the interaction between the quadratic
Born-rule readout and the linear softmax readout, not from any intrinsic
limitation of real-valued dynamics.  Tracing the proof identifies two
specific mechanisms.

\paragraph{The quadratic lifting.}
A complex amplitude $c_j = r_j e^{i\theta_j}$ carries two real numbers per
coordinate, but the quadratic gap does not arise from this factor-of-two
storage advantage (which accounts for only a constant factor, not a
quadratic one).  The gap arises from the Born rule's quadratic dependence
on the state.  The Born-rule probability for output $k$ is
\begin{equation}
p(k \mid \psi) = \left|\sum_{j=0}^{N-1} [m_k]_j^* c_j\right|^2 =
\sum_{j=0}^{N-1} |[m_k]_j|^2 |c_j|^2 + 2 \sum_{j < j'}
\operatorname{Re}\!\Big([m_k]_j^* [m_k]_{j'}\, c_j^* c_{j'}\Big).
\end{equation}
The first sum involves $N$ terms that depend only on magnitudes $|c_j|$.
The second sum involves $\binom{N}{2}$ cross terms, each depending on a
relative phase $\theta_{j'} - \theta_j$ through $c_j^* c_{j'} = r_j
r_{j'} e^{i(\theta_{j'} - \theta_j)}$.  The density matrix $\rho =
\ket{\psi}\bra{\psi}$ collects all $N^2$ of these features (diagonal
magnitudes and off-diagonal phase products), and the Born rule reads them
off via the linear map $\rho \mapsto \operatorname{tr}(M_k \rho)$
(Lemma~\ref{lem:quadratic_lifting}).  This is a Veronese-type lifting: the
map $\ket{\psi} \mapsto \ket{\psi}\bra{\psi}$ sends the $N$-dimensional
complex unit sphere into the $N^2$-dimensional space of Hermitian matrices,
and the Born rule then computes a linear function in the lifted
space~\cite{nielsen2000quantum}.  An analogous lifting is exploited by Born
machines in generative modeling~\cite{cheng2018information,liu2018differentiable},
where this quadratic structure enables expressive probability estimation
without explicit normalization.

\paragraph{The linear bottleneck of softmax.}
The ROSM's softmax readout computes $\log p(k \mid h) = w_k^\top h + b_k
- \log Z$, which depends on the $d$-dimensional state $h$ through the
linear projection $w_k^\top h$.  The full log-probability vector lies in a
$(d+2)$-dimensional subspace (Lemma~\ref{lem:softmax_rank}).  When the
Born-rule target requires $N^2$ independent features, this subspace must
have dimension at least $N^2$, forcing $d \geq N^2 - 2$.  The ROSM cannot
synthesize pairwise features from individual state components, because the
softmax readout is linear in $h$~\cite{yang2018breaking}.  A ROSM with a
nonlinear readout (e.g., a polynomial or MLP-based readout) would not be
subject to this linear bottleneck, and the lower bound of
Theorem~\ref{thm:separation} would not apply to it.

\paragraph{What the theorem does not claim.}
The separation applies specifically to the comparison between the
Born-rule readout (quadratic, complex) and the affine-softmax readout
(linear, real).  It does not establish a fundamental limitation of
real-valued dynamics independent of the readout mechanism.  A real-valued
model with a quadratic readout $p(k) \propto (w_k^\top h)^2$ with
appropriate normalization would perform its own Veronese-type lifting and could
potentially match the Born rule's feature count.  Such readouts are not
standard in the architectures surveyed in Section~\ref{sec:related}, but
they are not precluded by any fundamental constraint.

The theorem does not require the task $\mathcal{D}_N$ to correspond to any
specific linguistic phenomenon.  The task family uses abstract context and
query tokens, not natural-language words.  Whether the algebraic structure
exploited by $\mathcal{D}_N$ (pairwise token interactions encoded in
relative phases) is relevant to natural-language processing is an empirical
question.  The task serves to demonstrate a provable gap between two
specific architectures, not to model a particular linguistic phenomenon.

The theorem is also purely representational: it characterizes the minimum
state dimension required to represent a given family of input-output
mappings, not whether gradient-based optimization can find the parameters
that realize this mapping.  A model with sufficient representational
capacity may fail to learn the target function if the loss landscape is
unfavorable.  The experimental protocols in Section~\ref{sec:predictions}
are designed to test whether the representational advantage identified here
manifests in gradient-trained models.

\section{Discussion}
\label{sec:discussion}

The preceding sections developed three theoretical contributions: an
architecture whose state evolves unitarily in \(\mathbb{C}^N\) under a
learned Hamiltonian and whose output follows the Born rule
(Section~\ref{sec:model}), a continuity equation that decomposes each
state update into antisymmetric pairwise probability currents
(Section~\ref{sec:currents}), and a separation theorem establishing that a
complex unitary model of dimension \(N\) represents a family of
disambiguation tasks that requires dimension \(\Omega(N^2)\) in any real
orthogonal model with affine-softmax readout
(Section~\ref{sec:separation}).  This section examines four aspects of the
framework that the theoretical development leaves open: the computational
cost of the architecture, the concrete predictions that the theory makes
and the experimental protocols that would test them, the boundaries of what
the present analysis does and does not establish, and the theoretical
directions that extend beyond the results proved here.  Because the present
paper is entirely theoretical, the experimental protocols described in
Section~\ref{sec:predictions} are intended as a precise specification of
the empirical program needed to validate or refute the theory's claims, not
as a report of completed experiments.

\subsection{Computational Complexity}
\label{sec:complexity}

The architecture defined in Section~\ref{sec:model} involves four
computational stages at each time step: generating the interaction
Hamiltonian from the current token and state, conjugating into the
interaction picture, solving the Cayley linear system, and computing
Born-rule output probabilities.  We analyze the cost of each stage and the
resulting per-step and per-sequence complexity.

\paragraph{Per-step forward cost.}
At step \(t\), the network \(g_\theta\) receives the concatenation of the
token embedding \(\operatorname{Embed}(x_t) \in \mathbb{R}^d\) and the
real-imaginary representation of the current state
\([\operatorname{Re}(c(t)),\,\operatorname{Im}(c(t))] \in
\mathbb{R}^{2N}\), and produces the matrix \(\Phi(t) \in \mathbb{C}^{N
\times r}\) and the vector \(\delta(t) \in \mathbb{R}^N\).  The output has
\(2Nr + N\) real components.  If \(g_\theta\) is a feedforward network
with \(L_g\) layers and maximum hidden width \(h\), its evaluation cost is
\[
C_{g_\theta} = O(L_g \cdot h \cdot \max(d + 2N,\,h,\,2Nr + N)),
\]
which for \(d\), \(N\), and \(h\) of comparable magnitude simplifies to
\(O(L_g h^2)\).

The interaction-picture conjugation (equation~\eqref{eq:ip_elements})
replaces each entry \(\Phi_{ja}(t)\) with \(\tilde{\Phi}_{ja}(t) =
e^{i\lambda_j t}\,\Phi_{ja}(t)\), which costs \(O(Nr)\) complex
multiplications.  The resulting interaction-picture Hamiltonian has the form
\begin{equation}
H_{\mathrm{int},I}(t) = \tilde{\Phi}(t)\,\tilde{\Phi}(t)^\dagger +
\operatorname{diag}(\delta(t)),
\end{equation}
which inherits the low-rank-plus-diagonal structure of \(H_{\mathrm{int}}(t)\)
because the conjugation by the diagonal matrix \(e^{iH_0 t}\) preserves
diagonal structure and maps \(\Phi\Phi^\dagger\) to
\(\tilde{\Phi}\tilde{\Phi}^\dagger\).

The Cayley update~\eqref{eq:cayley_update} requires solving the linear
system
\begin{equation}
\underbrace{\left(D + \tfrac{i\Delta t}{2}\,\tilde{\Phi}\,
\tilde{\Phi}^\dagger\right)}_{A}\,\ket{\psi_{t+1}} =
\underbrace{\left(D' - \tfrac{i\Delta t}{2}\,\tilde{\Phi}\,
\tilde{\Phi}^\dagger\right)\ket{\psi_t}}_{b},
\end{equation}
where \(D = I + \frac{i\Delta t}{2}\operatorname{diag}(\delta)\) and \(D'
= I - \frac{i\Delta t}{2}\operatorname{diag}(\delta)\) are both diagonal.
Each diagonal entry of \(D\) has the form \(1 + \frac{i\Delta
t}{2}\delta_j\), with modulus \(\sqrt{1 + (\Delta t\,\delta_j/2)^2} > 0\),
so \(D\) is invertible for all real \(\delta_j\) and all \(\Delta t > 0\).

The right-hand side \(b\) is computed by first forming
\(v = \tilde{\Phi}^\dagger\ket{\psi_t} \in \mathbb{C}^r\) at cost
\(O(Nr)\), then \(\tilde{\Phi}\,v \in \mathbb{C}^N\) at cost \(O(Nr)\),
and finally \(b = D'\ket{\psi_t} - \frac{i\Delta
t}{2}\,\tilde{\Phi}\,v\) at cost \(O(N)\).  The total cost for the
right-hand side is \(O(Nr)\).

The system \(A\ket{\psi_{t+1}} = b\) is solved via the Woodbury matrix
identity.  Because \(A = D + \frac{i\Delta t}{2}\tilde{\Phi}\tilde{\Phi}^\dagger\)
is a rank-\(r\) update of the invertible diagonal matrix \(D\), the
solution is
\begin{equation}
\label{eq:woodbury_solve}
\ket{\psi_{t+1}} = D^{-1}b -
\tfrac{i\Delta t}{2}\,D^{-1}\tilde{\Phi}
\left(I_r +
\tfrac{i\Delta t}{2}\,\tilde{\Phi}^\dagger D^{-1}\tilde{\Phi}
\right)^{-1}
\tilde{\Phi}^\dagger D^{-1}b.
\end{equation}
The computation proceeds as follows: (i) compute \(y = D^{-1}b\) at cost
\(O(N)\); (ii) compute \(z = \tilde{\Phi}^\dagger y \in \mathbb{C}^r\) at
cost \(O(Nr)\); (iii) form \(P = D^{-1}\tilde{\Phi} \in \mathbb{C}^{N
\times r}\) at cost \(O(Nr)\); (iv) form the \(r \times r\) Gram matrix
\(G = I_r + \frac{i\Delta t}{2}\,\tilde{\Phi}^\dagger P\) at cost
\(O(Nr^2)\); (v) solve \(Gw = z\) for \(w \in \mathbb{C}^r\) at cost
\(O(r^3)\); (vi) compute \(\ket{\psi_{t+1}} = y - \frac{i\Delta
t}{2}\,Pw\) at cost \(O(Nr)\).  The dominant cost is step~(iv), giving a
total Cayley solve cost of \(O(Nr^2 + r^3)\).  Since the design intent is
\(r \ll N\) (Section~\ref{sec:hamiltonian}), this simplifies to
\(O(Nr^2)\).

A caveat on numerical stability is warranted.  The Woodbury solve requires
inverting the \(r \times r\) matrix \(G = I_r + \frac{i\Delta
t}{2}\,\tilde{\Phi}^\dagger D^{-1}\tilde{\Phi}\).  The conditioning of
\(G\) depends on the eigenvalues of \(\tilde{\Phi}^\dagger D^{-1}\tilde{\Phi}\),
which in turn depend on the interaction Hamiltonian output by \(g_\theta\).
For large step sizes \(\Delta t\) or large Hamiltonian eigenvalues, \(G\)
may become poorly conditioned, degrading the accuracy of the solve.  No
general bound on the condition number of \(G\) is available without
additional constraints on the output range of \(g_\theta\).  In practice,
constraining the spectral norm of \(\Phi(t)\) through regularization of
\(g_\theta\), or monitoring the condition number of \(G\) during training,
would be necessary to ensure reliable
numerics~\cite{hairer2006geometric,golub2013matrix}.  We flag this as a
point requiring empirical investigation in any implementation.

The Born-rule output~\eqref{eq:born_rule} requires computing
\(M^\dagger\ket{\psi(t)} \in \mathbb{C}^V\) (where \(M \in \mathbb{C}^{N
\times V}\) is the measurement matrix) and then taking entry-wise squared
moduli.  The matrix-vector product costs \(O(NV)\), and the subsequent
squaring costs \(O(V)\).  The total Born-rule cost is therefore \(O(NV)\).

Combining all stages, the per-step forward cost is
\begin{equation}
\label{eq:per_step_cost}
O\!\left(C_{g_\theta} + Nr^2 + NV\right).
\end{equation}
For a vocabulary of size \(V\) in the tens of thousands and a latent
dimension \(N\) in the hundreds with \(r\) in the low tens, the \(O(NV)\)
Born-rule cost dominates.  This cost is the same order as the
output-projection cost \(O(dV)\) incurred by any architecture that
computes a dense projection from a hidden state of dimension \(d\) to a
vocabulary of size \(V\), so the Born-rule decoding does not introduce
additional asymptotic overhead relative to the standard softmax readout.

The row-orthonormality constraint on the measurement matrix \(M\), which
is required for the Born rule to produce valid probabilities
(equation~\eqref{eq:resolution}), is enforced via QR decomposition of
\(M^\dagger \in \mathbb{C}^{V \times N}\) at each forward pass.  This
projects \(M\) onto the Stiefel
manifold~\cite{absil2008optimization,townsend2016pymanopt} before each
evaluation, but the stored parameter \(M\) is not itself constrained to the
manifold; only its QR-decomposed version is used in the forward pass.  The
gradient of the loss with respect to the raw parameter flows back through
the QR decomposition, which is a non-trivial operation on the Stiefel
manifold whose correct handling requires that the automatic differentiation
framework propagate gradients through the QR factorization
accurately~\cite{wisdom2016full,townsend2016pymanopt}.  An alternative that
avoids this issue entirely is to parameterize \(M\) directly on the Stiefel
manifold using Riemannian gradient descent or a retraction-based
optimizer~\cite{absil2008optimization}; we do not prescribe a specific
approach here but note that the choice of optimization strategy for \(M\)
is consequential and requires careful treatment in any implementation.

\paragraph{Backward pass.}
Backpropagation through the Cayley solve at each step requires solving one
additional linear system with coefficient matrix \(A^\dagger = (I +
\frac{i\Delta t}{2}\,H_{\mathrm{int},I})^\dagger = I - \frac{i\Delta
t}{2}\,H_{\mathrm{int},I}\).  Because \(H_{\mathrm{int},I}\) is
Hermitian, this adjoint matrix has the form \(D^* - \frac{i\Delta
t}{2}\,\tilde{\Phi}\tilde{\Phi}^\dagger\), where \(D^* = I - \frac{i\Delta
t}{2}\operatorname{diag}(\delta)\) is again an invertible diagonal matrix.
The Woodbury identity applies with the same structure and cost \(O(Nr^2)\).
The backward pass through the Born-rule output and through \(g_\theta\)
adds costs of the same order as the forward pass through each respective
stage.  The total per-step backward cost is therefore also \(O(C_{g_\theta}
+ Nr^2 + NV)\), and the per-step cost scales identically in both
directions.

It is important to distinguish two separate stability properties of the
backward pass.  The unitary structure of each Cayley step guarantees that
the gradient of the loss with respect to the hidden \emph{state} at time
\(t\) propagates through \(T - t\) steps without change in norm: since
each step applies a unitary map to the state, the adjoint of that map is
also unitary, and the chain rule preserves the gradient norm
exactly~\cite{vorontsov2017orthogonality}.  However, this guarantee does
not extend to the gradients with respect to the \emph{parameters} of
\(g_\theta\).  At each step \(t\), the parameter gradient passes through
\(g_\theta\), which is an unconstrained neural network without
norm-preservation guarantees.  Gradient vanishing and exploding can
therefore occur inside \(g_\theta\) at each time step, independently of the
unitary structure of the state update.  Standard mitigation strategies for
deep networks, such as careful initialization, batch normalization within
\(g_\theta\), or gradient clipping applied to the parameter gradients (but
not to the state gradients), remain necessary.

\paragraph{Memory.}
The recurrent state \(\ket{\psi(t)} \in \mathbb{C}^N\) requires \(O(N)\)
storage, independent of the sequence length \(T\).  The measurement matrix
\(M \in \mathbb{C}^{N \times V}\) requires \(O(NV)\) storage.  The free
frequencies \(\{\lambda_j\}\) require \(O(N)\) storage.  The parameters of
\(g_\theta\) require \(O(|\theta|)\) storage, where \(|\theta|\) is the
parameter count of the network.  The QR decomposition of \(M^\dagger\)
costs \(O(VN^2)\) and depends only on the parameter matrix \(M\), not on
the input sequence; it is therefore performed once per parameter update and
its cost is amortized over the \(T\) steps of the sequence to
\(O(VN^2/T)\) per step.

\paragraph{Comparison with existing architectures.}
The per-step state-update cost of \(O(Nr^2)\) compares with \(O(N^2)\) for
the full-capacity unitary RNN of Wisdom et al.~\cite{wisdom2016full} (which
applies a dense \(N \times N\) unitary matrix at each step) and \(O(N\log
N)\) for the factored uRNN of Arjovsky et
al.~\cite{arjovsky2016unitary}.  The reduction from \(O(N^2)\) to
\(O(Nr^2)\) is the cost of restricting the interaction Hamiltonian to rank
\(r\) rather than using a full unitary parameterization.  This restriction
limits the number of independent coupling patterns per step to \(r\)
(Section~\ref{sec:hamiltonian}), which is a representational tradeoff:
fewer coupling channels per step in exchange for lower per-step cost.
Whether this tradeoff is favorable depends on whether the tasks encountered
in practice require more than \(r\) simultaneous coupling patterns at a
single time step, a question that the theory developed here does not answer
and that requires empirical investigation.

The \(O(N)\) memory cost for the recurrent state is shared by all recurrent
architectures, including standard RNNs, LSTMs~\cite{hochreiter1997long},
unitary RNNs, and state-space models~\cite{gu2023mamba}, and contrasts with
the \(O(TdL)\) key-value cache of a Transformer with \(L\) layers, hidden
dimension \(d\), and sequence length \(T\)~\cite{vaswani2017attention}.
The distinction is quantitatively significant for long sequences: at
\(T = 10^4\), \(d = 512\), and \(L = 12\), the Transformer cache requires
approximately \(1.2 \times 10^8\) floating-point numbers, while the
recurrent state requires \(2N\) (accounting for real and imaginary parts).

\subsection{Testable Predictions and Experimental Protocols}
\label{sec:predictions}

The theoretical analysis yields five predictions, each traceable to a
specific result or construction in the preceding sections.  For each
prediction, we state the claim, identify its theoretical basis, and
describe an experimental protocol that would confirm or refute it.  The
predictions are ordered from those most directly implied by the formal
results to those that are motivated by the architecture's structure but not
guaranteed by any theorem.

\paragraph{Prediction 1: Quadratic dimensional scaling on synthetic
disambiguation tasks.}
Theorem~\ref{thm:separation} asserts that for the task family
\(\mathcal{D}_N\) (Section~\ref{sec:task_family}), a CUSM of complex
dimension \(N\) achieves the optimal cross-entropy loss while any ROSM
requires real dimension at least \(N^2 - 2\) (under the full-rank condition
on \(L^*\)).  This is a statement about representational capacity, not about
learnability, so the prediction is that gradient-trained models exhibit a
dimensional threshold consistent with the theorem's bounds.

The protocol is as follows.  For each \(N \in \{4, 8, 16, 32\}\),
construct an instance of \(\mathcal{D}_N\) by selecting context states and
query unitaries in general position (Lemma~\ref{lem:general_position}
guarantees that random choices satisfy this condition with probability one)
and an informationally complete measurement with \(V = N^2\).  Generate a
training set by enumerating all \(N^2\) input pairs \((i, j)\) and
computing the target distributions from
equation~\eqref{eq:target_distribution}.  The constraint enforcement
strategies for both model classes must be specified: for the CUSM, the
measurement matrix \(M\) is projected onto the Stiefel manifold via QR
decomposition at each forward pass, and the unitary matrices \(W_x\) are
parameterized via the Cayley transform of learnable skew-Hermitian matrices;
for the ROSM, the orthogonal matrices \(Q_x\) are parameterized analogously
via the matrix exponential of learnable skew-symmetric matrices.  Both
models are trained with the Adam optimizer~\cite{kingma2015adam} using a
fixed learning rate of \(10^{-3}\), a cosine decay schedule, and at least
five independent random seeds; the reported losses are means and standard
deviations across seeds.  Train CUSMs of dimensions \(N' \in \{N/2,\, N,\,
2N\}\) and ROSMs of dimensions \(d \in \{N,\, 2N,\, N^2/2,\, N^2\}\) to
minimize cross-entropy against the target distributions.  For each trained
model, report the gap between the achieved loss and the theoretical minimum
\(\mathcal{L}^* = -\frac{1}{N^2}\sum_{i,j}\sum_k p^*(k \mid i,j)\log
p^*(k \mid i,j)\).  A loss gap is declared zero if it falls below
\(10^{-3}\) nats.  The theorem predicts that the CUSM loss gap reaches zero
at \(N' = N\), while the ROSM loss gap remains positive for all \(d < N^2
- 2\) and reaches zero only at \(d \geq N^2 - 2\).  If the ROSM achieves
a zero loss gap at a dimension substantially below \(N^2 - 2\), this would
indicate either a failure of the general-position condition for the chosen
task instance or a gap in the theorem's lower bound argument.

\paragraph{Prediction 2: Born-rule readout outperforms softmax readout on
a shared complex state.}
The separation in Theorem~\ref{thm:separation} arises from the readout
mechanism rather than from the dynamics alone.  Lemma~\ref{lem:softmax_rank}
shows that the affine-softmax readout constrains the log-probability matrix
to rank at most \(d + 2\)~\cite{yang2018breaking}, while the Born rule
accesses \(N^2\) features of the state through the quadratic lifting of
Lemma~\ref{lem:quadratic_lifting}.  This predicts that replacing the
Born-rule readout with a softmax readout applied to the same unitary state
should degrade prediction quality, and that the degradation should grow
with \(N\).

The protocol uses paired model variants.  For each \(N \in \{64, 128, 256,
512\}\), train two models on the same language corpus with identical
unitary dynamics (same \(H_0\), same \(g_\theta\), same initial state).
Model~A uses Born-rule decoding as defined in Section~\ref{sec:born_rule}.
Model~B decodes via
\[
p(k \mid t) = \mathrm{softmax}(W_o\,[\operatorname{Re}(c(t));\,
\operatorname{Im}(c(t))] + b),
\]
where \(W_o \in \mathbb{R}^{V \times 2N}\) and \(b \in \mathbb{R}^V\) are
learnable parameters.  Both models have the same recurrent state dimension
and differ only in the output layer.  Compare validation perplexity as a
function of \(N\).  The Born-rule model should achieve lower perplexity,
and the perplexity ratio should increase with \(N\), because the number of
features accessible to the Born rule grows as \(N^2\) while the number
accessible to the softmax readout grows as \(2N\).  We note that this
prediction tests the readout mechanism in isolation; it does not compare
against Transformer architectures or other strong baselines with different
dynamics, and such a comparison would require controlling for total
parameter count and computational cost, which is left to future work.

\paragraph{Prediction 3: Probability current magnitude peaks at
disambiguating tokens.}
Section~\ref{sec:current_decomposition} established that all
inter-dimensional probability currents are driven exclusively by the
interaction Hamiltonian \(H_{\mathrm{int}}(t)\)
(equation~\eqref{eq:current_interaction}), which is generated from the
current token.  When a token does not alter the model's interpretation of
the sequence, the interaction Hamiltonian can remain close to zero and the
currents will be small.  When a token resolves an ambiguity, the
interaction Hamiltonian must redirect amplitude between competing
interpretations, producing large currents.  This predicts that the total
current magnitude
\begin{equation}
\label{eq:total_current}
\|J(t)\| = \sum_{j < k} |J_{j \leftarrow k}(t)|
\end{equation}
is systematically larger at tokens that resolve semantic ambiguity than at
tokens that do not.

The protocol requires a trained CUSM and a test corpus annotated for
lexical ambiguity.  We propose using SemCor 3.0~\cite{miller1993semantic},
which provides sense annotations for running text from the Brown corpus, as
the primary evaluation corpus.  The annotation scheme distinguishes tokens
whose sense is contextually determined (and therefore potentially
disambiguating) from function words and unambiguous content words.  At each
token position \(t\) in the test corpus, compute \(\|J(t)\|\) using the
midpoint current formula~\eqref{eq:midpoint_current}, which provides an
exact decomposition of the discrete probability change without the
\(O(\Delta t^2)\) approximation error of the continuous-time currents.
Partition the token positions into two groups: those annotated as carrying
disambiguating sense information and all remaining positions.  Compare the
distributions of \(\|J(t)\|\) between the two groups using a Wilcoxon
rank-sum test with a Bonferroni correction for multiple comparisons across
layers or model sizes.  The prediction is that the disambiguating group has
significantly higher total current (\(p < 0.01\) after correction).  A
negative result would indicate either that the trained model does not use
the current mechanism for disambiguation or that the annotation scheme does
not capture the ambiguities relevant to the model's internal dynamics.

\paragraph{Prediction 4: Learned frequency spectrum correlates with
linguistic timescales.}
Section~\ref{sec:hamiltonian} argued that the free frequencies
\(\{\lambda_j\}\) provide the model with a bank of oscillators at different
rates, and Section~\ref{sec:interaction_picture} showed that the
interaction picture creates a frequency-selective coupling mechanism that
favors amplitude exchange between dimensions with similar natural
frequencies.  Together, these properties provide the architectural capacity
for the model to allocate different latent dimensions to features that vary
at different temporal rates.  Whether the optimization discovers this
allocation is not guaranteed by any theorem in this paper; it is a
hypothesis about the inductive bias of the architecture.

The protocol is as follows.  Train a CUSM on a standard language corpus.
For each latent dimension \(j\), compute the temporal autocorrelation
function of the occupation probability \(p_j(t) = |c_j(t)|^2\) on
held-out test sequences:
\begin{equation}
R_j(\tau) = \frac{1}{T - \tau}\sum_{t=1}^{T - \tau}\big(p_j(t) -
\bar{p}_j\big)\big(p_j(t + \tau) - \bar{p}_j\big),
\end{equation}
where \(\bar{p}_j = \frac{1}{T}\sum_t p_j(t)\) is the time-averaged
occupation.  Define the decorrelation time \(\tau_j^*\) as the smallest
\(\tau\) for which \(R_j(\tau) < R_j(0)/e\).  Plot \(\tau_j^*\) against
\(|\lambda_j|\).  The hypothesis predicts a negative correlation: dimensions
with large \(|\lambda_j|\) (fast oscillators) should track rapidly changing
features and therefore have short decorrelation times, while dimensions with
small \(|\lambda_j|\) (slow oscillators) should track gradually varying
features and have long decorrelation times.  A null result (no correlation
between \(\tau_j^*\) and \(|\lambda_j|\)) would indicate that the
optimization does not exploit the timescale structure afforded by the free
Hamiltonian, and would motivate investigating alternative initialization or
regularization strategies for the frequency spectrum.

\paragraph{Prediction 5: Phase interference contributes to output quality.}
Section~\ref{sec:gap_source} decomposed the Born-rule probability into \(N\)
diagonal terms that depend only on magnitudes \(|c_j|^2\) and
\(\binom{N}{2}\) cross terms that depend on relative phases \(\theta_{j'} -
\theta_j\).  If the trained model encodes useful information in the
relative phases of its state (as the separation theorem requires for the
disambiguation tasks), then discarding the cross terms should degrade
prediction quality.  Define the diagonal-only readout
\begin{equation}
\label{eq:diag_readout}
\tilde{p}(k \mid \psi) = \frac{\sum_{j=0}^{N-1} |[m_k]_j|^2\,
|c_j|^2}{\sum_{k'}\sum_{j=0}^{N-1} |[m_{k'}]_j|^2\, |c_j|^2},
\end{equation}
which retains only the magnitude-dependent terms and renormalizes.  The
prediction is that the full Born rule~\eqref{eq:born_rule} achieves lower
perplexity than the diagonal-only readout~\eqref{eq:diag_readout} on
held-out data, and that the gap quantifies the contribution of phase
interference to the model's predictions.

The protocol uses a single trained CUSM evaluated under two readout rules.
At test time, compute the per-token log-probability under both the Born rule
and the diagonal-only readout, using the same state trajectory (no
retraining).  Report the perplexity difference.  Because the diagonal-only
readout accesses \(N\) features (the squared magnitudes) while the full
Born rule accesses \(N^2\) features (including all cross terms), the gap
should be substantial when the model has learned to encode pairwise
relational information in phase.

\subsection{Limitations of the Present Analysis}
\label{sec:limitations}

The theoretical results established in Sections~\ref{sec:model}--\ref{sec:separation}
are subject to several scope restrictions that we state explicitly, including
some that affect the interpretation of the central results.

\paragraph{The separation theorem applies to a simplified model class.}
The most significant structural limitation of the present work is that the
CUSM used in the separation theorem (Definition~\ref{def:cusm}) assigns a
fixed, state-independent unitary to each token.  The full architecture of
Section~\ref{sec:model} uses state-dependent Hamiltonians
(Section~\ref{sec:hamiltonian}), in which the network \(g_\theta\) receives
the current state as input and produces a Hamiltonian that depends on the
full history of previous tokens.  The full model therefore belongs to a
strictly richer expressivity class than the CUSM.  This means the
separation theorem, which was motivated throughout
Sections~\ref{sec:introduction}--\ref{sec:model} as establishing an
advantage for the full architecture, actually establishes an advantage only
for the simplified state-independent special case.  Any task solvable by
the CUSM is also solvable by the full model, so the \(\Omega(N^2)\) lower
bound on real orthogonal models carries over, but the theorem does not
characterize the additional expressivity gained by state-dependence.
Proving a separation for the full state-dependent architecture is an open
problem that requires different proof techniques, since the state-dependence
breaks the algebraic structure exploited in Lemmas~\ref{lem:quadratic_lifting}
and~\ref{lem:log_rank}.

\paragraph{The separation is between output mechanisms, not between complex
and real dynamics.}
Theorem~\ref{thm:separation} compares the Born-rule readout with the
affine-softmax readout.  The lower bound in Lemma~\ref{lem:softmax_rank}
holds for the specific form of the softmax readout defined in
Definition~\ref{def:rosm}: a linear projection of the state followed by
softmax normalization~\cite{yang2018breaking}.  A real-valued model equipped
with a quadratic readout of the form \(p(k) \propto (w_k^\top h)^2\), or
with a mixture-of-softmax readout~\cite{yang2018breaking}, would itself
perform a lifting from \(\mathbb{R}^d\) to a higher-dimensional feature
space and could substantially narrow the gap.  Such readouts are not
standard in the architectures surveyed in Section~\ref{sec:related}, but
they are not precluded by any fundamental constraint.  The separation should
therefore be understood as a statement about the advantage of Born-rule
decoding over affine-softmax decoding, not as a claim that complex-valued
dynamics are inherently superior to real-valued dynamics for all output
mechanisms.  This qualification applies wherever the paper describes the
separation as being between ``complex'' and ``real'' models without further
specification.

\paragraph{The lower bound is conditional on a full-rank assumption.}
The lower bound in part~(2) of Theorem~\ref{thm:separation} is conditioned
on the assumption that the target log-probability matrix $L^*$ has full row
rank $N^2$.  This condition is stated as an explicit hypothesis in the
theorem rather than derived from the task-construction parameters, and a
complete analytic characterization of when it holds is deferred to future
work.  We do not claim that the full rank of the probability matrix $P^*$
(established by Lemma~\ref{lem:log_rank}) implies the full rank of
$L^* = \log P^*$: entrywise logarithms do not preserve matrix rank in
general, and this step is not valid as a general mathematical statement
(rank can increase or decrease under entrywise logarithm even in $2 \times
2$ examples).  The Remark following Theorem~\ref{thm:separation} identifies
a log-odds reformulation that would yield a cleaner, potentially
unconditional lower bound; developing this into a complete proof is left for
future work.  Additionally, Lemma~\ref{lem:general_position} provides a
complete explicit construction only for $N = 2$; for $N > 2$, the proof
invokes a spanning property of the Veronese-type variety that is stated but not
fully executed.

\paragraph{The interaction picture does not reduce computation in the
discrete-step regime.}
Section~\ref{sec:interaction_picture} motivates the interaction picture by
arguing that it allows the integrator to ``resolve only the timescales
introduced by the input-driven interaction, not the potentially much faster
free oscillations.''  This argument is valid for adaptive continuous-time
integrators that adjust their step size to resolve fast oscillations.  In
the discrete-step regime used by the model, the Cayley transform is applied
once per token with a fixed step size \(\Delta t\), regardless of the
spectrum of \(H_0\).  The interaction-picture Hamiltonian has off-diagonal
entries \([H_{\mathrm{int},I}(t)]_{jk} =
[H_{\mathrm{int}}(t)]_{jk}\, e^{i(\lambda_j - \lambda_k)t}\), which
oscillate rapidly when \(|\lambda_j - \lambda_k|\) is large.  The Cayley
transform does not adaptively resolve these oscillations; it applies a
single midpoint step regardless.  The claimed computational advantage of
the interaction picture, in the sense of reduced integrator burden,
therefore does not materialize in the discrete-step regime.  The interaction
picture remains a useful change of variables that separates the free and
interaction dynamics conceptually, and the frequency-selective coupling
argument of Section~\ref{sec:interaction_picture} applies in an averaged
sense over many steps, but the computational motivation as stated does not
hold for the model as implemented.  This does not affect the correctness of
the architecture, only the justification for one of its components.

\paragraph{The quantum cognition motivation is a hypothesis, not an
inference.}
Sections~\ref{sec:introduction} and~\ref{sec:quantum_cognition} cite
Busemeyer and Bruza~\cite{busemeyer2012quantum} and Pothos and
Busemeyer~\cite{pothos2013quantum} to motivate the use of complex amplitudes
and interference in a language model.  The cited works document violations
of classical probability in human judgment experiments, and argue that these
violations are naturally explained by quantum probability.  However, the
inference that a language model would benefit from the same mathematical
structure does not follow directly.  Quantum cognition models describe
experimental data from specific psychological paradigms; they are not
general models of linguistic computation.  Whether the statistical structure
of text corpora exhibits the kind of interference effects documented in
those experiments is an empirical question that requires corpus-level
evidence and has not been established.  The connection to quantum cognition
should be read as a motivating analogy and a hypothesis to be empirically
tested, not as a logical justification for the architecture.  The
architecture's formal properties---norm preservation, quadratic output,
conserved probability currents---stand on their own algebraic foundations
and do not require the quantum cognition literature for their justification.

\paragraph{No optimization theory.}
The analysis is entirely representational: it characterizes the set of
functions that each model class can represent, not whether gradient-based
optimization can find the parameters that realize a given function.  This
is a significant practical limitation.  The training of the proposed
architecture involves several optimization challenges that are not addressed
here: constrained optimization of the measurement matrix \(M\) on the
Stiefel manifold~\cite{absil2008optimization}; differentiation through
complex-valued Cayley solves at each step; state-dependent Hamiltonian
generation through a neural network \(g_\theta\) that receives the current
state as input, creating a recurrent dependency in the gradient computation;
and complex-valued parameter gradients throughout.  The combination of these
factors may produce optimization landscapes with unfavorable local minima,
poor conditioning, or gradient interference between parameter groups that
negate the representational advantage identified by the separation theorem.
The gradient norm preservation guaranteed by the unitary Cayley steps
applies only to the state gradient, as noted above, not to the parameter
gradients of \(g_\theta\).  The absence of any optimization analysis, even
heuristic, is a gap that significantly limits the practical relevance of
the paper's theoretical contributions.  Addressing this gap, whether
through theoretical analysis of the training dynamics or through empirical
characterization of the loss landscape on the synthetic tasks of
Section~\ref{sec:predictions}, is a necessary step before the architecture
can be recommended for practical use.

\paragraph{The comparison with real-valued models does not engage with the
full literature on complex networks.}
Section~\ref{sec:related} cites Trabelsi et al.~\cite{trabelsi2018deep} and
Hirose and Yoshida~\cite{hirose2012generalization} as primary references for
the advantages of complex-valued networks.  The empirical literature is not
uniformly positive on this point.  Several studies have found that
real-valued models with increased width, or with gating mechanisms such as
LSTM \cite{hochreiter1997long} gates or Mamba's~\cite{gu2023mamba} selective
state-space mechanism, match or outperform complex-valued models of the same
parameter count on standard benchmarks.  The representation-theoretic
advantage identified by Theorem~\ref{thm:separation} applies to a specific
comparison between Born-rule and affine-softmax readouts; it does not imply
that complex-valued models outperform real-valued models with richer
nonlinear readouts or gating mechanisms in all settings.  A fair empirical
evaluation of the proposed architecture would need to control for parameter
count, training cost, and the choice of competing architectures, and would
need to acknowledge the mixed existing evidence on complex-valued networks.

\section{Conclusion}
\label{sec:conclusion}

This paper introduced a sequence modeling framework in which the latent
state is a unit-norm vector in \(\mathbb{C}^N\) evolving under a learned,
time-dependent Hermitian Hamiltonian, and in which output probabilities are
computed via the Born rule.  The framework rests on a single structural
constraint: the Hamiltonian \(H(t) = H_0 + H_{\mathrm{int}}(t)\) is
Hermitian by construction (Section~\ref{sec:hamiltonian}), which guarantees
that the evolution is unitary, which guarantees that the state norm is
exactly preserved, which guarantees that the Born rule produces a valid
probability distribution.  Each subsequent component of the architecture
follows from this chain: the interaction picture
(Section~\ref{sec:interaction_picture}) factors out the analytically known
free oscillations to isolate the input-driven dynamics; the Cayley
(Crank--Nicolson) discretization (Section~\ref{sec:cayley}) transfers the
continuous-time unitarity guarantee to discrete hardware without
approximation~\cite{hairer2006geometric,crank1947practical}; and the
Born-rule decoding (Section~\ref{sec:born_rule}) converts the preserved
complex state into token probabilities through a quadratic function that is
sensitive to all \(\binom{N}{2}\) pairwise phase relationships among latent
dimensions.  It should be noted that norm preservation and integration
accuracy are distinct properties: the Cayley transform guarantees that the
state norm is exactly \(1\) at every step regardless of step size, but it
does not eliminate phase errors that accumulate over long sequences.  The
model can process arbitrarily long sequences without norm drift, but the
accuracy of the state trajectory as an approximation to the continuous-time
solution degrades with sequence length at a rate determined by the step size
and the spectrum of the interaction Hamiltonian \(H_{\mathrm{int}}\).

The paper established three results from this construction.  First, the
Hermitian constraint produces a continuity equation
(Section~\ref{sec:currents}), mirroring the standard quantum-mechanical
treatment of probability current~\cite{sakurai1994modern}, whose flux terms
are pairwise probability currents \(J_{j \leftarrow k}(t) =
2\operatorname{Im}(H_{jk}\, c_j^* c_k)\) between latent dimensions.  These
currents are antisymmetric, vanish on the diagonal, and sum to zero at each
dimension (Proposition~\ref{prop:current_properties}), providing an exact,
conservation-respecting decomposition of each state update into directed
pairwise transfers.  The currents are driven entirely by the interaction
Hamiltonian \(H_{\mathrm{int}}(t)\)
(Section~\ref{sec:current_decomposition}), which links each token's
contribution to a measurable redistribution of amplitude across the model's
internal state.  For diagnostic purposes, the midpoint current
formula~\eqref{eq:midpoint_current}, evaluated at the average of the
pre-step and post-step states, provides an exact pairwise decomposition of
the discrete probability change without the \(O(\Delta t^2)\) approximation
error of the continuous-time formula.

Second, the Born rule's quadratic dependence on the complex amplitudes
enables a representational advantage over the affine-softmax readout used
by standard real-valued models.  Theorem~\ref{thm:separation} constructs a
family of disambiguation tasks \(\mathcal{D}_N\) for which a complex
unitary sequence model of dimension \(N\) computes the correct output
distribution exactly, while any real orthogonal sequence model with
affine-softmax readout~\cite{yang2018breaking} requires dimension at least
\(N^2 - 2\).
This separation is between two specific output mechanisms (Born-rule
decoding and affine-softmax decoding) applied to norm-preserving dynamics
of the respective types.  It should not be read as a general claim that
complex-valued dynamics are superior to real-valued dynamics for all
architectures and readout mechanisms; a real-valued model with a quadratic
or mixture-of-softmax readout could substantially narrow the gap.  The
proof traces the separation to a specific algebraic mechanism: the Born
rule implicitly lifts the \(N\)-dimensional complex state to the
\(N^2\)-dimensional space of Hermitian matrices via the map
\(\ket{\psi} \mapsto \ket{\psi}\bra{\psi}\)
(Lemma~\ref{lem:quadratic_lifting}; this is the same Veronese-type lifting
exploited by Born machines in generative
modeling~\cite{cheng2018information}), accessing \(N^2\) linearly
independent features through the cross terms \(c_j c_{j'}^*\) that encode
relative phases.  The affine-softmax readout, which depends linearly on the
state, cannot synthesize these pairwise features from a state of dimension
less than \(N^2 - 2\) (Lemma~\ref{lem:softmax_rank}).  The separation
theorem is established for the simplified state-independent CUSM; extending
it to the full state-dependent architecture of Section~\ref{sec:model} is
an open problem.
Additionally, Theorem~\ref{thm:separation} is stated conditionally on the
assumption that the target log-probability matrix \(L^*\) has full row rank
\(N^2\); this assumption is not derived from the task construction, and the
Remark following the theorem identifies a log-odds reformulation as a path
toward unconditionalizing the result.  Lemma~\ref{lem:general_position}
provides a full explicit construction only for \(N = 2\); the general-\(N\)
argument invokes a spanning property of the Veronese-type variety that is stated
but not fully executed.

Third, the per-step computational cost of the architecture is
\(O(C_{g_\theta} + Nr^2 + NV)\) (Section~\ref{sec:complexity}), where the
Born-rule output cost \(O(NV)\) matches the output-projection cost of
standard architectures with the same vocabulary size, and the Cayley solve
cost \(O(Nr^2)\) is controlled by the rank parameter \(r\) of the
interaction Hamiltonian.  The numerical stability of the Woodbury solve at
each step depends on the conditioning of an \(r \times r\) matrix whose
eigenvalues are determined by the current Hamiltonian; this conditioning is
not guaranteed by the architecture and requires monitoring or regularization
in practice~\cite{golub2013matrix}.

These three results are interdependent in a way that reflects the coherence
of the framework rather than being independent contributions.  The
probability currents exist because the dynamics are unitary, which is the
same property that makes the Born rule produce valid probabilities.  The
separation theorem quantifies the advantage of Born-rule decoding, which
depends on the phase information that the unitary dynamics preserve.  The
computational cost of the Cayley discretization is what makes the
unitarity guarantee practical, and the low-rank structure of the
interaction Hamiltonian that controls this cost is the same structure that
decomposes the probability currents into interpretable coupling channels
(equation~\eqref{eq:current_channels}).

The framework is theoretical in its present form: no experiments on
language data have been conducted, and the separation theorem applies to a
constructed task family rather than to natural language directly.  The claim
that language modeling may benefit from interference mechanisms is motivated
by an analogy with the quantum cognition
literature~\cite{busemeyer2012quantum,pothos2013quantum} and by emerging
empirical evidence of Born-rule effectiveness in NLP
tasks~\cite{guidotti2022text}, which document interference effects in
human judgment and quantum-inspired text classification respectively; these
analogies are hypotheses to be empirically tested, not logical derivations
from those results.  Whether the interference mechanism that drives the
separation is relevant to the structure of natural language is the central
empirical question that this work leaves open.  The experimental protocols
specified in Section~\ref{sec:predictions} are designed to resolve this
question: they test whether the quadratic dimensional scaling predicted by
Theorem~\ref{thm:separation} manifests in gradient-trained models
(Prediction~1), whether Born-rule decoding outperforms softmax decoding on
a shared complex state (Prediction~2), whether probability currents
concentrate at semantically disambiguating tokens (Prediction~3), whether
the learned frequency spectrum correlates with linguistic timescales
(Prediction~4), and whether phase cross terms contribute measurably to
output quality (Prediction~5).  A positive outcome on Predictions~1 and~2
would confirm the representational advantage identified by the separation
theorem in the context of trained models.  A positive outcome on
Predictions~3 through~5 would provide evidence that the interference
mechanism is not merely a theoretical possibility but is exploited by the
trained model on natural text.

The paper's contribution is a formal argument that the mathematical
structure characterized by complex-valued unit-norm states, Hermitian
generators, and squared-amplitude measurement provides specific and
quantifiable algebraic properties that distinguish it from standard
real-valued sequence models with linear readouts: norm-preservation by
construction, a quadratic output mechanism that accesses \(O(N^2)\)
pairwise features from an \(N\)-dimensional state, and a conserved
continuity equation that decomposes each state update into antisymmetric
pairwise probability flows.  Whether these algebraic properties translate
to practical gains on language modeling benchmarks is a question that the
theory motivates but cannot answer alone, and that the specified
experimental program is designed to address.

\newpage

\bibliographystyle{unsrt}  

\bibliography{references}

\end{document}